\title{Controlling Counterfactual Harm in\\ Decision Support Systems Based on Prediction Sets}
\author{Eleni Straitouri}
\author{Suhas Thejaswi}
\author{Manuel Gomez Rodriguez}
\affil{Max Planck Institute for Software Systems \\ \{estraitouri, thejaswi, manuel\}@mpi-sws.org}
\date{}
\begin{document}

\maketitle

\begin{abstract}
  %
%
Decision support systems based on prediction sets help humans solve multiclass classification tasks 
by narrowing down the set of potential label values to a subset of them, namely a prediction set, and 
asking them to always predict label values from the prediction sets. 
%
%
While this type of systems have been proven to be~effec\-tive at im\-pro\-ving the average accuracy of the predictions made by humans, by restricting human agency, they may cause harm---a human who has succeeded 
at predicting the ground-truth label of an instance on their own may have failed had they used these systems.
%
%
In this paper, our goal is to control how frequently a decision support system based on prediction sets 
may cause harm, by design.
%
%
To this end, we start by characterizing the above notion of harm using the theoretical framework of structural causal models.
%
%
Then, we show that, under a natural, albeit unverifiable, mo\-no\-to\-ni\-ci\-ty~assump\-tion, we can estimate how frequently a system may cause harm using only predictions made by humans on their own. 
Further, we also show that, under a weaker monotonicity assumption, which can be verified experimentally, we can bound how frequently a system may cause harm again using only predictions made by humans on their own.
%
%
Building upon these assumptions, we introduce a computational framework to design decision support systems based on prediction sets that are guaranteed to cause harm less frequently than a user-specified value 
using conformal risk control. 
%
%
We validate our framework using real human predictions from two different human subject studies and show that, in decision support systems based on prediction sets, there is a trade-off between accuracy and counterfactual harm. 
\end{abstract}

\vspace{-2mm}
\section{Introduction}\label{sec:intro}
\vspace{-2mm}
%
%
The principle of ``first, do no harm'' holds profound significance in a variety of professions across multiple high-stakes domains. 
For example, in the field of medicine, doctors swear an oath to prioritize their patient's well-being or, 
in the legal justice system, preserving the innocence of individuals is paramount.
As a result, in all of these domains, rules and guidelines have been established to prevent decision makers---doctors or judges---from making decisions that harm individuals---patients or suspects. 
%
%
In recent years, it has been increasingly argued that a similar principle should apply to decision
support systems using machine learning algorithms in high-stakes domains~\cite{richens2022counterfactual,beckers2022causal,li2023trustworthy,beckers2023quantifying}.\footnote{The \href{https://www.europarl.europa.eu/doceo/document/TA-9-2024-0138-FNL-COR01_EN.pdf}{European Unions'{} AI} act mentions the term ``harm'' more than $35$ times and points out that, its crucial role in the design of algorithmic systems must be defined carefully.} 

%
%
The definition of harm is not unequivocally agreed upon, however, the most widely accepted definition is the counterfactual comparative account of harm (in short, counterfactual harm)~\cite{feinberg1986wrongful,hanser2008metaphysics,klocksiem2012defense}, which we adopt in our work. 
Under this definition, an action causes harm to an individual if they would~have been in a worse state had the action been taken.
Building upon this definition, we say that a decision support system causes harm to an individual if a decision maker would have made a worse decision about the individual had they used the system.

%
%
In machine learning for decision support, one of the main focus has been classification tasks. 
Here, the most studied setting assumes the decision support system uses a classifier to predict the value of a (ground-truth) label
of interest and a human expert uses the predicted value to update their own prediction~\cite{bansal2019beyond, lubars2019ask, bordt2020humans, vodrahalli2022uncalibrated,corvelo2024human}.
Unfortunately, in this setting, it is yet unclear how to~gua\-ran\-tee that the (average) accuracy of the predictions made by an expert who uses the system is higher than the accuracy of the predictions made by the expert and the classifier on their own, what is often referred to as human-AI complementarity~\cite{yin2019understanding,zhang2020effect,suresh2020misplaced,lai2021towards}.
In this context, a recent line of work~\cite{straitouri2023improving,straitouri2024designing} have argued, both theoretically and empirically, that an alternative setting may enable human-AI complementarity.
In this alternative setting, the decision support system helps a human expert by providing a set of label predictions, namely a prediction set, and asking them to always predict a label value from the set.
%
The key principle is that, by restricting human agency, good performance does not depend on the expert developing a good sense of when to predict a label from the prediction set.
In this context, it is also worth noting that Google has recently developed a tool that uses patient history and skin condition images to provide decision support using prediction sets~\cite{google2020skin}, and a study by Jain et al.~\cite{jain2021development} has found that physicians and nurses using this tool improved diagnoses for $1$ in every $8$ to $10$ cases.

%
%
In this work, we argue that the same principle that enables human-AI complementarity on~de\-ci\-sion support systems based on prediction sets may also cause counterfactual harm---a human expert who has succeeded at predicting the label of an instance on their own may have failed had they used these systems.
Consequently, our goal is to design decision support systems based on prediction~sets that are guaranteed to cause, in average, less counterfactual harm than a user-specified~value.

%
%
\xhdr{Our contributions} 
%
%
We start by formally characterizing the predictions made by a decision maker using a decision support system based on prediction sets using a structural causal model (SCM) and,
%
%
based on this characterization, formalize our notion of counterfactual harm.
%
%
In general, since counterfactual harm lies within level three in the ``ladder of causation''~\cite{pearl2009causal}, it is not (partially) identifiable---it cannot be estimated (bounded) from data.
However, we show that, under a natural counterfactual monotonicity assumption on the predictions made by decision makers using decision support systems based on prediction sets, counterfactual harm is identifiable.
Further, we show that, under a weaker interventional mo\-no\-to\-ni\-ci\-ty assumption, which can be verified experimentally, the average counterfactual harm is~par\-tia\-lly identifiable.
%
%
Then, building upon these assumptions, we develop a computational framework to design decision support systems based on prediction sets that are guaranteed to cause, in average, less counterfactual harm than a user-specified value  using conformal risk control~\cite{angelopoulos2024conformal}.
%
%
Finally, we validate our framework using real human predictions from two different human subject studies and show that, in decision support systems based on prediction sets, there is 
a trade-off between accuracy and counterfactual harm. 

%
%
\xhdr{Further related work}
Our work builds upon further related work on set-valued predictors, critiques of prediction optimization, counterfactual harm and algorithmic triage.

%
%
Set-valued predictors output a set of label values, namely a prediction set, rather than single labels~\cite{chzhen2021set}.
However, set-valued predictors have not been typically designed nor evaluated by their ability to help human experts make more accurate predictions~\cite{yang2017cautious,mortier2021efficient,ma2021partial,nguyen2021multilabel,angelopoulos2021learn,bates2021distribution}.
Only very recently, an emerging line of work has shown that conformal predictors, a specific type of set-valued predictors, may help human experts make more accurate predictions~\cite{straitouri2023improving,straitouri2024designing,babbar2022on,cresswell2024conformal,zhang2024evaluating,de2024towards}.
Within this line of work, the work most closely related to ours is by Straitouri et al.~\cite{straitouri2023improving,straitouri2024designing}, which has introduced the setting, and counterfactual and interventional monotonicity properties we build upon. 
%

%
%
Prediction optimization has been recently put into question in the context of decision support~\cite{van2023accurate, wang2024against, liu2024actionability}. More specifically, it has been argued that optimizing decision support systems to improve prediction accuracy
does not always translate to better decision-making. 
Our work aligns with this critique since we argue that improving prediction accuracy may come at the cost of counterfactual harm.  

%
%
The literature on counterfactual harm in machine learning is still quite small and has focused on traditional machine learning settings in which machine learning models replace human decision makers and make automated decisions~\cite{richens2022counterfactual,beckers2022causal,li2023trustworthy,beckers2023quantifying}.
Within this literature, the work most closely related to ours is by Richens et al.~\cite{richens2022counterfactual}, which also uses
a structural causal model to define counterfactual harm. 
However, their definition of counterfactual harm differs in a subtle, but important, way from ours.
In our setting, their definition of counterfactual harm would compare the accuracy of a factual prediction made by an expert using the decision support system against the counterfactual prediction that the same expert would have made on their own.
That means, under their definition, one would need to deploy the system to estimate the harm it may cause, potentially causing harm.
In contrast, under our definition, one does not need to deploy the system to estimate (or bound) the harm it may cause, as it will become clear in Section~\ref{sec:assumptions}, and thus we argue that our definition may be more practical. 
%
%

%
Learning under algorithmic triage seeks to develop classifiers that make predictions for a given fraction of the samples and leave the remaining ones to human experts, as instructed by a triage policy~\citep{raghu2019algorithmic,mozannar2020consistent,de2021classification,okati2021differentiable,charusaie2022sample,mozannar2023should}. 
In contrast, in our work, for each sample, a classifier is used to construct a prediction set and a human expert needs to predict a label value from the set.
In this context, it is also worth noting that learning under algorithmic triage has been extended to reinforcement learning settings~\cite{straitouri2021reinforcement,balazadeh2022learning,fuchs2023optimizing,tsirtsis2024responsibility}.

\vspace{-2mm}
\section{Decision support systems based on prediction sets}
\vspace{-2mm}
\label{sec:problem}
We consider a multiclass classification task in which, for each task instance, a human expert has to predict the value of a ground-truth label $y \in \Ycal = \{1, \ldots, L\}$.
Then, our goal is to design a decision support system $\Ccal : \Xcal \rightarrow 2^{\Ycal}$ that, 
given a set of features $x \in \Xcal$, helps the expert by narrowing down the label values to a subset of them $\Ccal(x) \subseteq \Ycal$, namely a prediction set. 
Here, we focus on a setting in which the system asks the expert to \emph{always} predict a label value $\hat y$ from the prediction set $\Ccal(x)$.
Note that, by restricting the expert's agency, good performance does not depend on the human expert developing a good sense of when to predict a label from the prediction set.
Moreover, we assume that the set of features, the ground-truth label and the expert'{}s prediction are sampled from an unknown fixed distribution\footnote{We denote random variables with capital letters and realizations of random variables with lowercase letters.}, \ie, $x, y \sim P(X, Y)$ and $\hat y \sim P(\hat Y \given X, Y, \Ccal(X))$.

Further, similarly as in Straitouri et al.~\citep{straitouri2023improving,straitouri2024designing}, 
we consider that, given a set of features $x \in \Xcal$, the system constructs the prediction set $\Ccal(x)$ using the following set-valued predictor~\cite{chzhen2021set}.
First, the set-valued predictor ranks each potential label value $y \in \Ycal$ using the softmax output of a pre-trained classifier $m_{y}(x) \in [0, 1]$.
Then, given a user-specified threshold $\lambda \in [0,1]$, it uses the resulting ranking to construct the prediction set $\Ccal(x) = \Ccal_{\lambda}(x)$ as follows: 
\begin{equation}\label{eq:set-valued-predictor}
\Ccal_{\lambda}(x) = \{y_{(i)}\}_{i=1}^{k}, \,\, \text{with} \,\, k = 1 + \sum_{j=2}^{L}\mathbbm{1}\left \{ m_{y_{(j)}}(x) \geq 1 - \lambda \right\},
\end{equation}
%
where $\cdot_{(i)}$ denotes the $i$-th label value in the ranking. 
Here, note that, for $\lambda=0$, the prediction set contains just the top ranked label value, for $\lambda =1$, it contains all label values.\footnote{The assumption that $m_y(x) \in [0,1]$ and $\lambda \in [0, 1]$ is without loss of generality.}

Given the above parameterization, one may just focus on finding the optimal threshold $\lambda^{*}$ under which the human expert achieves the highest average accuracy as in Straitouri et al.~\citep{straitouri2023improving,straitouri2024designing}, \ie,
\begin{equation}\label{eq:acc-objective}
 \lambda^{*} = \argmax_{\lambda \in [0,1]} A(\lambda), \,\, \text{with} \,\, A(\lambda) = \EE_{X,Y \sim P(X, Y),\, \hat{Y} \sim P(\hat{Y}\given X, Y, \Ccal_{\lambda}(X)) }[\mathbbm{1} \{\hat{Y} = Y \}].
\end{equation}
However, this focus does not prevent the resulting system $\Ccal_{\lambda}$ from causing harm---an expert may succeed to predict the value of the ground-truth label on their own on instances in which they would have failed had they used $\Ccal_{\lambda}$.
In this work, our goal is to design a computational framework that, 
given a user-specified bound $\alpha \in [0,1]$, 
finds the set of $\lambda$ values which are all guaranteed to cause less harm, in average, than the bound $\alpha$.
\begin{figure}
    \centering
    \includegraphics[scale=0.45]{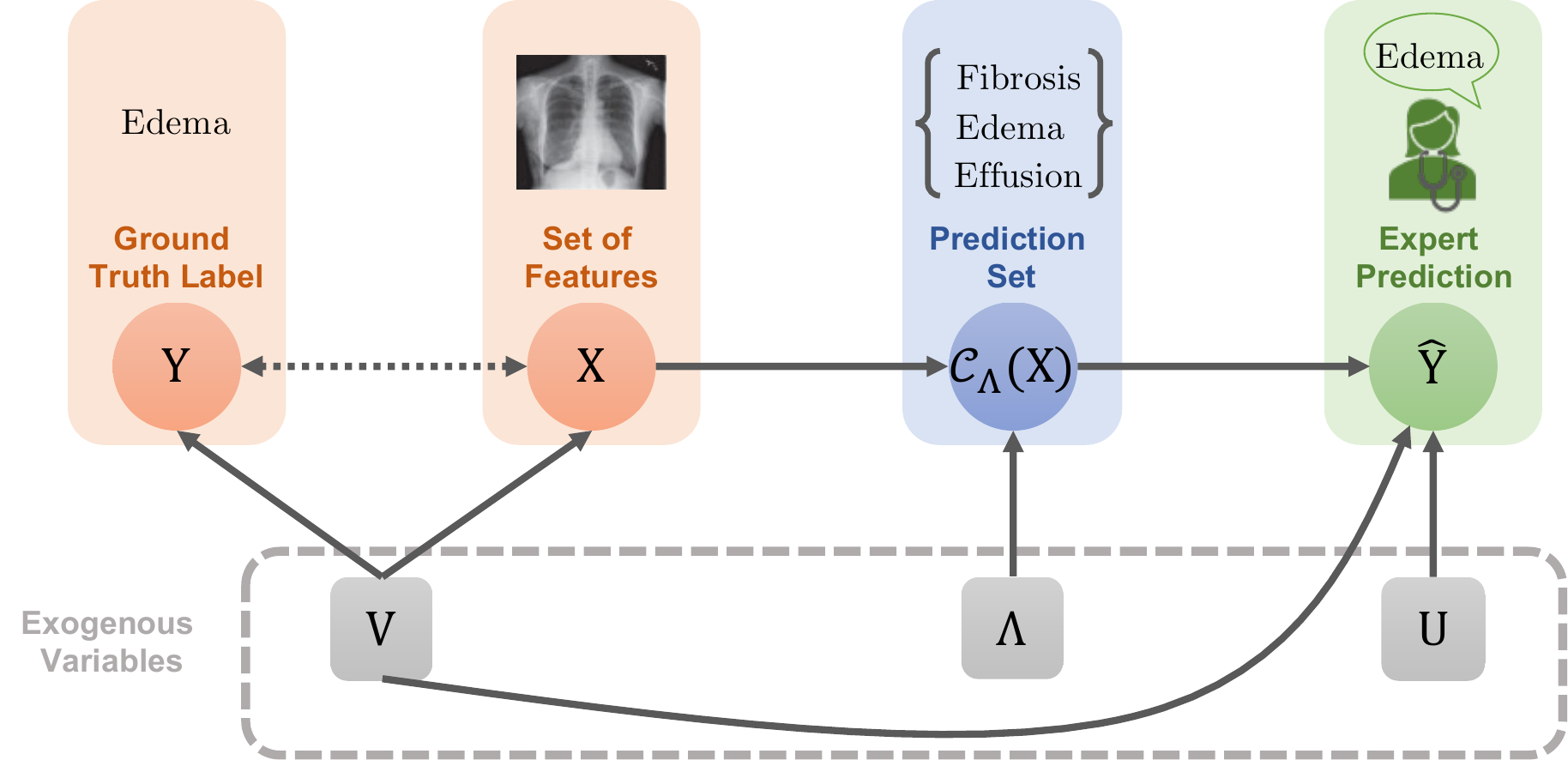}
    \caption{Our structural causal model $\Mcal$. Circles represent endogenous random variables and boxes represent exogenous random variables. The value of each endogenous variable is given by a function of the values of its ancestors, as defined by Eq.~\ref{eq:scm}. The value of each exogenous variable is sampled independently from a given distribution.}
    \label{fig:scm}
\end{figure}

\xhdr{Remark} We would like to clarify that, if one sets the value of the threshold $\lambda$ to be roughly the $1-\alpha$ quantile of the empirical distribution of the scores $1-m_{y}(x)$ in a calibration set, then, the set valued predictor defined by Eq.~\ref{eq:set-valued-predictor} is equivalent to a vanilla conformal predictor with nonconformity scores $1-m_{y}(x)$ and coverage $1-\alpha$.
Under this view, it becomes apparent that, by searching for $\lambda$ values in $[0,1]$ that are harm controlling, we are essentially searching for vanilla conformal predictors that are harm controlling.
In this context, we would like to further clarify that our framework is agnostic to the choice of nonconformity score or more generally, the set-valued predictor, used to construct the prediction sets~\cite{romano2020aps, angelopoulos2021raps, huang2023conformal}.
Motivated by this observation, in Appendix~\ref{app:saps}, we include additional experiments where we evaluate our framework using a more complex set-valued predictor~\cite{huang2023conformal}.  

\vspace{-2mm}
\section{Counterfactual harm  of decision support systems}
\vspace{-2mm}
\label{sec:cf-harm-definition}
To formalize our notion of harm, we characterize how human experts make predictions using a decision support system $\Ccal$
via a structural causal model (SCM)~\cite{pearl2009causal}, which we denote as $\Mcal$.
More specifically, similarly as in Straitouri et al.~\cite{straitouri2024designing}, we define $\Mcal$ by the following set of assignments:
\begin{align}\label{eq:scm}
     X = f_X(V), \qquad Y = f_Y(V), \qquad \Ccal_{\Lambda}(X) = f_{\Ccal}(\Lambda, X),
     \qquad \hat{Y} &= f_{\hat{Y}}(U,V, \Ccal_{\Lambda}(X)),
\end{align}
where $\Lambda$, $U$ and $V$ are exogenous random variables and $f_X$, $f_Y$, $f_{\Ccal}$
and $f_{\hat{Y}}$ are given functions.\footnote{The functions $f_X$, $f_Y$, $f_{\Ccal}$
and $f_{\hat{Y}}$ are causal mechanisms and not equations that can be manipulated~\cite{pearl2009causality}.}
The exogenous variables $\Lambda$, $U$ and $V$ characterize the user-specified threshold, the expert'{}s individual characteristics and the data generating process, respectively. 
The function $f_{\Ccal}$ is directly defined by Eq.~\ref{eq:set-valued-predictor}, \ie, $f_{\Ccal}(\Lambda, X) = \Ccal_{\Lambda}(X)$.
Further, as argued elsewhere~\citep{pearl2009causal},  
we can always find a distribution for the exogenous variables $\Lambda$, $U$ and $V$ as well as a functional form for the functions $f_{X}$, $f_{Y}$ and $f_{\hat Y}$ such that the distributions of the features, the ground-truth label and the expert'{}s prediction introduced in Section~\ref{sec:problem} are given by the observational distribution entailed by the SCM $\Mcal$.
%
%
For ease of exposition, we assume that, under no interventions, the distribution of the exogenous variable $\Lambda$ is $P(\Lambda) = \mathbbm{1}\{\Lambda = 1\}$ and thus human experts make predictions on their own.
%
Figure~\ref{fig:scm} shows a visual representation of our SCM $\Mcal$.

Building upon the above characterization, 
we are now ready to formalize the following notion of counterfactual harm, 
which essentially compares the accuracy of a factual prediction made by an 
expert on their own against the counterfactual prediction that the same expert would have made had they used a decision support system $\Ccal_{\lambda}$: 
\begin{definition}[Counterfactual Harm]
\label{def:cf-harm}
    For any $x, y, \hat y \sim P^{\Mcal}$, the counterfactual harm that a decision support system $\Ccal_{\lambda}$ would have caused, if deployed, is given by\footnote{To denote interventions in a counterfactual distribution, we follow the notation by Peters et al.~\cite{peters2017elements}. Refer to Appendix~\ref{app:notation} for a comparison to Pearl's notation~\cite{pearl2009causality}.}
    \begin{equation} \label{eq:cf-harm}
        h_{\lambda}(x, y, \hat{y}) = \EE_{\hat{Y} \sim P^{\Mcal ; \text{do}(\Lambda=\lambda) \given \hat{Y}=\hat{y}, X=x, Y=y}(\hat Y)}
        [\max\{0, \mathbbm{1}\{\hat{y} = y\} - \mathbbm{1}\{\hat{Y} = y \} \}],
    \end{equation}
    where $\text{do}(\Lambda=\lambda)$ denotes a (hard) intervention on the exogenous variable $\Lambda$.
\end{definition}
Here, note that counterfactual harm can only be nonzero if the expert has made a successful prediction on their own, \ie, $\hat y = y$. 
Otherwise, the expert'{}s prediction $\hat y$ could not have become worse had they used the decision support system $\Ccal_{\lambda}$.

Given the above definition of counterfactual harm and a user-specified bound $\alpha \in [0, 1]$, 
our goal is to find the largest harm-controlling set of values $\Lambda(\alpha) \subseteq [0, 1]$ such that, for each $\lambda \in \Lambda(\alpha)$, it holds that the counterfactual harm is, in expectation across all possible instances, smaller than $\alpha$, \ie,
\begin{equation} \label{eq:cf-harm-risk-control}
    \Lambda(\alpha) = \left\{ \lambda \in [0, 1] \, \given \, H(\lambda) = \EE_{X, Y, \hat Y \sim P^{\Mcal}(X, Y, \hat Y)}[h_{\lambda}(X, Y, \hat Y)] \leq \alpha \right\}.
\end{equation}
At this point, we cannot expect to find the set $\Lambda(\alpha)$ because counterfactual harm lies within level three in the ``ladder of causation''~\citep{pearl2009causal} and thus it is not identifiable from observational data without further assumptions. 
However, in what follows, we will show that, 
under certain assumptions, the average counterfactual harm $H(\lambda)$ is (partially) identifiable, \ie, it can be estimated (bounded) using observational data. 

\xhdr{Comparison to Richens's definition of counterfactual harm}
Richens et al.~\cite{richens2022counterfactual} define counterfactual harm as follows:
\begin{align*}
    h_{\lambda}(x, y, \hat{y}) = E_{\hat{Y} \sim P^{\mathcal{M} | \hat{Y} = \hat{y}, X = x, Y=y}(\hat{Y})}[ \max\{0, \mathbbm{1}\{\hat{Y} = y\} - \mathbbm{1}\{\hat{y} = y\}\} ],
\end{align*}
where $x, y, \hat{y} \sim P^{\mathcal{M} ; do(\Lambda=\lambda)}$ and $\lambda=1$ is considered to be the \emph{default action} in the language of Richens et al.~\cite{richens2022counterfactual}. 
This definition implicitly assumes that the system $\Ccal_{\lambda}$ is deployed and it compares the factual prediction $\hat{y}$ made by an expert using the deployed system against the counterfactual prediction $\hat{Y}$ had the expert made on their own. 
On the contrary, our definition does not assume that the system $\Ccal_{\lambda}$ is deployed and instead it compares the factual prediction $\hat{y}$ made by an expert on their own against the counterfactual prediction $\hat{Y}$ had the expert made using the system $\Ccal_{\lambda}$.

\vspace{-2mm}
\section{Counterfactual harm under counterfactual and interventional monotonicity}
\vspace{-2mm}
\label{sec:assumptions}
In this section, we analyze counterfactual harm $h_{\lambda}(x, y, \hat y)$, as defined in Eq.~\ref{eq:cf-harm}, from the perspective 
of causal identifiability under two natural monotonicity assumptions---counterfactual monotonicity and interventional monotonicity.
Both of these assumptions, which were first studied by Straitouri et al.~\cite{straitouri2024designing}, formalize the intuition that increasing the number of label values in a prediction set increases its difficulty.

Under counterfactual monotonicity, 
for any $x \in \Xcal$ and $\lambda, \lambda'  \in [0,1]$ such that $Y \in \Ccal_{\lambda}(x) \subseteq \Ccal_{\lambda'}(x)$, 
if an expert has succeeded at predicting $Y$ using $\Ccal_{\lambda'}$, they would have also succeeded had they used $\Ccal_{\lambda}$ and, conversely, if they have failed at predicting $Y$ using $\Ccal_{\lambda}$, they would have also failed had they used $\Ccal_{\lambda'}$, while holding ``everything else fixed''.
More formally, counterfactual monotonicity is defined as follows:
\begin{assumption}[Counterfactual Monotonicity]\label{asm:counterfactual-monotonicity}
Counterfactual monotonicity holds if and only if, for any $x \in \Xcal$ and any $\lambda, \lambda' \in [0,1]$ such that $Y \in \Ccal_{\lambda}(x) \subseteq \Ccal_{\lambda'}(x)$, 
we have that 
\begin{equation} \label{eq:counterfactual-monotonicity}
\mathbbm{1}\{ f_{\hat Y}(u, v, \Ccal_{\lambda}(x)) = Y \} \geq \mathbbm{1}\{ f_{\hat Y}(u, v, \Ccal_{\lambda'}(x)) = Y \}
\end{equation}
for any $u \sim P^{\Mcal}(U)$ and $v \sim P^{\Mcal}(V \given X = x)$.
\end{assumption} 
Under interventional monotonicity,
for any $x \in \Xcal$ and $\lambda, \lambda'  \in [0,1]$ such that $Y \in \Ccal_{\lambda}(x) \subseteq \Ccal_{\lambda'}(x)$,
the probability that experts succeed at predicting $Y$ using $\Ccal_{\lambda}$ is equal or greater than using $\Ccal_{\lambda'}$. 
More formally, interventional monotonicity is defined as follows\footnote{In Straitouri et al.~\cite{straitouri2024designing}, interventional monotonicity is originally defined unconditionally of the value of ground-truth label $Y$.}:
\begin{assumption}[Interventional Monotonicity]
\label{asm:cond-interventional-monotonicity}
Interventional monotonicity holds if and only if, for any $x \in \Xcal$, $y \in \Ycal$, and $\lambda, \lambda' \in [0,1]$ such that $y \in \Ccal_{\lambda}(x) \subseteq \Ccal_{\lambda'}(x)$, we have that 
\begin{equation} \label{eq:interventional-monotonicity}
    P^{\Mcal \,;\, \text{do}(\Lambda = \lambda)}(\hat{Y} = Y \given X = x, Y=y) \geq P^{\Mcal \,;\, \text{do}(\Lambda = \lambda')}(\hat{Y} = Y \given X = x ,Y=y),
\end{equation}
where the probability is over the exogenous random variables $U$ and $V$ characterizing the expert'{}s individual characteristics and the data generating process, respectively.
\end{assumption}

In what follows, we first show that, under the counterfactual monotonicity assumption, counterfactual harm is identifiable (we provide all proofs in Appendix~\ref{app:proofs}):
\begin{proposition}\label{prop:harm-counterfactual-monotonicity}
    Under the counterfactual monotonicity assumption, for any $x, y, \hat y \sim P^{\Mcal}$, the counterfactual harm that a decision support system $\Ccal_{\lambda}$ would have caused, if deployed, is given by
    \begin{equation}
    \label{eq:harm-counterfactual-mon}
    h_{\lambda}(x, y, \hat y) = \mathbbm{1}\{\hat{y} = y \wedge y \notin \Ccal_{\lambda}(x)\}.
   \end{equation}
\end{proposition}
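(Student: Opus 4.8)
The plan is to evaluate the expectation in Eq.~\ref{eq:cf-harm} directly by a case analysis on the integrand $\max\{0, \mathbbm{1}\{\hat{y} = y\} - \mathbbm{1}\{\hat{Y} = y\}\}$, and then to use counterfactual monotonicity (Assumption~\ref{asm:counterfactual-monotonicity}) to resolve the only nontrivial case. First I would observe that whenever $\hat{y} \neq y$ the integrand is identically zero, since $\mathbbm{1}\{\hat{y} = y\} = 0$ forces $\max\{0, -\mathbbm{1}\{\hat{Y} = y\}\} = 0$; this already matches the right-hand side of Eq.~\ref{eq:harm-counterfactual-mon}, which vanishes unless $\hat{y} = y$. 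It therefore suffices to treat the case $\hat{y} = y$, where the integrand simplifies to $1 - \mathbbm{1}\{\hat{Y} = y\} = \mathbbm{1}\{\hat{Y} \neq y\}$, so that $h_{\lambda}(x, y, \hat{y}) = P^{\Mcal\,;\,\text{do}(\Lambda=\lambda)\given \hat{Y}=\hat{y}, X=x, Y=y}(\hat{Y} \neq y)$.

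Next I would split this remaining case according to whether $y \in \Ccal_{\lambda}(x)$. If $y \notin \Ccal_{\lambda}(x)$, then under the intervention $\text{do}(\Lambda = \lambda)$ the expert is asked to predict a label from $\Ccal_{\lambda}(x)$, so the counterfactual prediction satisfies $\hat{Y} \in \Ccal_{\lambda}(x)$ and hence $\hat{Y} \neq y$ deterministically; the counterfactual probability equals $1$, matching $\mathbbm{1}\{y \notin \Ccal_{\lambda}(x)\}$. If instead $y \in \Ccal_{\lambda}(x)$, I would invoke counterfactual monotonicity with $\lambda' = 1$, recalling that the factual world uses the default $\Lambda = 1$ with $\Ccal_{1}(x) = \Ycal$, so that $Y = y \in \Ccal_{\lambda}(x) \subseteq \Ccal_{1}(x)$. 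For every realization $(u, v)$ of the exogenous variables consistent with the factual observation, the factual success $\hat{y} = y$ gives $\mathbbm{1}\{f_{\hat{Y}}(u, v, \Ycal) = y\} = 1$, and Eq.~\ref{eq:counterfactual-monotonicity} then forces $\mathbbm{1}\{f_{\hat{Y}}(u, v, \Ccal_{\lambda}(x)) = y\} = 1$. Thus $\hat{Y} = y$ almost surely in the counterfactual distribution, the counterfactual probability is $0$, and again the value matches $\mathbbm{1}\{y \notin \Ccal_{\lambda}(x)\} = 0$.

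The step I expect to require the most care is the passage from the factual observation to the counterfactual prediction through the abduction step implicit in the three-step counterfactual semantics. The counterfactual distribution in Eq.~\ref{eq:cf-harm} is obtained by first computing the posterior over the exogenous variables $U, V$ given the factual evidence $\hat{Y} = \hat{y}, X = x, Y = y$ under $\Lambda = 1$, then applying $\text{do}(\Lambda = \lambda)$, and finally reading off $\hat{Y} = f_{\hat{Y}}(U, V, \Ccal_{\lambda}(x))$. The key point is that counterfactual monotonicity is a pointwise statement in $(u, v)$, so it holds on the entire support of this posterior; conditioning on $\hat{y} = y$ pins the posterior exactly to those $(u, v)$ with $f_{\hat{Y}}(u, v, \Ycal) = y$, which is precisely the premise monotonicity needs to transfer the success from $\Ycal$ to $\Ccal_{\lambda}(x)$. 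Making this abduction-to-mechanism correspondence explicit---that the deterministic mechanism $f_{\hat{Y}}$ lets the pointwise inequality be integrated against the posterior without loss---is the crux that turns the definitional expectation into the clean indicator of Eq.~\ref{eq:harm-counterfactual-mon}.
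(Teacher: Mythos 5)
Your proposal is correct and follows essentially the same route as the paper's proof: the same case split on $\hat y = y$ versus $\hat y \neq y$, then on $y \in \Ccal_{\lambda}(x)$ versus $y \notin \Ccal_{\lambda}(x)$, with counterfactual monotonicity applied (via $\Ccal_{\lambda}(x) \subseteq \Ccal_{1}(x) = \Ycal$) to force $\hat Y = y$ in the included case and the set-membership constraint forcing $\hat Y \neq y$ in the excluded case. Your explicit discussion of the abduction step is a welcome clarification of what the paper leaves implicit, but it is not a different argument.
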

As an immediate consequence, we can conclude that, under counterfactual monotonicity, the average counterfactual harm $H(\lambda)$, as defined in Eq.~\ref{eq:cf-harm-risk-control}, is identifiable and can be estimated using observational data sampled from $P^{\Mcal}$.
However, since the inequality condition of the counterfactual monotonicity assumption compares counterfactual predictions and thus cannot be experimentally verified, one should be cautious about using the above proposition to estimate counterfactual harm in high-stakes applications.

Next, we show that, under the interventional monotonicity assumption, the average counterfactual harm is partially identifiable:
%
\begin{proposition}\label{prop:harm-bound-interventional-monotonicity}
Under the interventional monotonicity assumption, the average counterfactual harm $H(\lambda)$ that a decision support system $\Ccal_{\lambda}$ would have caused, if deployed, satisfies that
\begin{multline}\label{eq:harm-bound-interventional-monotonicity}
 \EE_{X,Y,\hat Y \sim P^{\Mcal}(X,Y,\hat Y)}[ \mathbbm{1}\{\hat{Y} = Y \wedge Y \notin \Ccal_{\lambda}(X)\} ] \leq H(\lambda) \\ \leq 
 \EE_{X, Y, \hat{Y} \sim P^{\Mcal}(X, Y, \hat Y)} [ \mathbbm{1}\{\hat{Y} = Y \wedge Y \notin \Ccal_{\lambda}(X)\} + \mathbbm{1}\{\hat{Y} \neq Y  \wedge Y \in \Ccal_{\lambda}(X)\} ].
\end{multline}
\end{proposition}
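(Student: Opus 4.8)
The plan is to first reduce the per-instance counterfactual harm to a single joint counterfactual probability, and then to bound that probability using interventional monotonicity together with elementary Fréchet-type inequalities. I begin by simplifying $h_\lambda$ in Eq.~\ref{eq:cf-harm}: the integrand $\max\{0, \mathbbm{1}\{\hat y = y\} - \mathbbm{1}\{\hat Y = y\}\}$ is nonzero only when the factual prediction is correct ($\hat y = y$) and the counterfactual prediction is wrong ($\hat Y \neq y$), so $h_\lambda(x, y, \hat y) = \mathbbm{1}\{\hat y = y\}\, P^{\Mcal;\,\text{do}(\Lambda = \lambda) \given \hat Y = \hat y, X = x, Y = y}(\hat Y \neq y)$. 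For fixed $x, y$, I would take the expectation over the factual prediction $\hat Y \sim P^\Mcal(\cdot \given x, y)$ (recall that, by assumption, the factual world has $\Lambda = 1$ and $\Ccal_1(x) = \Ycal$, so this is the prediction the expert makes on their own). Only the term $\hat y = y$ survives, and using the abduction--action--prediction semantics---the intervention $\text{do}(\Lambda = \lambda)$ leaves the exogenous $U, V$ untouched and merely replaces $\Ccal_1(x)$ by $\Ccal_\lambda(x)$ in the assignment for $\hat Y$---the product telescopes into $\EE_{\hat Y}[h_\lambda(x, y, \hat Y)] = P(S_1 = 1,\, S_\lambda = 0 \given X = x, Y = y)$, where $S_\mu := \mathbbm{1}\{f_{\hat Y}(U, V, \Ccal_\mu(x)) = y\}$ denotes the success indicator under $\Ccal_\mu$. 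Thus the per-instance harm is exactly the probability that the expert succeeds on their own but would fail under $\Ccal_\lambda$.

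Next I would split on whether $y \in \Ccal_\lambda(x)$. When $y \notin \Ccal_\lambda(x)$, the expert is forced to predict a label from $\Ccal_\lambda(x)$, which excludes $y$, so $S_\lambda = 0$ deterministically and the per-instance harm equals $P(S_1 = 1 \given x, y) = P^\Mcal(\hat Y = y \given x, y)$. In this case both the lower-bound integrand $\mathbbm{1}\{\hat Y = Y \wedge Y \notin \Ccal_\lambda(X)\}$ and the upper-bound integrand reduce, in expectation, to exactly $P(S_1 = 1 \given x, y)$, so both bounds hold with equality.

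When $y \in \Ccal_\lambda(x)$, I would invoke interventional monotonicity with $\lambda' = 1$: since $\Ccal_\lambda(x) \subseteq \Ccal_1(x) = \Ycal$ and $y \in \Ccal_\lambda(x)$, Eq.~\ref{eq:interventional-monotonicity} gives $P(S_\lambda = 1 \given x, y) \geq P(S_1 = 1 \given x, y)$, equivalently $P(S_\lambda = 0 \given x, y) \leq P(S_1 = 0 \given x, y)$. The per-instance harm $P(S_1 = 1, S_\lambda = 0 \given x, y)$ then obeys the Fréchet bounds $0 \leq P(S_1 = 1, S_\lambda = 0 \given x, y) \leq P(S_\lambda = 0 \given x, y) \leq P(S_1 = 0 \given x, y)$. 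Here the lower-bound integrand vanishes (because $Y \in \Ccal_\lambda(X)$) while the upper-bound integrand reduces in expectation to $P(S_1 = 0 \given x, y) = P^\Mcal(\hat Y \neq y \given x, y)$, matching the two ends of this chain. Finally, taking expectation over $X, Y \sim P^\Mcal$ and combining the two cases yields both inequalities in Eq.~\ref{eq:harm-bound-interventional-monotonicity}.

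I expect the main obstacle to be the reduction in the first paragraph: carefully justifying, via abduction, that the expectation of the counterfactual harm over the factual prediction collapses to the single joint counterfactual probability $P(S_1 = 1, S_\lambda = 0 \given x, y)$. This object is a genuine level-three quantity and is not identifiable on its own; the monotonicity-free bounds in the $y \notin \Ccal_\lambda(x)$ case come for free from the forced-prediction constraint, but the nontrivial upper bound in the $y \in \Ccal_\lambda(x)$ case rests entirely on converting interventional monotonicity into control of the marginal $P(S_\lambda = 0 \given x, y)$ and then applying the Fréchet inequality $P(A \cap B) \leq P(B)$.
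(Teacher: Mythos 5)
Your proposal is correct and follows essentially the same route as the paper's proof: the reduction of the averaged per-instance harm to a level-two quantity via abduction (your ``telescoping'' into $P(S_1=1, S_\lambda=0\mid x,y)$) is exactly the content of the paper's Lemma~\ref{lem:interventional-counterfactual} combined with Lemma~\ref{lem:harm-bound-interventional-monotonicity}, the case split on $y\in\Ccal_\lambda(x)$ matches the paper's decomposition, and the upper bound is obtained by the same two steps --- discarding the joint constraint ($P(A\cap B)\le P(B)$, which the paper phrases as $\mathbbm{1}\{\hat y=y\}\le 1$ before averaging) and then invoking interventional monotonicity with $\lambda'=1$. The only difference is cosmetic: you form the exact joint counterfactual probability first and then relax it, whereas the paper relaxes pointwise and then identifies the remaining marginal as an interventional quantity.
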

Importantly, note that, in the above proposition, the lower bound on the left hand side of Eq.~\ref{eq:harm-bound-interventional-monotonicity}~matches the ave\-rage counterfactual harm under the counterfactual monotonicity assumption and thus, holding ``everything else fixed'', the average counterfactual harm 
under interventional monotonicity is always greater or equal than the average counterfactual harm under counterfactual
monotonicity.
Moreover, further note that the ine\-qua\-lity condition of the interventional monotonicity assumption compares interventional probabilities and thus we can experimentally verify it (see Appendix~\ref{app:interventional}), lending support to using the above proposition to bound average counterfactual harm in high-stakes applications.

\vspace{-2mm}
\section{Controlling counterfactual harm using conformal risk control}
\vspace{-2mm}
\label{sec:control}
In this section, we develop a computational framework that, 
given a decision support system $\Ccal_{\lambda}$ and a user-specified bound $\alpha$,
aims to find the largest harm-controlling set $\Lambda(\alpha)$, as defined in Eq.~\ref{eq:cf-harm-risk-control}.
In the development of our framework, we will first assume that counterfactual monotonicity holds and, later on, we will relax this assumption, and assume instead that interventional monotonicity holds.

Our framework builds upon the the idea of conformal risk control, which has been introduced very recently by Angelopoulos et al.~\cite{angelopoulos2024conformal}. 
Given any monotone loss function $\ell(\Ccal_{\lambda}(X), Y)$ with respect to $\lambda$ and a calibration set $\{(X_i, Y_i)\}_{i=1}^{n}$, with $(X_i, Y_i) \sim P(X, Y)$, conformal risk control finds a value of $\lambda$ under which the expected loss of a test sample $(X_{n+1}, Y_{n+1})~\sim~P(X, Y)$ does not ex\-ceed a user-specified bound $\alpha$, \ie, $\EE[\ell(\Ccal_{\lambda}(X_{n+1}), Y_{n+1})] \leq \alpha$. 
However, in our framework, we re-define the loss $\ell$ so that it does not only depend on the prediction set and the label value but also on the expert'{}s prediction on their own. 

Under the counterfactual monotonicity assumption, we set the value of the loss $\ell$ using the expression of counterfactual harm in Eq.~\ref{eq:harm-counterfactual-mon}
and, using a similar proof technique as in Angelopoulous et al., first prove the following theorem:
\begin{theorem}
\label{thm:harm-control-counterfactual-mon}
    Let $\Dcal = \{(X_i, Y_i, \hat{Y}_{i})\}_{i=1}^{n}$ be a calibration set, with $(X_i, Y_i, \hat{Y}_{i})\sim P^{\Mcal}(X,Y,\hat Y)$, $\alpha \in [0,1]$ be a user-specified bound, and
    \begin{equation} \label{eq:hat-lambda}
    \hat{\lambda}(\alpha) = \inf \left \{ \lambda \,:\, \frac{n}{n+1} \hat{H}_{n}(\lambda) + \frac{1}{n+1} \leq \alpha \right \} \,\, \text{where} \,\, \hat{H}_n(\lambda) = \frac{\sum_{i=1}^{n} \mathbbm{1}\{\hat{Y_i} =  Y_i \wedge Y_i \notin \Ccal_{\lambda}(X_i)\} }{n}.
    \end{equation}
    If counterfactual monotonicity holds, a test sample $(X_{n+1}, Y_{n+1}, \hat Y_{n+1}) \sim P^{\Mcal}(X,Y,\hat Y)$ satisfies that
    \begin{equation*}
        \EE\left[ \mathbbm{1}\{\hat{Y}_{n+1} =  Y_{n+1} \wedge Y_{n+1} \notin \Ccal_{\hat{\lambda}(\alpha)}(X_{n+1})\} \right] \leq \alpha,
    \end{equation*}
    where the expectation is over the randomness in the calibration set used to compute the threshold $\hat \lambda(\alpha)$ and the test sample.
\end{theorem}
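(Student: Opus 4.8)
The plan is to invoke Proposition~\ref{prop:harm-counterfactual-monotonicity} to identify the per-sample loss with counterfactual harm and then replay the conformal risk control argument of Angelopoulos et al.\ for this particular loss. Concretely, write $\ell_i(\lambda) = \mathbbm{1}\{\hat{Y}_i = Y_i \wedge Y_i \notin \Ccal_{\lambda}(X_i)\}$, so that $\hat{H}_n(\lambda) = \frac{1}{n}\sum_{i=1}^{n}\ell_i(\lambda)$ and the quantity to be controlled is $\EE[\ell_{n+1}(\hat\lambda(\alpha))]$. The first step is to record three structural facts about $\ell_i$ as a function of $\lambda$: it is bounded in $[0,1]$ (being an indicator); it is non-increasing in $\lambda$; and it is right-continuous in $\lambda$. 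Monotonicity and right-continuity both follow directly from the set-valued predictor in Eq.~\ref{eq:set-valued-predictor}: as $\lambda$ grows the threshold $1-\lambda$ shrinks, so $\Ccal_{\lambda}(X_i) \subseteq \Ccal_{\lambda'}(X_i)$ whenever $\lambda \le \lambda'$, and hence the event $\{Y_i \notin \Ccal_{\lambda}(X_i)\}$ switches from true to false at the single right-continuous threshold $1 - m_{Y_i}(X_i)$ (or never, if $Y_i$ is top ranked). Note in passing that at $\lambda = 1$ every label lies in the set, so $\ell_i(1) = 0$ and the infimum in Eq.~\ref{eq:hat-lambda} is over a nonempty set whenever $\alpha \ge 1/(n+1)$.

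The second step introduces the test point into the empirical risk and compares thresholds. Define the all-samples empirical risk $\hat{H}_{n+1}(\lambda) = \frac{1}{n+1}\sum_{i=1}^{n+1}\ell_i(\lambda)$ and the associated threshold $\lambda^{\star} = \inf\{\lambda : \hat{H}_{n+1}(\lambda) \le \alpha\}$. Since $\ell_{n+1}(\lambda) \le 1$, one has $\hat{H}_{n+1}(\lambda) \le \frac{n}{n+1}\hat{H}_n(\lambda) + \frac{1}{n+1}$ for every $\lambda$, so the defining set of $\hat\lambda(\alpha)$ is contained in that of $\lambda^{\star}$; taking infima then gives $\lambda^{\star} \le \hat\lambda(\alpha)$. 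Monotonicity of $\ell_{n+1}$ in its argument now yields $\ell_{n+1}(\hat\lambda(\alpha)) \le \ell_{n+1}(\lambda^{\star})$, so it suffices to bound $\EE[\ell_{n+1}(\lambda^{\star})]$.

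The final step is the exchangeability argument, which I expect to be the crux. The threshold $\lambda^{\star}$ is a symmetric function of the $n+1$ triples $(X_i, Y_i, \hat{Y}_i)$, which are i.i.d.\ (hence exchangeable) draws from $P^{\Mcal}$. Consequently the joint law of $(\lambda^{\star}, \ell_j(\lambda^{\star}))$ is the same for every index $j$, so $\EE[\ell_{n+1}(\lambda^{\star})] = \frac{1}{n+1}\sum_{j=1}^{n+1}\EE[\ell_j(\lambda^{\star})] = \EE[\hat{H}_{n+1}(\lambda^{\star})]$. By right-continuity of $\hat{H}_{n+1}$ the infimum defining $\lambda^{\star}$ is attained and $\hat{H}_{n+1}(\lambda^{\star}) \le \alpha$ holds pointwise, whence $\EE[\hat{H}_{n+1}(\lambda^{\star})] \le \alpha$. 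Chaining the inequalities gives $\EE[\ell_{n+1}(\hat\lambda(\alpha))] \le \EE[\ell_{n+1}(\lambda^{\star})] = \EE[\hat{H}_{n+1}(\lambda^{\star})] \le \alpha$, which is the claim. The main obstacle, and the only place where the proof departs from a generic concentration bound, is justifying that the data-dependent $\lambda^{\star}$ may be treated symmetrically across all $n+1$ points; this is exactly where exchangeability, rather than mere independence, together with the right-continuity of the loss, is essential.
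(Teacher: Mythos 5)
Your proof is correct and follows essentially the same route as the paper: the paper's own proof verifies exactly the properties you establish (exchangeability of the losses $\ell_i$, their non-increase and right-continuity in $\lambda$, boundedness by $1$, and $\ell_i(1)=0$) and then invokes Theorem~1 of Angelopoulos et al.\ as a black box, whereas you additionally replay that theorem's argument in full (the comparison $\lambda^{\star}\le\hat\lambda(\alpha)$, monotonicity, and the exchangeability step $\EE[\ell_{n+1}(\lambda^{\star})]=\EE[\hat H_{n+1}(\lambda^{\star})]\le\alpha$). Your aside that the infimum is over a nonempty set only when $\alpha\ge 1/(n+1)$ is a valid observation that the paper's statement of the theorem glosses over.
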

Then, we leverage the above theorem and the fact that, under counterfactual monotonicity, the counterfactual harm is nonincreasing with respect to $\lambda$, as shown in Lemma~\ref{lem:non-increasing} in Appendix~\ref{app:lemmas}, to recover the largest harm-controlling set $\Lambda(\alpha)$: 
\begin{corollary}\label{cor:counterfactual}
Let $\alpha \in [0,1]$ be a user-specified bound. Then, under the counterfactual monotonicity assumption, it
holds that $\Lambda(\alpha) = \{ \lambda \in [0,1] \given \lambda \geq \hat{\lambda}(\alpha) \}$, 
where $\hat{\lambda}(\alpha)$ is given by Eq.~\ref{eq:hat-lambda}. 
\end{corollary}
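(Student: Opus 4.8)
The plan is to prove the set equality by a double inclusion, after first using monotonicity to reduce the problem to locating a single left endpoint. By Proposition~\ref{prop:harm-counterfactual-monotonicity}, under counterfactual monotonicity the per-instance harm equals $\mathbbm{1}\{\hat y = y \wedge y \notin \Ccal_{\lambda}(x)\}$, and by Eq.~\ref{eq:set-valued-predictor} the prediction set $\Ccal_{\lambda}(x)$ only grows as $\lambda$ increases, so the indicator $\mathbbm{1}\{y \notin \Ccal_{\lambda}(x)\}$ is nonincreasing in $\lambda$ for every fixed instance. Consequently both the population harm $H(\lambda)$---this is exactly Lemma~\ref{lem:non-increasing}---and the empirical quantity $\hat{H}_n(\lambda)$ from Eq.~\ref{eq:hat-lambda} are nonincreasing step functions of $\lambda$. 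This is the structural fact that drives the corollary: it forces $\Lambda(\alpha) = \{\lambda \given H(\lambda) \le \alpha\}$ to be an upper interval, so proving the corollary reduces to showing that its left endpoint coincides with $\hat\lambda(\alpha)$.

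For the inclusion $\{\lambda \ge \hat\lambda(\alpha)\} \subseteq \Lambda(\alpha)$, I would fix any $\lambda \ge \hat\lambda(\alpha)$ and exploit the pointwise monotonicity above: since $\Ccal_{\hat\lambda(\alpha)}(x) \subseteq \Ccal_{\lambda}(x)$, on every test realization we have $\mathbbm{1}\{\hat Y_{n+1}=Y_{n+1} \wedge Y_{n+1}\notin \Ccal_{\lambda}(X_{n+1})\} \le \mathbbm{1}\{\hat Y_{n+1}=Y_{n+1} \wedge Y_{n+1}\notin \Ccal_{\hat\lambda(\alpha)}(X_{n+1})\}$. Taking expectations and invoking Theorem~\ref{thm:harm-control-counterfactual-mon} bounds the harm at $\lambda$ by the harm at $\hat\lambda(\alpha)$, which is at most $\alpha$; hence $\lambda$ is harm controlling. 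For the reverse inclusion $\Lambda(\alpha) \subseteq \{\lambda \ge \hat\lambda(\alpha)\}$, I would use that $\hat\lambda(\alpha)$ is, by Eq.~\ref{eq:hat-lambda}, the infimum of the thresholds satisfying the conservative empirical criterion $\frac{n}{n+1}\hat{H}_n(\lambda) + \frac{1}{n+1} \le \alpha$; because $\hat{H}_n$ is nonincreasing, the set of thresholds meeting this criterion is itself the upper interval $[\hat\lambda(\alpha),1]$, so no $\lambda < \hat\lambda(\alpha)$ is certified by the procedure.

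I expect the main obstacle to be the bookkeeping that reconciles the empirical threshold with the population-level set $\Lambda(\alpha)$. The domination used in the forward step only holds once $\lambda \ge \hat\lambda(\alpha)$, and $\hat\lambda(\alpha)$ is itself a function of the calibration set, so the guarantee supplied by Theorem~\ref{thm:harm-control-counterfactual-mon} is marginal (averaged over the calibration set and the test sample) rather than a deterministic statement that $H(\lambda) \le \alpha$ at a fixed $\lambda$. The cleanest resolution I would adopt is to read ``harm controlling'' in Eq.~\ref{eq:cf-harm-risk-control} through the conformal-risk-control guarantee, so that the corollary asserts that the procedure certifies exactly the interval $[\hat\lambda(\alpha),1]$: every threshold in it inherits the marginal control of the endpoint by monotone domination, and nothing below it is certified by the infimum definition together with the monotonicity of $\hat{H}_n$. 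I would finally handle the boundary behavior of the infimum by noting that $\hat{H}_n$ is a right-continuous step function with finitely many jumps, located at the scores $1-m_{y}(X_i)$, which guarantees that $\hat\lambda(\alpha)$ is attained and that the certified set is genuinely closed on the left.
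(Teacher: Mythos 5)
Your argument is correct and follows essentially the same route the paper intends: Proposition~\ref{prop:harm-counterfactual-monotonicity} plus the pointwise monotonicity of $\mathbbm{1}\{y \notin \Ccal_{\lambda}(x)\}$ (Lemma~\ref{lem:non-increasing}) make the certified set an upper interval whose left endpoint is controlled by Theorem~\ref{thm:harm-control-counterfactual-mon}; the paper gives no separate proof of the corollary beyond exactly this one-line combination. Your observation that the guarantee is marginal over the calibration set---so the stated equality between the population set $\Lambda(\alpha)$ and the data-dependent interval $[\hat{\lambda}(\alpha),1]$ must be read through the conformal-risk-control lens rather than deterministically---is a legitimate subtlety the paper glosses over, and your resolution is the right one.
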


Under the interventional monotonicity assumption, rather than directly controlling the counterfactual harm, 
we will control the upper bound given by Eq.~\ref{eq:harm-bound-interventional-monotonicity}. Consequently, 
rather than recovering the largest harm-controlling set $\Lambda(\alpha)$, we will recover a harm-controlling set $\Lambda'(\alpha) \subseteq \Lambda(\alpha)$.
However, since the expression inside the expectation of the upper bound is nonmonotone with respect to $\lambda$, we cannot directly use it to set the value of the loss $\ell$ in conformal risk control.
That said, since the first term of the expression is nonincreasing and the second term is nondecreasing, as shown in Lemmas~\ref{lem:non-increasing} and~\ref{lem:non-decreasing} in Appendix~\ref{app:lemmas},
we can apply conformal risk control separately for each term.
For the first term, we use Theorem~1 because the term matches the counterfactual harm under the counterfactual monotonicity assumption.
For the second term, we prove the following theorem:
\begin{theorem}
    \label{thm:conformal-cf-benefit-control-cI}
    Let $\Dcal = \{(X_i, Y_i, \hat{Y}_{i})\}_{i=1}^{n}$ be a calibration set, with $(X_i, Y_i, \hat{Y}_{i})\sim P^{\Mcal}$, $\alpha \in \left [\frac{1}{n+1},1 \right ]$ be a user-specified bound, and 
    \begin{equation} \label{eq:check-lambda}
        \check{\lambda}(\alpha) = \sup \left \{ \lambda \,:\, \frac{n}{n+1} \hat{G}_{n}(\lambda) + \frac{1}{n+1} \leq \alpha \right \} \,\, \text{where} \,\, \hat{G}_{n}(\lambda) = \frac{\sum_{i=1}^{n} \mathbbm{1}\{\hat{Y}_i \neq Y_i \wedge Y_i \in \Ccal_{\lambda}(X_i)\}}{n}.
    \end{equation}
    If interventional monotonicity holds and $\check{\lambda}$ exists, a test sample $(X_{n+1}, Y_{n+1}, \hat Y_{n+1}) \sim P^{\Mcal}(X,Y,\hat Y)$ satisfies that
    \begin{align*}
        \EE \left [\mathbbm{1}\{\hat{Y}_{n+1} \neq Y_{n+1} \wedge Y_{n+1} \in \Ccal_{\check{\lambda}(\alpha)}(X_{n+1})\} \right]  \leq \alpha,
    \end{align*}
    where the expectation is over the randomness in the calibration set used to compute the threshold $\check{\lambda}(\alpha)$ and the test sample $(X_{n+1}, Y_{n+1}, \hat Y_{n+1})$.
\end{theorem}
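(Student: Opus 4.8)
The plan is to read Theorem~\ref{thm:conformal-cf-benefit-control-cI} as a conformal-risk-control statement for the \emph{observable}, $\{0,1\}$-valued loss $L_i(\lambda) = \mathbbm{1}\{\hat{Y}_i \neq Y_i \wedge Y_i \in \Ccal_{\lambda}(X_i)\}$, and to mirror the argument of Angelopoulos et al.~\cite{angelopoulos2024conformal} used for Theorem~\ref{thm:harm-control-counterfactual-mon}, with the one essential change that the monotonicity is reversed. Whereas the first-term loss is nonincreasing in $\lambda$ (hence the infimum defining $\hat\lambda$), the present loss is \emph{nondecreasing} in $\lambda$ by Lemma~\ref{lem:non-decreasing}, since the prediction sets are nested and the factual prediction $\hat Y_i$ does not depend on $\lambda$; this is precisely why $\check\lambda(\alpha)$ is defined through a supremum. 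I would stress at the outset that the conclusion involves only observable quantities, so the monotonicity that conformal risk control actually requires is the structural nestedness of Lemma~\ref{lem:non-decreasing}, \emph{not} interventional monotonicity. Interventional monotonicity enters elsewhere, through Proposition~\ref{prop:harm-bound-interventional-monotonicity}, where this controlled second term becomes part of the valid upper bound on the average counterfactual harm $H(\lambda)$.

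Next I would run the standard augmentation argument. Introduce the test point into the empirical risk, writing $\hat G_{n+1}(\lambda) = \tfrac{1}{n+1}\sum_{i=1}^{n+1} L_i(\lambda)$ and $\check\lambda^* = \sup\{\lambda : \hat G_{n+1}(\lambda)\leq\alpha\}$. Because $L_{n+1}(\lambda)\leq 1$, for every $\lambda$ we have $\hat G_{n+1}(\lambda) \leq \tfrac{n}{n+1}\hat G_n(\lambda) + \tfrac{1}{n+1}$, so the defining set of $\check\lambda(\alpha)$ is contained in $\{\lambda:\hat G_{n+1}(\lambda)\leq\alpha\}$, and taking suprema gives $\check\lambda(\alpha)\leq\check\lambda^*$. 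Since $L_{n+1}$ is nondecreasing, this ordering yields $L_{n+1}(\check\lambda(\alpha))\leq L_{n+1}(\check\lambda^*)$. Finally, $\check\lambda^*$ is a symmetric function of the exchangeable collection $\{(X_i,Y_i,\hat Y_i)\}_{i=1}^{n+1}$, so $(L_1(\check\lambda^*),\dots,L_{n+1}(\check\lambda^*))$ is exchangeable and $\EE[L_{n+1}(\check\lambda^*)] = \EE[\hat G_{n+1}(\check\lambda^*)]$, which ought to be at most $\alpha$ by the definition of $\check\lambda^*$.

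The hard part is the boundary behaviour at the supremum, which is the mirror image of the right-continuity subtlety in the original proof. Each $L_i(\cdot)$ is nondecreasing but \emph{right}-continuous, because the label $Y_i$ enters $\Ccal_\lambda(X_i)$ at the threshold $\lambda = 1-m_{Y_i}(X_i)$ and is included at that point, so $\hat G_{n+1}$ can jump strictly above $\alpha$ exactly at $\check\lambda^*$ and the step $\EE[\hat G_{n+1}(\check\lambda^*)]\leq\alpha$ need not hold verbatim. I would resolve this by passing to the left limit: for every $\lambda<\check\lambda^*$ one has $\hat G_{n+1}(\lambda)\leq\alpha$, hence the left limit at $\check\lambda^*$ is at most $\alpha$, and since $\check\lambda(\alpha)\leq\check\lambda^*$ with $L_{n+1}$ nondecreasing I would bound $L_{n+1}(\check\lambda(\alpha))$ by this left-limit risk. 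Equivalently, and more transparently, I would recast everything through the scores $s_i = 1-m_{Y_i}(X_i)$ restricted to $\{i:\hat Y_i\neq Y_i\}$, where $L_i(\lambda)=\mathbbm{1}\{\lambda\geq s_i\}$; then $\check\lambda(\alpha)$ is the $\lfloor(n+1)\alpha\rfloor$-th order statistic of the calibration scores, and the uniform-rank form of exchangeability gives $\EE[L_{n+1}(\check\lambda(\alpha))] = \Pr[s_{n+1}\leq\check\lambda(\alpha)] = \lfloor(n+1)\alpha\rfloor/(n+1)\leq\alpha$, with ties only tightening the bound. The hypotheses $\alpha\geq 1/(n+1)$ and ``$\check\lambda$ exists'' are exactly what guarantee that the defining set is nonempty and the relevant order statistic is well defined.
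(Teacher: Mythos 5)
Your proposal follows the same route as the paper's own proof: augment the empirical risk with the test point to form $\hat{G}_{n+1}$, define $\check\lambda^{*}=\sup\{\lambda:\hat{G}_{n+1}(\lambda)\leq\alpha\}$, deduce $\check\lambda(\alpha)\leq\check\lambda^{*}$ from the pointwise bound $\hat{G}_{n+1}(\lambda)\leq\frac{n}{n+1}\hat{G}_{n}(\lambda)+\frac{1}{n+1}$ (your set-inclusion argument is in fact cleaner than the paper's case split on $\check\lambda(\alpha)=0$ versus $\check\lambda(\alpha)>0$), invoke the nondecreasing monotonicity of the loss from Lemma~\ref{lem:non-decreasing} to pass from $\check\lambda(\alpha)$ to $\check\lambda^{*}$, and finish by exchangeability, which equates $\EE[L_{n+1}(\check\lambda^{*})]$ with $\EE[\hat{G}_{n+1}(\check\lambda^{*})]$. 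Your side remark that interventional monotonicity is not what drives the result is also consistent with the paper: the proof of Lemma~\ref{lem:non-decreasing} uses only the nestedness $\Ccal_{\lambda}(x)\subseteq\Ccal_{\lambda'}(x)$ and never invokes Assumption~\ref{asm:cond-interventional-monotonicity}.

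Where you go beyond the paper is the boundary analysis at the supremum, and you have correctly identified a step that the paper asserts without justification: both the claim $\frac{n}{n+1}\hat{G}_{n}(\check\lambda(\alpha))+\frac{1}{n+1}\leq\alpha$ in Eq.~\ref{eq:B-hat-bound} and the concluding claim $\hat{G}_{n+1}(\check\lambda^{*})\leq\alpha$ evaluate a nondecreasing, \emph{right}-continuous step function at the supremum of the set where it is small, and that supremum is exactly a jump point at which the function has already jumped above the threshold, so the constraint set has the form $[0,t)$ rather than $[0,t]$. However, your two proposed repairs do not fully close this gap. The left-limit argument controls $\lim_{\lambda\uparrow\check\lambda^{*}}\hat{G}_{n+1}(\lambda)$, but the quantity to be bounded is $L_{n+1}(\check\lambda(\alpha))$ with $\check\lambda(\alpha)$ possibly \emph{equal} to $\check\lambda^{*}$, and $L_{n+1}$ can itself jump at that point when $s_{n+1}=1-m_{Y_{n+1}}(X_{n+1})$ coincides with $\check\lambda^{*}$. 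More importantly, ``ties only tightening the bound'' is backwards for this direction of inequality: $\Pr[s_{n+1}\leq s_{(\lfloor(n+1)\alpha\rfloor)}]$ equals $\lfloor(n+1)\alpha\rfloor/(n+1)$ only when the relevant scores are almost surely distinct, and atoms push this probability \emph{above} $\alpha$ (in the extreme case where all scores coincide and the expert always errs, $\check\lambda(\alpha)$ lands exactly on the common jump and the test loss equals one with probability one). Ties help the usual conformal coverage \emph{lower} bound, but hurt an \emph{upper} bound on the probability that the test score falls at or below the calibrated threshold. The clean fix is to assume the scores $1-m_{Y}(X)$ on the event $\{\hat{Y}\neq Y\}$ are atomless (or to break ties randomly), under which your order-statistic computation is exact; this assumption is implicit, and unaddressed, in the paper's proof as well.
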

Finally, we leverage the above theorems and the fact that the first term inside the expectation of the upper bound of counterfactual harm in Eq.~\ref{eq:harm-bound-interventional-monotonicity} is nonincreasing and the second term is nondecreasing with respect to $\lambda$ to recover a harm-controlling set $\Lambda'(\alpha) \subseteq \Lambda(\alpha)$:
%
%
\begin{corollary}\label{cor:interventional}
Let $\alpha \in [0,1]$ be a user-specified bound. Then, under the interventional monotonicity assumption, for any choice of $\alpha' \leq \alpha$, the set 
\begin{equation*}
    \Lambda'(\alpha) = \{ \lambda \in [0,1] \given \hat{\lambda}(\alpha') \leq \lambda \leq \check{\lambda}(\alpha-\alpha') \},
\end{equation*}
where $\hat{\lambda}(\alpha')$ is given by Eq.~\ref{eq:hat-lambda} and $\check{\lambda}(\alpha-\alpha')$ is given by Eq.~\ref{eq:check-lambda} satisfies that $\Lambda'(\alpha) \subseteq \Lambda(\alpha)$.
\end{corollary}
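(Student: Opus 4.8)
The plan is to assemble this corollary from the two conformal guarantees already established in Theorems~\ref{thm:harm-control-counterfactual-mon} and~\ref{thm:conformal-cf-benefit-control-cI}, using the upper bound of Proposition~\ref{prop:harm-bound-interventional-monotonicity} to relate the average counterfactual harm $H(\lambda)$ to the two expectation terms it controls, and splitting the error budget additively as $\alpha = \alpha' + (\alpha - \alpha')$. Writing $H_1(\lambda) = \EE[\mathbbm{1}\{\hat{Y} = Y \wedge Y \notin \Ccal_{\lambda}(X)\}]$ and $H_2(\lambda) = \EE[\mathbbm{1}\{\hat{Y} \neq Y \wedge Y \in \Ccal_{\lambda}(X)\}]$ for the two terms on the right-hand side of Eq.~\ref{eq:harm-bound-interventional-monotonicity}, interventional monotonicity gives $H(\lambda) \leq H_1(\lambda) + H_2(\lambda)$, so it suffices to certify $H_1 \leq \alpha'$ and $H_2 \leq \alpha - \alpha'$ on the claimed interval.

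First I would control each term on a half-line of $\lambda$ values by combining its conformal guarantee with its monotonicity. By Lemma~\ref{lem:non-increasing} the first-term loss is nonincreasing in $\lambda$, and since it coincides exactly with the empirical quantity $\hat{H}_n$ in Eq.~\ref{eq:hat-lambda}, Theorem~\ref{thm:harm-control-counterfactual-mon} certifies, in the conformal sense already used for Corollary~\ref{cor:counterfactual}, that the right half-line $\{\lambda \geq \hat{\lambda}(\alpha')\}$ controls $H_1$ at level $\alpha'$. Symmetrically, by Lemma~\ref{lem:non-decreasing} the second-term loss is nondecreasing in $\lambda$, so its controlled region is a \emph{left} half-line; this is precisely why Theorem~\ref{thm:conformal-cf-benefit-control-cI} defines $\check{\lambda}$ through a supremum rather than an infimum, and it yields that $\{\lambda \leq \check{\lambda}(\alpha - \alpha')\}$ controls $H_2$ at level $\alpha - \alpha'$.

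I would then intersect the two controlled half-lines. Their intersection is exactly $\{\lambda : \hat{\lambda}(\alpha') \leq \lambda \leq \check{\lambda}(\alpha - \alpha')\} = \Lambda'(\alpha)$, and on it both $H_1(\lambda) \leq \alpha'$ and $H_2(\lambda) \leq \alpha - \alpha'$ hold, whence $H(\lambda) \leq H_1(\lambda) + H_2(\lambda) \leq \alpha' + (\alpha - \alpha') = \alpha$ by the upper bound of Proposition~\ref{prop:harm-bound-interventional-monotonicity}. This places every $\lambda \in \Lambda'(\alpha)$ in the harm-controlling set $\Lambda(\alpha)$ of Eq.~\ref{eq:cf-harm-risk-control}, giving the inclusion $\Lambda'(\alpha) \subseteq \Lambda(\alpha)$; the degenerate case $\hat{\lambda}(\alpha') > \check{\lambda}(\alpha - \alpha')$ is immediate, since then $\Lambda'(\alpha) = \emptyset$.

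Since the heavy lifting is done inside the two theorems, I expect the only real subtleties to be bookkeeping ones. The first is matching the two opposite monotonicity directions to the $\inf$/$\sup$ threshold definitions so that the controlled regions are a right and a left half-line whose intersection is exactly the stated interval; getting this alignment backwards would leave the set uncontrolled or empty. The second is ensuring the budget split keeps each subproblem in its valid regime, in particular choosing $\alpha'$ so that $\alpha - \alpha' \geq \frac{1}{n+1}$, which is the condition under which $\check{\lambda}(\alpha - \alpha')$ is guaranteed to exist in Theorem~\ref{thm:conformal-cf-benefit-control-cI}. Finally, as with Corollary~\ref{cor:counterfactual}, the per-term guarantees hold in expectation over the calibration set, and it is precisely the additive split $\alpha' + (\alpha - \alpha')$ that lets these two in-expectation guarantees combine into a single level-$\alpha$ guarantee for the harm.
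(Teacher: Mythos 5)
Your proposal is correct and follows essentially the same route the paper takes: decompose the upper bound of Proposition~\ref{prop:harm-bound-interventional-monotonicity} into its two terms, control the nonincreasing term via Theorem~\ref{thm:harm-control-counterfactual-mon} at level $\alpha'$ and the nondecreasing term via Theorem~\ref{thm:conformal-cf-benefit-control-cI} at level $\alpha-\alpha'$, intersect the resulting right and left half-lines using Lemmas~\ref{lem:non-increasing} and~\ref{lem:non-decreasing}, and add the budgets. Your added remarks on the existence condition $\alpha-\alpha'\geq \frac{1}{n+1}$ and on the in-expectation nature of the per-term guarantees are consistent with the paper's treatment.
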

Note that, in practice, the above corollaries can be implemented efficiently since, by definition, the functions $\hat H_{n}(\lambda)$ and $\hat{G}_{n}(\lambda)$ are piecewise constant functions of $\lambda$ with $n$ different pieces.

\vspace{-2mm}
\section{Experiments}
\vspace{-2mm}
\label{sec:experiments}
In this section, we use data from two different human subject studies to: a) evaluate the average counterfactual harm caused by decision support systems based on prediction sets; b) validate the theoretical guarantees offered by our computational framework (\ie, Corollaries~\ref{cor:counterfactual} and~\ref{cor:interventional});
c) investigate the trade-off between the average counterfactual harm caused by decision support systems based on prediction sets and the average accuracy achieved by human experts using these systems.
In what follows, we assume that counterfactual monotonicity holds. 
In Appendix~\ref{app:additional-results}, we conduct experiments where we relax this assumption and assume instead that interventional monotonicity~holds.\footnote{All experiments ran on a Mac OS machine with an M1 processor and 16GB Memory. An open-source implementation of our methodology is publicly available at \href{https://github.com/Networks-Learning/controlling-counterfactual-harm-prediction-sets}{https://github.com/Networks-Learning/controlling-counterfactual-harm-prediction-sets}.}
%
%
\begin{figure}[t]
    \centering
    \subfloat[$\alpha = 0.01$]{
    \label{fig:class-alpha0.01}
    \includegraphics[width=.45\linewidth, valign=t]{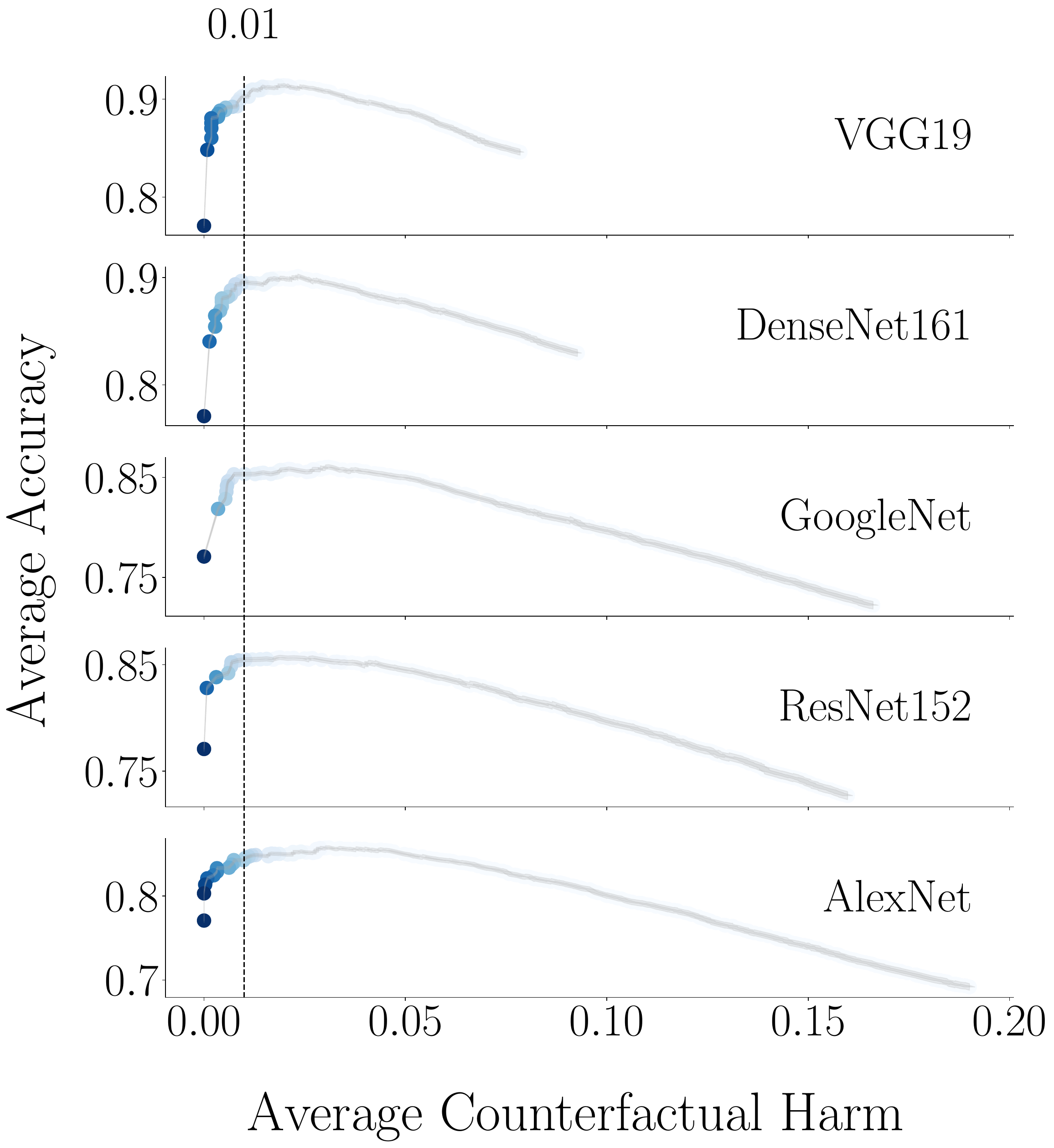}
    }
    \hspace{-0.5mm}
    \subfloat[$\alpha=0.05$]{
    \label{fig:class-alpha0.05}
    \includegraphics[width=.45\linewidth, valign=t]{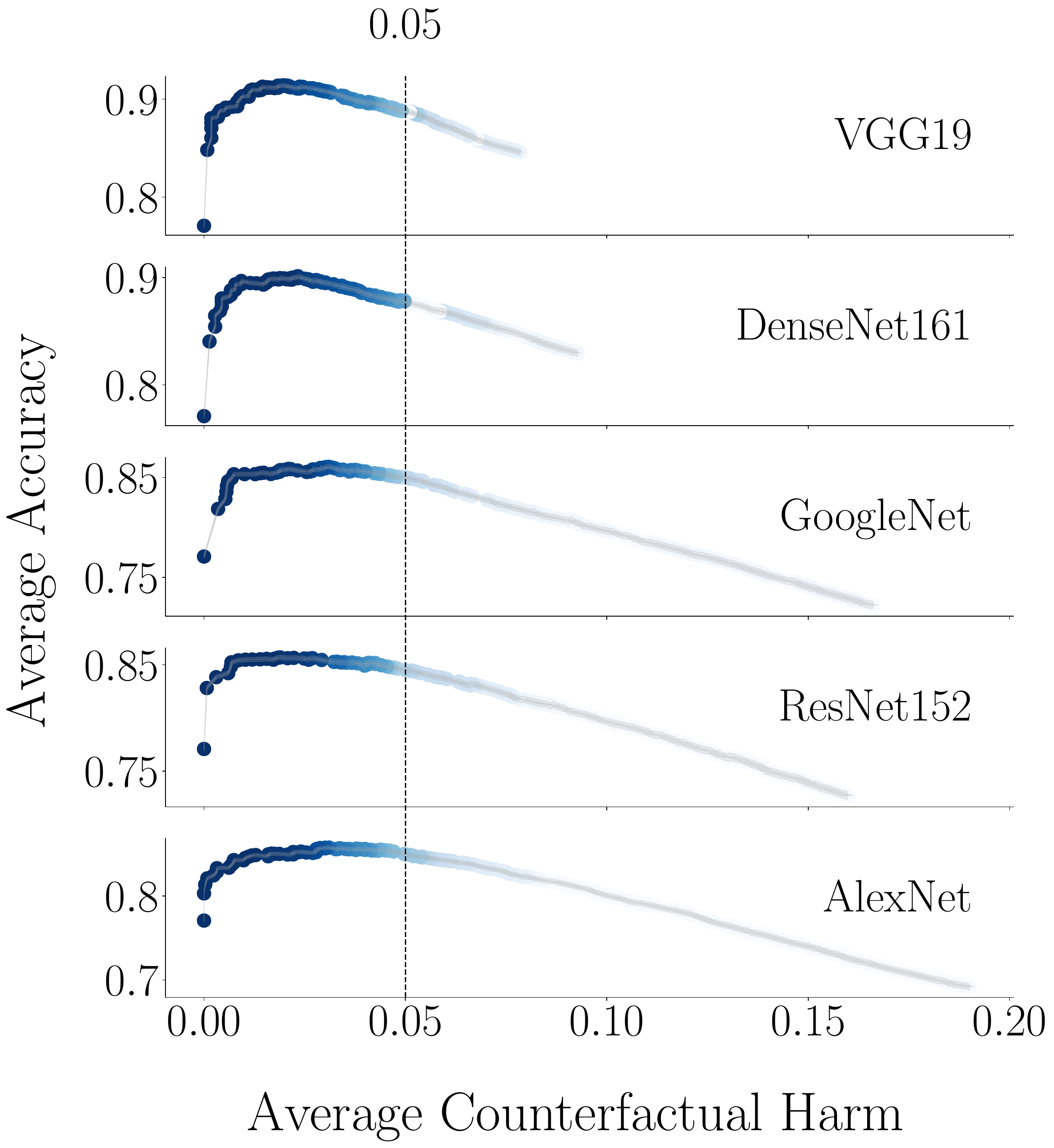}}
    \includegraphics[scale=0.2, valign=t]{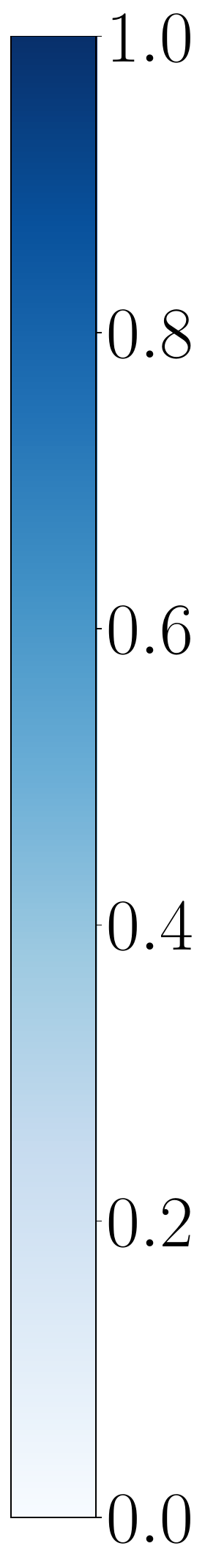}
    \caption{Average accuracy estimated by the mixture of MNLs against the average counterfactual harm for images with $\omega = 110$.
    Each point corresponds to a $\lambda$ value from $0$ to $1$ with step $0.001$ and the coloring indicates the relative frequency with which each $\lambda$ value is in $\Lambda(\alpha)$ across random samplings of the calibration set.
    Each row corresponds to decision support systems $\Ccal_{\lambda}$ with a different pre-trained classifier with average accuracies   \vgg~(VGG19),  \densenet~(DenseNet),  \googlenet~(GoogleNet), \resnet~(ResNet152), and \alexnet~(AlexNet).
    The average accuracy achieved by the simulated human experts on their own is $0.771$.
    The results are averaged across $50$ random samplings of the test and calibration set.
    In both panels, $95\%$ confidence intervals are represented using shaded areas and always have width below $0.02$.
    }
    \label{fig:acc_vs_harm_mnl}
\end{figure}

\xhdr{Experimental setup}
We first experiment with the ImageNet16H dataset by Steyvers et al.~\cite{steyvers2022bayesian}, 
which comprises $32{,}431$ predictions made by $145$ human participants on their own about noisy images created using $1{,}200$ unique natural images from the ImageNet Large Scale Visual Recognition Challenge (ILSRVR) 2012 dataset~\cite{russakovsky2015imagenet}. 
More specifically, each of the natural images was used to generate four noisy images with different amount of phase noise distortion $\omega \in \{80, 95, 110, 125\}$ and with the same ground-truth label $y$ from a label set $\Ycal$ of size $L=16$.
Here, the amount of phase noise controls the difficulty of the classification task---the higher the noise, the more difficult the classification task.   
In our experiments, we use (all) the noisy images with noise value $\omega \in \{80, 95, 110\}$ because, for the noisy images with $\omega=125$, humans perform poorly;
%
moreover, we stratify these images (and human predictions) with respect to their amount of phase noise.

For each stratum of images, we apply our framework to decision support systems $\Ccal_{\lambda}$ with different pre-trained classifiers, namely VGG19~\cite{simonyan2014very}, Dense\-Net161~\cite{huang2017densely}, GoogleNet~\cite{szegedy2015going}, ResNet152~\cite{he2016deep} and AlexNet~\cite{krizhevsky2012imagenet} after 10 epochs of fine-tuning, as provided by Steyvers et al.~\cite{steyvers2022bayesian}.\footnote{All classifiers and images are publicly available at \href{https://osf.io/2ntrf}{https://osf.io/2ntrf}.}
To this end, we randomly split the images (and human predictions) 
into a calibration set ($10$\%), which we use to find the harm-controlling sets $\Lambda(\alpha)$ by applying Corollary~\ref{cor:counterfactual}, 
and a test set ($90$\%), which we use to estimate the average counterfactual harm $H(\lambda)$ caused by the decision support systems $\Ccal_{\lambda}$ as well as the average accuracy $A(\lambda)$ of the predictions made by a human expert using $\Ccal_{\lambda}$.
Here, since the dataset only contains predictions made by human participants on their own, we use the mixture of multinomial logit models (MNLs) introduced by Straitouri et al.~\cite{straitouri2023improving} to estimate the average accuracy $A(\lambda)$.
Refer to Appendix~\ref{app:additional-details} for more details regarding the mixture of MNLs and to Appendix~\ref{app:additional-results-counterfactual} for additional experiments studying the relationship between average counterfactual harm $H(\lambda)$, average prediction set size and empirical coverage\footnote{By empirical coverage, we refer to the fraction of the test samples for which the prediction sets include the ground truth label value.}. 

Then, we experiment with the ImageNet16H-PS dataset\footnote{The dataset is publicly available at \href{https://github.com/Networks-Learning/counterfactual-prediction-sets/}{https://github.com/Networks-Learning/counterfactual-prediction-sets/}.} by Straitouri et al.~\cite{straitouri2024designing}, which comprises $194{,}407$ predictions made by $2{,}751$ human participants using decision support systems $\Ccal_{\lambda}$ about the set of noisy images with $\omega = 110$ described above.
More specifically, for each noisy image, the dataset contains human predictions made under any possible prediction set that can be constructed using Eq.~\ref{eq:set-valued-predictor} with (the softmax output of) VGG19 after 10 epochs of fine-tuning, as provided by Steyvers et al.~\cite{steyvers2022bayesian}.
Here, similarly as in the ImageNet16H dataset, we randomly split the images (and human predictions) into a calibration set ($10$\%), which we use to find $\Lambda(\alpha)$, and a test set ($90$\%), which we use to estimate $H(\lambda)$ and $A(\lambda)$. 
However, in this case, we can estimate $A(\lambda)$ using the predictions made by human participants using $\Ccal_{\lambda}$ from the dataset, and we can compare this empirical estimate to the one using the mixture of MNLs.

In both datasets, we calculate confidence intervals and validate the theoretical guarantees offered by Corollary~\ref{cor:counterfactual} by repeating each experiment $50$ times and, each time, sampling different calibration and test sets.

\begin{figure}[t]
    \centering
    \includegraphics[width=.48\linewidth]{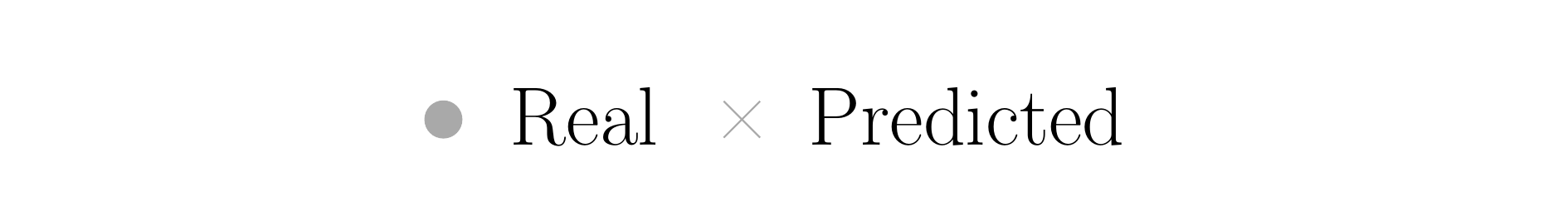} \vspace{-5mm} \\
    \subfloat[$\alpha = 0.01$]{
    \label{fig:1class-alpha0.01}
    \includegraphics[width=.46\linewidth, valign=t]{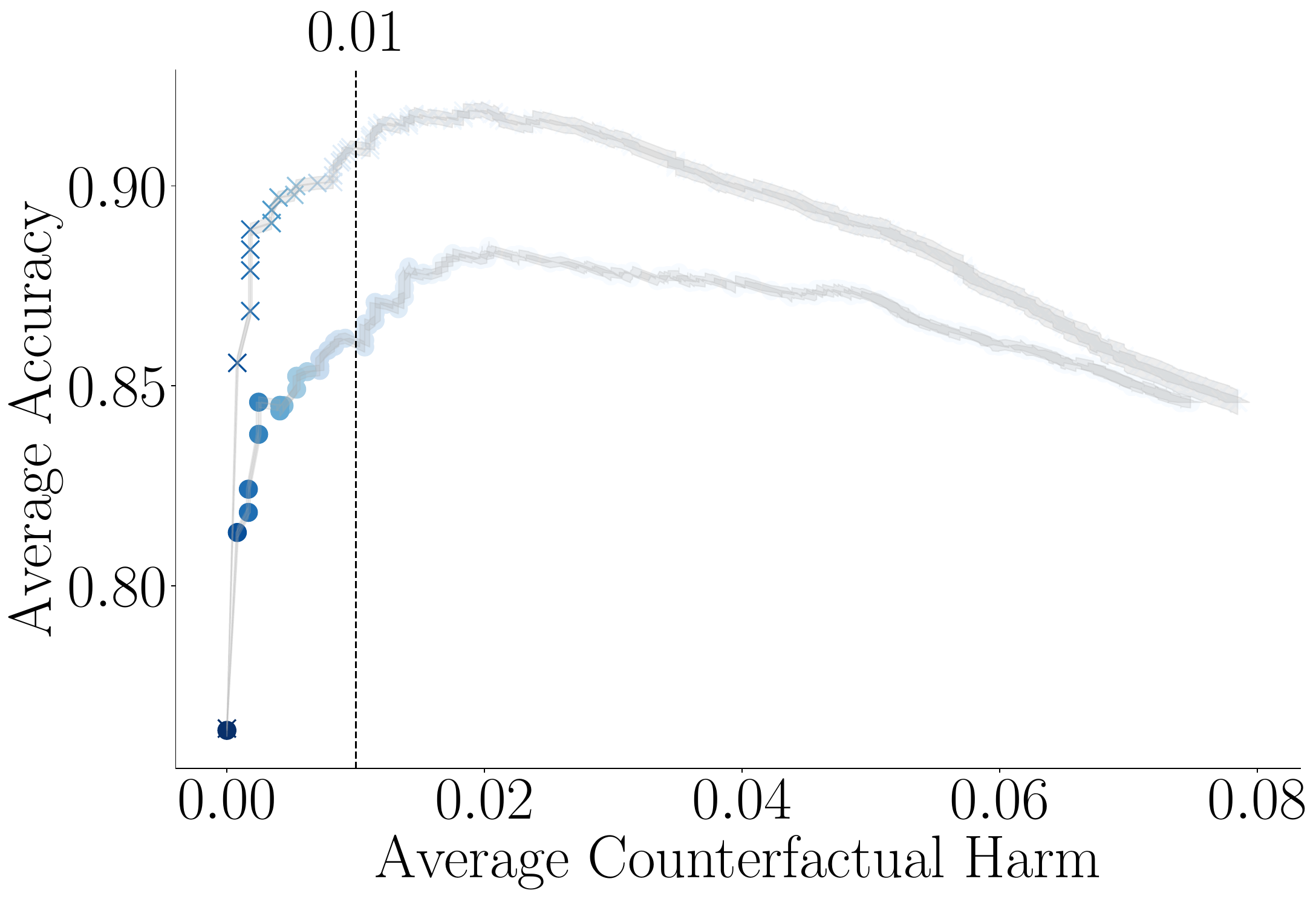}
    }
    \subfloat[$\alpha=0.05$]{
    \label{fig:1class-alpha0.05}
    \hspace{-1mm}
    \includegraphics[width=.46\linewidth, valign=t]{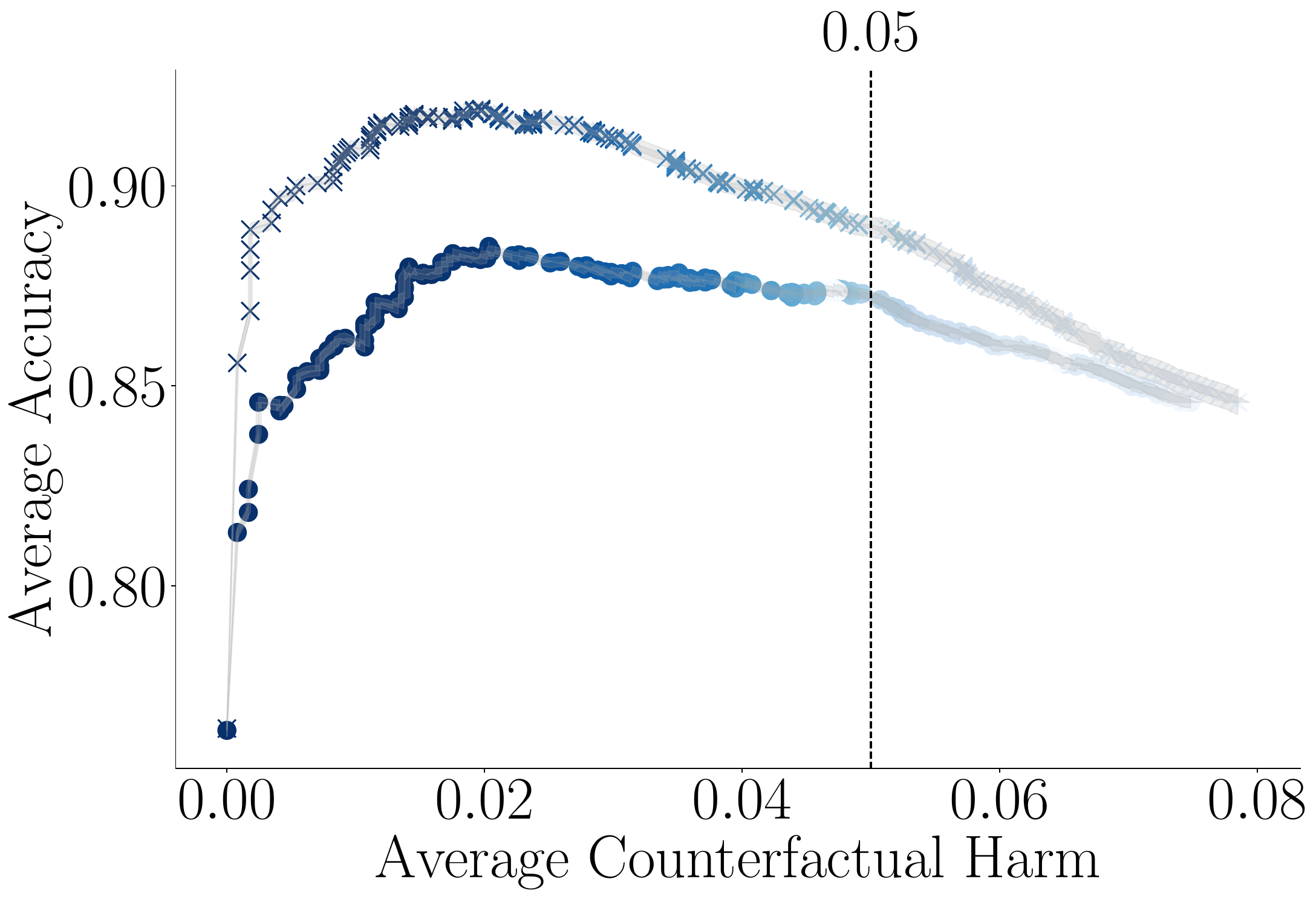}}
    \includegraphics[scale=0.181, valign=t]{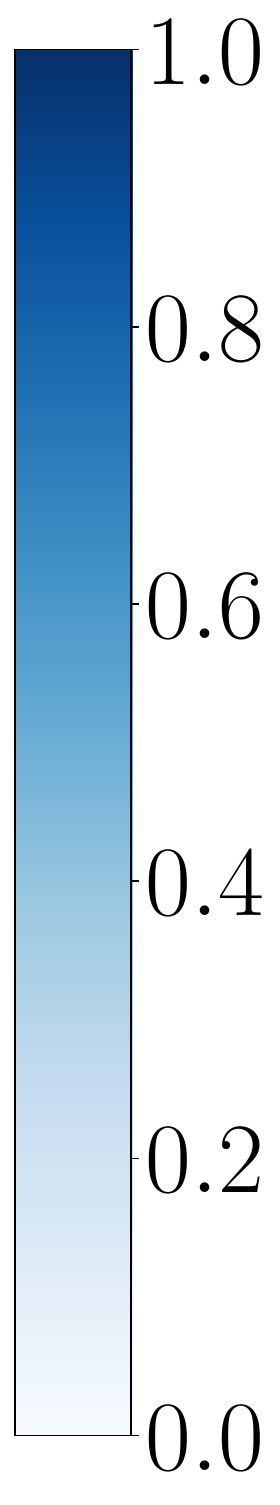}
    \caption{Average accuracy estimated using predictions by human participants (Real) and using the mixture of MNLs (Predicted) against the average counterfactual harm for images with $\omega=110$.
    Each point corresponds to a $\lambda$ value from $0$ to $1$  with step $0.001$ and the coloring indicates the relative frequency with which the $\lambda$ value is in $\Lambda(\alpha)$ across random samplings of the calibration set.
    The decision support systems $\Ccal_{\lambda}$ use the pre-trained classifier VGG19.
    The results are averaged  across $50$ random samplings of the test and calibration set.
    In both panels, $95\%$ confidence intervals are represented using shaded areas and always have width below $0.02$.
    }
      \label{fig:acc_vs_harm}
\end{figure}

\xhdr{Results} Figure~\ref{fig:acc_vs_harm_mnl} shows the average accuracy $A(\lambda)$ achieved by a human participant using $\Ccal_{\lambda}$, as predicted by the mixture of MNLs, against the average counterfactual harm $H(\lambda)$ caused by $\Ccal_{\lambda}$ for $\alpha \in \{0.01, 0.05\}$ on the stratum of images with $\omega = 110$ from the ImageNet16H dataset. 
Refer to Appendix~\ref{app:additional-results-counterfactual} for results on other strata. 
Here, each point corresponds to a different $\lambda$ value and its coloring indicates the empirical probability that $\lambda$ is included in the harm-controlling set $\Lambda(\alpha)$.
The results show several interesting insights.
First, we find that, as long as $\lambda < 1$, the decision support systems $\Ccal_{\lambda}$ always cause some amount of counterfactual harm\footnote{For $\lambda = 1$, the decision support system $\Ccal_{\lambda}$ does not cause harm because the prediction sets always contain all label values and thus human participants always make predictions on their own.}.
This suggests that, while restricting human agency enables human-AI complementarity, it inevitably causes (some amount of) counterfactual harm.
Second, we find that the sets of $\lambda$ values provided by our framework are typically harm-controlling, \ie, they do not include $\lambda$ values such that $H(\lambda) > \alpha$. 
However, the sets are often conservative and do not include \emph{all} the harm-controlling $\lambda$ 
values due to estimation error in the 
the empirical estimate $\hat{H}_n(\lambda)$ of the average counterfactual harm using data from the calibration set. 
In Appendix~\ref{app:additional-results-counterfactual}, we show that using larger calibration sets reduces the above mentioned estimation error and results in sets of $\lambda$ values that are less conservative.
Third, we find that there is a trade-off between accuracy and counterfactual harm and this trade-off is qualitatively consistent across decision support systems using different pre-trained classifiers.
In fact, for $\alpha = 0.01$, the decision support systems $\Ccal_{\lambda^{*}}$ offering the greatest average accuracy $A(\lambda^{*})$ are not harm-controlling.

Figure~\ref{fig:acc_vs_harm} shows the average accuracy $A(\lambda)$ achieved by a human participant using $\Ccal_{\lambda}$, 
as estimated using predictions made by human participants using $\Ccal_{\lambda}$, against the average counterfactual harm $H(\lambda)$ caused by $\Ccal_{\lambda}$ for $\alpha \in \{0.01, 0.05\}$ on the ImageNet16H-PS dataset.
Here, the meaning and coloring of each point is the same as in Figure~\ref{fig:acc_vs_harm_mnl}.
The results also support the findings derived from the experiments with the ImageNet16H dataset---the decision support systems $\Ccal_{\lambda}$ always cause some amount of harm, our framework succeeds at identifying harm-controlling sets, and there is a trade-off between accuracy and counterfactual harm.
Further, they also show that, 
while there is a gap between the average accuracy estimated using the mixture of MNLs and the average accuracy estimated using predictions made by human participants using $\Ccal_{\lambda}$, 
they follow the same trend and support the same qualitative conclusions.

\vspace{-2mm}
\section{Discussion and limitations}
\vspace{-2mm}
\label{sec:discussion}
In this section, we highlight several limitations of our work, discuss its broader impact, and propose avenues for future work.

\xhdr{Assumptions}
We have assumed that the data samples and the expert predictions are drawn i.i.d. from a fixed distribution 
and the calibration set contains samples with noiseless ground truth labels and expert label predictions. 
It would be very interesting to extend our framework to allow for distribution shifts and label noise.
Moreover, we have considered prediction sets constructed with a \textit{fixed} user-specified threshold value $\lambda$. 
In light of recent work by Gibbs et al.~\cite{gibbs2023conformal}, it would be 
interesting to extend our framework to allow for threshold values that depend on the data samples.  
Furthermore, in our definition of counterfactual harm we have treated all inaccurate predictions as equally harmful. 
Expanding our definition of harm to weigh different inaccurate predictions according to their consequences would be a very interesting avenue for future work.
%
In addition, under the interventional monotonicity assumption, we have~em\-pi\-ri\-cally observed that the gap between our lower- and upper-bounds is often large. 
Therefore, it would be useful to identify other natural assumptions under which this gap is smaller.

\xhdr{Methodology}
In our framework, the prediction sets are constructed using the softmax output of a pre-trained classifier. 
However, we hypothesize that, by accounting for the similarity between the mistakes made by humans and those made by the pre-trained classifier, we may be able to construct prediction sets that cause counterfactual harm less frequently.
Moreover, we have focused on con\-trolling the average counterfactual harm. However, whenever the expert's predictions are consequential to individuals, this may lead significant disparities across demographic groups.
Therefore, it would be important to extend our framework to account for fairness considerations.

\xhdr{Evaluation}
Our experimental evaluation comprises a single benchmark dataset of noisy natural images and thus one may question the generalizability of the conclusions we draw from our results.
To overcome this limitation, it would be important to further investigate the trade-off between accuracy and counterfactual harm in decision support systems based on prediction sets in real-world application domains (\eg, medical diagnosis).

\vspace{-2mm}
\section{Conclusions}
\vspace{-2mm}
\label{sec:conclusions}
In this paper, we have initiated the study of counterfactual harm in decision support systems based on prediction sets. 
We have introduced a computational framework that, 
under natural monotonicity assumptions on the predictions made by experts using these systems, 
can control how frequently these systems cause harm.
Moreover, we have validated our framework using data from two different human subject studies and shown that, in decision support systems based on prediction sets, there is a trade-off between accuracy and counterfactual harm.

\xhdr{Acknowledgements} Gomez-Rodriguez acknowledges support from the European Research Council (ERC) under the European Union'{}s Horizon 2020 research and innovation programme (grant agreement No. 945719).

{ 
\bibliographystyle{unsrt}
\bibliography{harm_controlling_sets}
}

\clearpage
\newpage

\appendix
\section{Comparison to Pearl's notation}
\label{app:notation}


Using Pearl's notation~\cite{pearl2009causality}, our definition of counterfactual harm reads as follows:
\begin{align*}
h_{\lambda}(x, y, \hat{y}) = \mathbb{E}[\max\{0, \mathbbm{1}\{\hat{y} = y \} - \mathbbm{1}\{\hat{Y}_{\lambda} = y\} \} | X=x, Y=y, \hat{Y}_{1}=\hat{y}],    
\end{align*}
where the subindices in $\hat{Y}_{\lambda}$ and $\hat{Y}_{1}$ denote (hard) interventions on the exogenous variable $\Lambda$, the random variables $x, y, \hat{y} \sim P(X, Y, \hat{Y}_{1})$, and the expectation is over the prediction $\hat{Y}_{\lambda}$ made by the expert using the system $\mathcal{C}_{\lambda}$. 
Using the above notation, the definition does not explicitly specify the (counterfactual) distribution used in the expectation, in contrast to the definition using the notation by Peters et al.~\cite{peters2017elements}, which explicitly specifies the counterfactual distribution used in the expectation, as restated below for clarity.
\begin{align*}
     h_{\lambda}(x, y, \hat{y}) = \EE_{\hat{Y} \sim P^{\Mcal ; \text{do}(\Lambda=\lambda) \given \hat{Y}=\hat{y}, X=x, Y=y}(\hat Y)}
        [\max\{0, \mathbbm{1}\{\hat{y} = y\} - \mathbbm{1}\{\hat{Y} = y \} \}].
\end{align*} 

\clearpage
\newpage

\section{Proofs}
\label{app:proofs}

\subsection{Proof of Proposition~\ref{prop:harm-counterfactual-monotonicity}}
Given an observation $(x, y, \hat{y})$ under no intervention, we have 
\begin{align}
     h_{\lambda}(x, y, \hat{y}) &= \EE_{\hat{Y} \sim P^{\Mcal ; \text{do}(\Lambda=\lambda) \given \hat{Y}=\hat{y}, X=x, Y=y}}
        [\max\{0, \mathbbm{1}\{\hat{y} = y\} - \mathbbm{1}\{\hat{Y} = y \} \}] \nonumber\\
    &=\EE_{\hat{Y} \sim P^{\Mcal ; \text{do}(\Lambda=\lambda) \given \hat{Y}=\hat{y}, X=x, Y=y}}
        [\max\{0, 1 - \mathbbm{1}\{\hat{Y} = y \}\}]\cdot\mathbbm{1}\{\hat{y} = y\}\nonumber\\
    &+\EE_{\hat{Y} \sim P^{\Mcal ; \text{do}(\Lambda=\lambda) \given \hat{Y}=\hat{y}, X=x, Y=y}}
        [\max\{0, 0 - \mathbbm{1}\{\hat{Y} = y \}\}]\cdot\mathbbm{1}\{\hat{y} \neq y\}\nonumber\\
    &=\EE_{\hat{Y} \sim P^{\Mcal ; \text{do}(\Lambda=\lambda) \given \hat{Y}=\hat{y}, X=x, Y=y}}
        [\max\{0, 1 - \mathbbm{1}\{\hat{Y} = y \}\}]\cdot\mathbbm{1}\{\hat{y} = y\} + 0\nonumber\\
    &=\EE_{\hat{Y} \sim P^{\Mcal ; \text{do}(\Lambda=\lambda) \given \hat{Y}=\hat{y}, X=x, Y=y}}
        [\max\{0, 1 - 1\}]\cdot\mathbbm{1}\{\hat{y} = y\}\cdot  \mathbbm{1}\{y \in \Ccal_{\lambda}(x)\} \nonumber\\
    &+\EE_{\hat{Y} \sim P^{\Mcal ; \text{do}(\Lambda=\lambda) \given \hat{Y}=\hat{y}, X=x, Y=y}}
        [\max\{0, 1 - 0\}]\cdot\mathbbm{1}\{\hat{y} = y\}\cdot \mathbbm{1}\{y \notin \Ccal_{\lambda}(x)\}~\label{eq:step-5}\\
    &=0\cdot\mathbbm{1}\{\hat{y} = y\}\cdot \mathbbm{1}\{y \in \Ccal_{\lambda}(x)\}\nonumber \\
    &+\EE_{\hat{Y} \sim P^{\Mcal ; \text{do}(\Lambda=\lambda) \given \hat{Y}=\hat{y}, X=x, Y=y}}
        [\max\{0, 1 - 0\}]\cdot\mathbbm{1}\{\hat{y} = y\}\cdot \mathbbm{1}\{y \notin \Ccal_{\lambda}(x)\}\nonumber\\
    &= 1 \cdot \mathbbm{1}\{\hat{y} = y\}\cdot \mathbbm{1}\{y \notin \Ccal_{\lambda}(x)\}\nonumber\\
    &=  \mathbbm{1}\{\hat{y} = y\}\cdot \mathbbm{1}\{y \notin \Ccal_{\lambda}(x)\}\nonumber\\
    &= \mathbbm{1}\{\hat{y} = y \wedge y \notin \Ccal_{\lambda}(x)\}\nonumber,
\end{align}
where, in Eq.~\ref{eq:step-5}, we used the counterfactual monotonicity property by setting $\mathbbm{1}\{\hat{Y} = y\} = 1$ if $y \in \Ccal_{\lambda}(x)$.

\subsection{Proof of Proposition~\ref{prop:harm-bound-interventional-monotonicity}}

Under interventional monotonicity, by Lemma~\ref{lem:harm-bound-interventional-monotonicity}, we have
\begin{align*}
    h_{\lambda}(x, y, \hat{y}) &= \EE_{\hat{Y} \sim P^{\Mcal ; \text{do}(\Lambda=\lambda) \given \hat{Y}=\hat{y}, X=x, Y=y}}
        [ \mathbbm{1}\{\hat{Y} \neq y \}] \cdot \mathbbm{1}\{\hat{y} = y\} \cdot \mathbbm{1}\{y \in \Ccal_{\lambda}(x)\} \\& + \mathbbm{1}\{\hat{y} = y\} \cdot \mathbbm{1}\{y \notin \Ccal_{\lambda}(x)\}\\& \leq 
        \EE_{\hat{Y} \sim P^{\Mcal ; \text{do}(\Lambda=\lambda) \given \hat{Y}=\hat{y}, X=x, Y=y}}
        [ \mathbbm{1}\{\hat{Y} \neq y \}] \cdot \mathbbm{1}\{y \in \Ccal_{\lambda}(x)\} \\& + \mathbbm{1}\{\hat{y} = y\} \cdot \mathbbm{1}\{y \notin \Ccal_{\lambda}(x)\},
\end{align*}
where we used that  $\mathbbm{1}\{\hat{y} = y\} \leq 1$.

By taking the expectation on both sides over $\hat{y}$ under no intervention we have 
\begin{align}\label{eq:exp-hat-y}
    \EE_{\hat{Y}' \sim P^{\Mcal}}\left [h_{\lambda}(x, y, \hat{Y}')\right ] & \leq 
       \EE_{\hat{Y}' \sim P^{\Mcal}}\left[ \EE_{\hat{Y} \sim P^{\Mcal ; \text{do}(\Lambda=\lambda) \given \hat{Y}=\hat{Y}', X=x, Y=y}} 
        \left[ \mathbbm{1}\{\hat{Y} \neq y \} \right] 
        \cdot \mathbbm{1}\{y \in \Ccal_{\lambda}(x)\} \right] \\& + \EE_{\hat{Y}' \sim P^{\Mcal}}\left [\mathbbm{1}\{\hat{Y}' = y\} \cdot \mathbbm{1}\{y \notin \Ccal_{\lambda}(x)\} \right] \\
        &= \EE_{\hat{Y}' \sim P^{\Mcal}}\left[ \EE_{\hat{Y} \sim P^{\Mcal ; \text{do}(\Lambda=\lambda) \given \hat{Y}=\hat{Y}', X=x, Y=y}} 
        \left[ \mathbbm{1}\{\hat{Y} \neq y \} \right] \right]
        \cdot \mathbbm{1}\{y \in \Ccal_{\lambda}(x)\} \\
        &+ \EE_{\hat{Y}' \sim P^{\Mcal}}\left [\mathbbm{1}\{\hat{Y}' = y\} \cdot \mathbbm{1}\{y \notin \Ccal_{\lambda}(x)\} \right],
\end{align}
since $\mathbbm{1}\{y \in \Ccal_{\lambda}(x)\}$ is a constant with respect to the expectation over $\hat{Y}' \sim P^{\Mcal}$. 

By Lemma~\ref{lem:interventional-counterfactual}, Eq.~\ref{eq:exp-hat-y} becomes
\begin{align}\label{eq:harm-interv-lambda}
    \EE_{\hat{Y}' \sim P^{\Mcal}}\left [h_{\lambda}(x, y, \hat{Y}')\right ] & \leq 
        \EE_{\hat{Y} \sim P^{\Mcal;\text{do}(\Lambda=\lambda)}}  
        \left[ \mathbbm{1}\{\hat{Y} \neq Y \} \given X=x, Y = y  \right]
        \cdot \mathbbm{1}\{y \in \Ccal_{\lambda}(x)\} \nonumber\\
        &+ \EE_{\hat{Y}' \sim P^{\Mcal}}\left [\mathbbm{1}\{\hat{Y}' = y\} \cdot \mathbbm{1}\{y \notin \Ccal_{\lambda}(x)\} \right].
\end{align}
By Assumption~\ref{asm:cond-interventional-monotonicity} for $\lambda' = 1$ and any $\lambda$ such that $y \in \Ccal_{\lambda}(x)$, we have that
\begin{equation*}
    P^{\Mcal ; \text{do}(\Lambda = \lambda)}(\hat{Y} = Y \given X = x, Y=y) \geq P^{\Mcal}(\hat{Y} = Y \given X = x ,Y=y),
\end{equation*}
which we can rewrite as 
\begin{equation*}
    \EE_{\hat{Y}\sim P^{\Mcal ; \text{do}(\Lambda = \lambda)}}[\mathbbm{1}\{\hat{Y} = Y\} \given X = x, Y=y] \geq \EE_{\hat{Y}\sim P^{\Mcal}}[\mathbbm{1}\{\hat{Y} = Y \}\given X = x ,Y=y],
\end{equation*}
and is equivalent to 
\begin{equation*}
    \EE_{\hat{Y}\sim P^{\Mcal ; \text{do}(\Lambda = \lambda)}}[\mathbbm{1}\{\hat{Y} \neq Y\} \given X = x, Y=y] \leq \EE_{\hat{Y}\sim P^{\Mcal}}[\mathbbm{1}\{\hat{Y} \neq Y \}\given X = x ,Y=y].
\end{equation*}
Using the above result, Eq.~\ref{eq:harm-interv-lambda} becomes
\begin{align*}
    \EE_{\hat{Y}' \sim P^{\Mcal}}\left [h_{\lambda}(x, y, \hat{Y}')\right ] & \leq 
        \EE_{\hat{Y} \sim P^{\Mcal} } 
        \left[ \mathbbm{1}\{\hat{Y} \neq Y \} \given X=x, Y = y  \right]
        \cdot \mathbbm{1}\{y \in \Ccal_{\lambda}(x)\} \\
        &+ \EE_{\hat{Y}' \sim P^{\Mcal}}\left [\mathbbm{1}\{\hat{Y}' = y\} \cdot \mathbbm{1}\{y \notin \Ccal_{\lambda}(x)\} \right].
\end{align*}

If we now take the expectation in both sides over $X, Y \sim P^{\Mcal}$ and use that $\hat{Y}'$ is equal in distribution with $\hat{Y}$, we have
 \begin{multline*}
 \EE_{X, Y, \hat{Y} \sim P^{\Mcal}}\left [h_{\lambda}(X, Y, \hat{Y}) \right]  \leq   
 \EE_{X, Y, \hat{Y} \sim P^{\Mcal}} \left [\mathbbm{1}\{\hat{Y} \neq Y \} \cdot \mathbbm{1}\{Y \in \Ccal_{\lambda}(X)\} \right ]\\ + \EE_{X, Y, \hat{Y} \sim P^{\Mcal}} \left [\mathbbm{1}\{\hat{Y} = Y \} \cdot \mathbbm{1}\{Y \notin \Ccal_{\lambda}(X)\} \right ],
 \end{multline*}
 that is equivalent to 
 \begin{multline*}
 \EE_{X, Y, \hat{Y} \sim P^{\Mcal}}\left [h_{\lambda}(X, Y, \hat{Y}) \right] \\ \leq 
 \EE_{X, Y, \hat{Y} \sim P^{\Mcal}} \left [\mathbbm{1}\{\hat{Y} \neq Y \} \cdot \mathbbm{1}\{Y \in \Ccal_{\lambda}(X)\}  +\mathbbm{1}\{\hat{Y} = Y \} \cdot \mathbbm{1}\{Y \notin \Ccal_{\lambda}(X)\} \right ],
 \end{multline*}
which we can write as 
\begin{multline*}
 \EE_{X, Y, \hat{Y} \sim P^{\Mcal}}\left [h_{\lambda}(X, Y, \hat{Y}) \right] \\ \leq 
 \EE_{X, Y, \hat{Y} \sim P^{\Mcal}} \left [\mathbbm{1}\{\hat{Y} \neq Y \wedge Y \in \Ccal_{\lambda}(X)\}  +\mathbbm{1}\{\hat{Y} = Y \wedge Y \notin \Ccal_{\lambda}(X)\} \right ].
 \end{multline*}
 For the lower-bound, by Lemma~\ref{lem:harm-bound-interventional-monotonicity}, we have 
 \begin{align*}
    h_{\lambda}(x, y, \hat{y}) &= \EE_{\hat{Y} \sim P^{\Mcal ; \text{do}(\Lambda=\lambda) \given \hat{Y}=\hat{y}, X=x, Y=y}}
        [ \mathbbm{1}\{\hat{Y} \neq y \}] \cdot \mathbbm{1}\{\hat{y} = y\} \cdot \mathbbm{1}\{y \in \Ccal_{\lambda}(x)\} \\& + \mathbbm{1}\{\hat{y} = y\} \cdot \mathbbm{1}\{y \notin \Ccal_{\lambda}(x)\}\\ &\geq 
         \mathbbm{1}\{\hat{y} = y\} \cdot \mathbbm{1}\{y \notin \Ccal_{\lambda}(x)\},
\end{align*}
where we used that  $ \EE_{\hat{Y} \sim P^{\Mcal ; \text{do}(\Lambda=\lambda) \given \hat{Y}=\hat{y}, X=x, Y=y}}
        [ \mathbbm{1}\{\hat{Y} \neq y \}] \cdot \mathbbm{1}\{\hat{y} = y\}\cdot \mathbbm{1}\{y \in \Ccal_{\lambda}(x)\} \geq 0$.
By taking the expectation in both sides over $X, Y, \hat{Y} \sim P^{\Mcal}$, we have
\begin{align*}
     \EE_{X, Y, \hat{Y} \sim P^{\Mcal}}\left [\mathbbm{1}\{\hat{Y} = Y\} \cdot \mathbbm{1}\{Y \notin \Ccal_{\lambda}(X)\}\right] \leq \EE_{X, Y, \hat{Y} \sim P^{\Mcal}}\left [h_{\lambda}(X, Y, \hat{Y}) \right] , 
\end{align*}
that is equivalent to 
\begin{align*}
     \EE_{X, Y, \hat{Y} \sim P^{\Mcal}}\left [\mathbbm{1}\{\hat{Y} = Y \wedge Y \notin \Ccal_{\lambda}(X)\}\right] \leq \EE_{X, Y, \hat{Y} \sim P^{\Mcal}}\left [h_{\lambda}(X, Y, \hat{Y}) \right] , 
\end{align*}
thus we conclude the proof.
%

\subsection{Proof of Theorem~\ref{thm:harm-control-counterfactual-mon}}
%
%
We will show that $\mathbbm{1}\{\hat{Y}_i = Y_i \wedge Y_i \notin \Ccal_{\lambda}(X_i)\}$ are exchangeable and that they satisfy the requirements of Theorem~1 in~\citep{angelopoulos2024conformal} for each $X_i, Y_i, \hat{Y}_i$. 
Since $X_i,Y_i, \hat{Y}_i$ are i.i.d., $\mathbbm{1}\{\hat{Y}_i = Y_i \wedge Y_i \notin \Ccal_{\lambda}(X_i)\}$ are also i.i.d thus exchangeable.
Considering each $\mathbbm{1}\{\hat{Y}_i = Y_i \wedge Y_i \notin \Ccal_{\lambda}(X_i)\}$ for each $X_i, Y_i, \hat{Y}_i \in \Dcal$ as a function of $\lambda$, it holds by  Lemma~\ref{lem:non-increasing} that for each $X_i, Y_i, \hat{Y}_i$, $\mathbbm{1}\{\hat{Y}_i = Y_i \wedge Y_i \notin \Ccal_{\lambda}(X_i)\}$ is non-increasing in $\lambda$ and right-continuous. 
In addition, it holds that $\mathbbm{1}\{\hat{Y}_i = Y_i \wedge Y_i \notin \Ccal_{1}(X_i)\} = 0 \leq \alpha$ and that $\sup_{\lambda} \mathbbm{1}\{\hat{Y}_i = Y_i \wedge Y_i \notin \Ccal_{\lambda}(X_i)\} \leq 1 $ surely by definition.  




\subsection{Proof of Theorem~\ref{thm:conformal-cf-benefit-control-cI}}
For the proof we will follow a similar procedure as in~\cite{angelopoulos2024conformal}.
Since $ \mathbbm{1}\{\hat{Y} \neq Y \wedge Y \in \Ccal_{\lambda}(X)\} \leq 1$, $\forall X, Y, \hat{Y}, \lambda$, it will hold that
\begin{equation}
\label{eq:g-ub}
    \hat{G}_{n+1}(\lambda) = \frac{n\hat{G}_{n}(\lambda) +  \mathbbm{1}\{\hat{Y}_{n+1} \neq Y_{n+1} \wedge Y_{n+1} \in \Ccal_{\lambda}(X_{n+1})\}}{n+1} \leq \frac{n}{n+1}\hat{G}_{n}(\lambda) + \frac{1}{n+1},
\end{equation}
for every $\lambda \in \Lcal$.
Now, let
\begin{align}
\label{eq:lambda-hat-prime}
    \hat{\lambda}' = \sup \{ \lambda \in \Lcal : \hat{G}_{n+1} (\lambda) \leq \alpha\}.
\end{align}
If such $\hat{\lambda}'$ does not exist, by Eq.~\ref{eq:lambda-hat-prime} $\hat{\lambda}$ will also not exist. If $\hat{\lambda}'$ exists, 

we will  show that $\hat{\lambda} \leq \hat{\lambda}'$.
If $\hat{\lambda} = 0$, then it will hold that $\hat{\lambda}' \geq \hat{\lambda} = 0 = \min_{\lambda \in \Lcal} \lambda$. If $\hat{\lambda} > 0$, then by Eq.~\ref{eq:lambda-hat-prime}, $\hat{\lambda}$ must satisfy 
\begin{equation}
\label{eq:B-hat-bound}
    \hat{G}_{n+1}(\hat{\lambda})  \leq \frac{n}{n+1}\hat{G}_{n}(\hat{\lambda}) + \frac{1}{n+1} \leq \alpha,
\end{equation}
which means that $\hat{\lambda} \leq \hat{\lambda}'$. Therefore, in any case we have that $\hat{\lambda} \leq \hat{\lambda}'$.

From Lemma~\ref{lem:non-decreasing}, the random functions $\mathbbm{1}\{\hat{Y}_i \neq Y_i \wedge Y_i \in \Ccal_{\lambda}(X_i)\}$ are non-decreasing for each $i \in [n+1]$. As a result, since $\hat{\lambda} \leq \hat{\lambda}'$, we have that 
\begin{multline}
    \EE_{X_i, Y_i, \hat{Y}_i \sim P^{\Mcal}}[ \mathbbm{1}\{\hat{Y}_{n+1} \neq Y_{n+1} \wedge Y_{n+1} \in \Ccal_{\hat{\lambda}}(X_{n+1})\}] \\ \leq \EE_{X_i, Y_i, \hat{Y}_i \sim P^{\Mcal}}[\mathbbm{1}\{\hat{Y}_{n+1} \neq Y_{n+1} \wedge Y_{n+1} \in \Ccal_{\hat{\lambda}'}(X_{n+1})\}],
\end{multline}
where $i=\{1, \dots, n+1\}$.
%
Given that the data samples $(X_i, Y_i, \hat{Y}_i)$ for $ i \in [1, n+1]$ are exchangeable, 
the values 
$\mathbbm{1}\{\hat{Y}_{i} \neq Y_{i} \wedge Y_{i} \in \Ccal_{\hat{\lambda}'}(X_{i})\}$ for $i \in [n+1]$ are also exchangeable.
Therefore, due to exchangeability and given that $\hat{\lambda}'$ is fixed given $(X_i, Y_i, \hat{Y}_i)$ for $ i \in [1, n+1]$,
$\mathbbm{1}\{\hat{Y}_{n+1} \neq Y_{n+1} \wedge Y_{n+1} \in \Ccal_{\hat{\lambda}'}(X_{n+1})\}$ has the same probability of taking any value in $\Dcal_{\hat{\lambda}'} = \{\mathbbm{1}\{\hat{Y}_{i} \neq Y_{i} \wedge Y_{i} \in \Ccal_{\hat{\lambda}'}(X_{i})\}\}_{i=1}^{n+1}$, \ie, the value of $\mathbbm{1}\{\hat{Y}_{i} \neq Y_{i} \wedge Y_{i} \in \Ccal_{\hat{\lambda}'}(X_{i})\}$ for the $n+1$-th sample in $\{(X_i, Y_i, \hat{Y}_i)\}_{i=1}^{n+1}$ follows a uniform distribution over all the possible values in $\Dcal_{\hat{\lambda}'}$.
As a result, from the law of total expectation it holds for $i=\{1, \dots, n+1\}$ that $ \EE_{X_i, Y_i, \hat{Y}_i \sim P^{\Mcal}}[\mathbbm{1}\{\hat{Y}_{n+1} \neq Y_{n+1} \wedge Y_{n+1} \in \Ccal_{\hat{\lambda}'}(X_{n+1})\}] = \EE_{\Dcal_{\hat{\lambda}'}}[\mathbbm{1}\{\hat{Y}_{n+1} \neq Y_{n+1} \wedge Y_{n+1} \in \Ccal_{\hat{\lambda}'}(X_{n+1})\} \given \Dcal_{\hat{\lambda}'}] = \frac{\sum_{i=1}^{n+1}\mathbbm{1}\{\hat{Y}_{i} \neq Y_{i} \wedge Y_{i} \in \Ccal_{\hat{\lambda}'}(X_{i})\}}{n+1} = \hat{G}_{n+1}(\hat{\lambda}')$. 
Finally, from Eq.~\ref{eq:B-hat-bound} and the above we have
\begin{multline}
    \EE_{X_i, Y_i, \hat{Y}_i \sim P^{\Mcal}}[ \mathbbm{1}\{\hat{Y}_{n+1} \neq Y_{n+1} \wedge Y_{n+1} \in \Ccal_{\hat{\lambda}}(X_{n+1})\}] \\ \leq \EE_{X_i, Y_i, \hat{Y}_i \sim P^{\Mcal}}[\mathbbm{1}\{\hat{Y}_{n+1} \neq Y_{n+1} \wedge Y_{n+1} \in \Ccal_{\hat{\lambda}'}(X_{n+1})\}]
    = \hat{G}_{n+1}(\hat{\lambda}') 
    \leq \alpha,
\end{multline}
where $i=\{1, \dots, n+1\}$.
\vspace{1mm}
\subsection{Additional Lemmas}\label{app:lemmas}

\begin{lemma}\label{lem:harm-bound-interventional-monotonicity}
 Under the interventional monotonicity assumption, for any $x, y, \hat{y} \sim P^{\Mcal}$, the counterfactual harm that a decision support system $\Ccal_{\lambda}$ would have caused, if deployed, is given by
\begin{multline}
\label{eq:harm-cond-interventional-mon}
    h_{\lambda}(x, y, \hat{y}) = \EE_{\hat{Y} \sim P^{\Mcal ; \text{do}(\Lambda=\lambda) \given \hat{Y}=\hat{y}, X=x, Y=y}}
        [ \mathbbm{1}\{\hat{Y} \neq y \}] \cdot \mathbbm{1}\{\hat{y} = y\} \cdot \mathbbm{1}\{y \in \Ccal_{\lambda}(x)\} \\
        + \mathbbm{1}\{\hat{y} = y\}\cdot \mathbbm{1}\{y \notin \Ccal_{\lambda}(x)\}.
\end{multline}
\end{lemma}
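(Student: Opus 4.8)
The plan is to start directly from the definition of counterfactual harm in Eq.~\ref{eq:cf-harm} and perform the same elementary rewriting of the integrand as in the proof of Proposition~\ref{prop:harm-counterfactual-monotonicity}, stopping just before the point where that proof invokes counterfactual monotonicity. First I would split the integrand $\max\{0, \mathbbm{1}\{\hat{y} = y\} - \mathbbm{1}\{\hat{Y} = y\}\}$ according to whether $\hat{y} = y$. On the event $\hat{y} \neq y$ the integrand is $\max\{0, -\mathbbm{1}\{\hat{Y} = y\}\} = 0$, so that contribution vanishes; on the event $\hat{y} = y$ the integrand simplifies, since $\max\{0, 1 - \mathbbm{1}\{\hat{Y} = y\}\} = \mathbbm{1}\{\hat{Y} \neq y\}$. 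This yields the intermediate identity $h_{\lambda}(x, y, \hat{y}) = \EE_{\hat{Y}}[\mathbbm{1}\{\hat{Y} \neq y\}] \cdot \mathbbm{1}\{\hat{y} = y\}$, where the expectation is taken over the counterfactual distribution $P^{\Mcal ; \text{do}(\Lambda=\lambda) \given \hat{Y}=\hat{y}, X=x, Y=y}$.

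Next I would insert the partition $\mathbbm{1}\{y \in \Ccal_{\lambda}(x)\} + \mathbbm{1}\{y \notin \Ccal_{\lambda}(x)\} = 1$ to break this into two summands, one carrying the factor $\mathbbm{1}\{y \in \Ccal_{\lambda}(x)\}$ and the other $\mathbbm{1}\{y \notin \Ccal_{\lambda}(x)\}$. The first summand is precisely the first term of Eq.~\ref{eq:harm-cond-interventional-mon} and is simply retained. The only substantive point is the second summand: on the event $y \notin \Ccal_{\lambda}(x)$, the structural constraint of the setting---that the expert always predicts a label inside the prediction set, so $f_{\hat{Y}}(U, V, \Ccal_{\lambda}(x))$ takes values in $\Ccal_{\lambda}(x)$---forces the counterfactual prediction to satisfy $\hat{Y} \in \Ccal_{\lambda}(x)$ and hence $\hat{Y} \neq y$ surely, for every realization of the exogenous variables. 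Consequently $\EE_{\hat{Y}}[\mathbbm{1}\{\hat{Y} \neq y\}] = 1$ on this event, and the second summand collapses to $\mathbbm{1}\{\hat{y} = y\} \cdot \mathbbm{1}\{y \notin \Ccal_{\lambda}(x)\}$, matching the claim.

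The step most worth emphasizing is the \emph{contrast} with Proposition~\ref{prop:harm-counterfactual-monotonicity}: there, counterfactual monotonicity allows one to conclude that whenever the expert succeeds on their own and $y \in \Ccal_{\lambda}(x)$, the counterfactual prediction is also correct, so $\mathbbm{1}\{\hat{Y} \neq y\} = 0$ and the $y \in \Ccal_{\lambda}(x)$ term disappears entirely. Under the weaker interventional monotonicity assumption no such per-realization (counterfactual) coupling is available, so this term cannot be eliminated and must be kept as the counterfactual expectation $\EE_{\hat{Y}}[\mathbbm{1}\{\hat{Y} \neq y\}]$. In fact the derivation above is an exact identity that does not itself use interventional monotonicity---the assumption is only invoked downstream, in the proof of Proposition~\ref{prop:harm-bound-interventional-monotonicity}, to replace this retained counterfactual expectation by the corresponding (verifiable) interventional quantity. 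I therefore do not expect a genuine obstacle beyond careful bookkeeping of the two indicator cases; the single nontrivial input is the structural fact that counterfactual predictions remain within the intervened prediction set.
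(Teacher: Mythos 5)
Your proposal is correct and follows essentially the same route as the paper's proof: split the integrand on $\hat{y}=y$ versus $\hat{y}\neq y$, then on $y\in\Ccal_{\lambda}(x)$ versus $y\notin\Ccal_{\lambda}(x)$, and use the structural fact that the counterfactual prediction always lies in the prediction set to force $\mathbbm{1}\{\hat{Y}=y\}=0$ in the second case. Your side observation that the identity is exact and does not actually invoke interventional monotonicity (which only enters later, in Proposition~\ref{prop:harm-bound-interventional-monotonicity}) is also accurate and consistent with the paper's argument.
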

\begin{proof}
Given an observation $(x, y, \hat{y})$ under no intervention, we have 
\begin{align*}
    h_{\lambda}(x, y, \hat{y}) &= \EE_{\hat{Y} \sim P^{\Mcal ; \text{do}(\Lambda=\lambda) \given \hat{Y}=\hat{y}, X=x, Y=y}}
        [\max\{0, \mathbbm{1}\{\hat{y} = y\} - \mathbbm{1}\{\hat{Y} = y \} \}] \nonumber \\
        &= \EE_{\hat{Y} \sim P^{\Mcal ; \text{do}(\Lambda=\lambda) \given \hat{Y}=\hat{y}, X=x, Y=y}}
        [\max\{0, 1 - \mathbbm{1}\{\hat{Y} = y \} \}] \cdot \mathbbm{1}\{\hat{y} = y\}  \nonumber \\
        &+ \EE_{\hat{Y} \sim P^{\Mcal ; \text{do}(\Lambda=\lambda) \given \hat{Y}=\hat{y}, X=x, Y=y}}
        [\max\{0, 0 - \mathbbm{1}\{\hat{Y} = y \} \}] \cdot \mathbbm{1}\{\hat{y} \neq y\} \nonumber \\
        & = \EE_{\hat{Y} \sim P^{\Mcal ; \text{do}(\Lambda=\lambda) \given \hat{Y}=\hat{y}, X=x, Y=y}}
        [\max\{0, 1 - \mathbbm{1}\{\hat{Y} = y \} \}] \cdot \mathbbm{1}\{\hat{y} = y\} + 0 \nonumber \\
        &= \EE_{\hat{Y} \sim P^{\Mcal ; \text{do}(\Lambda=\lambda) \given \hat{Y}=\hat{y}, X=x, Y=y}}
        [\max\{0, 1 - \mathbbm{1}\{\hat{Y} = y \} \}] \cdot \mathbbm{1}\{\hat{y} = y\} \cdot \mathbbm{1}\{y \in \Ccal_{\lambda}(x)\} \nonumber \\
        &+ \EE_{\hat{Y} \sim P^{\Mcal ; \text{do}(\Lambda=\lambda) \given \hat{Y}=\hat{y}, X=x, Y=y}}
        [\max\{0, 1 - \mathbbm{1}\{\hat{Y} = y \} \}] \cdot \mathbbm{1}\{\hat{y} = y\} \cdot \mathbbm{1}\{y \notin \Ccal_{\lambda}(x)\} \nonumber \\
        &= \EE_{\hat{Y} \sim P^{\Mcal ; \text{do}(\Lambda=\lambda) \given \hat{Y}=\hat{y}, X=x, Y=y}}
        [\max\{0, \mathbbm{1}\{\hat{Y} \neq y \} \}] \cdot \mathbbm{1}\{\hat{y} = y\} \cdot \mathbbm{1}\{y \in \Ccal_{\lambda}(x)\} \nonumber \\
        &+ \EE_{\hat{Y} \sim P^{\Mcal ; \text{do}(\Lambda=\lambda) \given \hat{Y}=\hat{y}, X=x, Y=y}}
        [\max\{0, 1 - 0 \}] \cdot \mathbbm{1}\{\hat{y} = y\} \cdot \mathbbm{1}\{y \notin \Ccal_{\lambda}(x)\} \nonumber \\
        &= \EE_{\hat{Y} \sim P^{\Mcal ; \text{do}(\Lambda=\lambda) \given \hat{Y}=\hat{y}, X=x, Y=y}}
        [\max\{0, \mathbbm{1}\{\hat{Y} \neq y \} \}] \cdot \mathbbm{1}\{\hat{y} = y\} \cdot \mathbbm{1}\{y \in \Ccal_{\lambda}(x)\}  \nonumber\\
        &+ 1 \cdot  \mathbbm{1}\{\hat{y} = y\} \cdot \mathbbm{1}\{y \notin \Ccal_{\lambda}(x)\} \nonumber \\
        & = \EE_{\hat{Y} \sim P^{\Mcal ; \text{do}(\Lambda=\lambda) \given \hat{Y}=\hat{y}, X=x, Y=y}}
        [ \mathbbm{1}\{\hat{Y} \neq y \}] \cdot \mathbbm{1}\{\hat{y} = y\} \cdot \mathbbm{1}\{y \in \Ccal_{\lambda}(x)\}  + \mathbbm{1}\{\hat{y} = y\} \cdot \mathbbm{1}\{y \notin \Ccal_{\lambda}(x)\}
\end{align*}
\end{proof}

\begin{lemma}\label{lem:interventional-counterfactual}
    Under the structural causal model $\Mcal$ satisfying Eq.~\ref{eq:scm}, given an observation $(x, y)$ it holds
    \begin{multline}
        \EE_{\hat{Y}' \sim P^{\Mcal}}\left [\EE_{\hat{Y} \sim P^{\Mcal;\text{do}(\Lambda=\lambda)\given X = x, Y = y, \hat{Y} = \hat{Y}'}} \left [\mathbbm{1}\{ \hat{Y} \neq Y\} \right ] \right]\\ = \EE_{\hat{Y}\sim P^{\Mcal;\text{do}(\Lambda = \lambda)}}[ \mathbbm{1}\{\hat{Y} \neq Y \}\given X=x, Y = y].
    \end{multline}
\end{lemma}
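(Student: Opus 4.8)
The plan is to expand both sides using the abduction–action–prediction semantics of counterfactuals in $\Mcal$ and then to collapse the outer average on the left-hand side via the tower property of conditional expectation. Throughout, I would abbreviate the intervened indicator as $g(u,v) = \mathbbm{1}\{f_{\hat{Y}}(u,v,\Ccal_{\lambda}(x)) \neq y\}$, so that $\mathbbm{1}\{\hat{Y} \neq Y\}$ under $\text{do}(\Lambda=\lambda)$ together with $Y=y$ is exactly $g(U,V)$.

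First I would rewrite the inner counterfactual expectation on the left-hand side. For a fixed realization $\hat{y}'$, the counterfactual distribution $P^{\Mcal;\text{do}(\Lambda=\lambda)\given X=x,Y=y,\hat{Y}=\hat{y}'}$ is obtained by (i) abduction, forming the posterior $P^{\Mcal}(U,V\given X=x,Y=y,\hat{Y}=\hat{y}')$ under the factual regime $\Lambda=1$; (ii) action, setting $\Lambda=\lambda$; and (iii) prediction, evaluating $\hat{Y} = f_{\hat{Y}}(U,V,\Ccal_{\lambda}(x))$. Hence the inner expectation equals $\EE_{(U,V)\sim P^{\Mcal}(U,V\given X=x,Y=y,\hat{Y}=\hat{y}')}[g(U,V)]$. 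Since $\hat{Y}'$ is the factual prediction consistent with the observation $(x,y)$, it is drawn from $P^{\Mcal}(\hat{Y}\given X=x,Y=y)$, so the conditioning event has positive probability.

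Next I would apply the tower property. Under $\Lambda=1$ the factual prediction $\hat{Y}' = f_{\hat{Y}}(U,V,\Ccal_{1}(x))$ is a deterministic function of $(U,V)$, so averaging the inner conditional expectation over $\hat{Y}' \sim P^{\Mcal}(\hat{Y}\given X=x,Y=y)$ telescopes:
\begin{equation*}
\EE_{\hat{Y}'}\bigl[\EE_{(U,V)\given X=x,Y=y,\hat{Y}=\hat{Y}'}[g(U,V)]\bigr] = \EE_{(U,V)\given X=x,Y=y}[g(U,V)].
\end{equation*}
It then remains to identify the right-hand side of the lemma with this same quantity. Under $\text{do}(\Lambda=\lambda)$, conditioning on $X=x,Y=y$ constrains $V$ to the posterior $P^{\Mcal}(V\given X=x,Y=y)$ while leaving $U$ at its prior $P^{\Mcal}(U)$, because $X=f_X(V)$ and $Y=f_Y(V)$ depend on neither $\Lambda$ nor $U$; and since the intervention does not touch the mechanisms of $X$ and $Y$, this interventional posterior coincides with the observational one $P^{\Mcal}(U)\,P^{\Mcal}(V\given X=x,Y=y)$. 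The right-hand side therefore also equals $\EE_{(U,V)\given X=x,Y=y}[g(U,V)]$, which closes the argument.

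I expect the main obstacle to be the middle step: justifying rigorously that the outer average over $\hat{Y}'$ and the inner counterfactual conditioning combine into a single conditional expectation. The care required is to note that the abduction step conditions on the factual event $\{X=x,Y=y,\hat{Y}=\hat{y}'\}$ under $\Lambda=1$, and that $\hat{Y}'$ is itself generated from the very posterior over $(U,V)$ being conditioned on, so the iterated conditioning is a genuine instance of the tower property. I would also make explicit that $U$ is independent of $(X,Y)$ in the factual model---again because $X$ and $Y$ are functions of $V$ alone---so that the observational and interventional posteriors over $(U,V)$ given $(x,y)$ agree factor by factor.
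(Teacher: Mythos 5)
Your proof is correct and follows essentially the same route as the paper's: expanding the inner counterfactual expectation as an average of $\mathbbm{1}\{f_{\hat{Y}}(u,v,\Ccal_{\lambda}(x))\neq y\}$ over the abduction posterior $P(U,V\given X=x,Y=y,\hat{Y}=\hat{y}')$, collapsing the outer average via the tower property (which the paper carries out as an explicit interchange of sums), and then using the independence of $U$ and $V$ together with the fact that $X$ and $Y$ depend only on $V$ to identify the result with the interventional expectation. Your explicit remark that the observational and interventional posteriors over $(U,V)$ given $(x,y)$ coincide factor by factor is a slightly cleaner articulation of the paper's final step, but the argument is the same.
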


\begin{proof}
    Given an observation $(x, y, \hat{y})$ under no intervention, for the expectation of the prediction of the human $\hat{Y}\sim P^{\Mcal;\text{do}(\Lambda = \lambda) \given X=x, Y=y, \hat{Y} = \hat{y}}$ it holds
\begin{multline*}\label{eq:counterfactual-exp}
    \EE_{\hat{Y}\sim P^{\Mcal;\text{do}(\Lambda = \lambda) \given X=x, Y=y, \hat{Y}=\hat{y}}} \left [ \mathbbm{1}\{\hat{Y} \neq Y \}\right] \\ =  \sum_{u,v} P(U=u, V=v \given X=x, Y=y, \hat{Y}=\hat{y})\cdot\mathbbm{1}\{f_{\hat{Y}}(u, v, \Ccal_{\lambda}(x)) \neq y\},
\end{multline*}
where $P(U=u, V=v \given X=x, Y=y, \hat{Y}=\hat{y})$ is the posterior distribution of the exogenous variables $U, V$ conditional on the observation $(x, y, \hat{y})$.

By taking the expectation of the above with respect to the human prediction $\hat{y}$ under no intervention we have
\begin{align*}
    &\EE_{\hat{Y}'\sim P^{\Mcal}} \left [\EE_{\hat{Y}\sim P^{\Mcal;\text{do}(\Lambda = \lambda) \given X=x, Y=y, \hat{Y}=\hat{Y}'}} \left[ \mathbbm{1}\{\hat{Y} \neq Y \} \right] \right]  \\ & =
    \sum_{\hat{y}} P^{\Mcal}(\hat{Y}' = \hat{y} \given X=x, Y=y) \\& \cdot \sum_{u,v} P(U=u, V=v \given X=x, Y=y, \hat{Y}=\hat{y})\cdot\mathbbm{1}\{f_{\hat{Y}}(u, v, \Ccal_{\lambda}(x)) \neq y\} \\
    &= \sum_{u,v} \sum_{\hat{y}} P(U=u, V=v \given X=x, Y=y, \hat{Y}=\hat{y}) \cdot P^{\Mcal}(\hat{Y}' = \hat{y} \given X=x, Y=y) \\
    & \cdot\mathbbm{1}\{f_{\hat{Y}}(u, v, \Ccal_{\lambda}(x)) \neq y\}. 
\end{align*}
Since $\hat{Y}'$ is equal in distribution to $\hat{Y}$ we can rewrite the above as 
\begin{align*}
    &\EE_{\hat{Y}'\sim P^{\Mcal}} \left [\EE_{\hat{Y}\sim P^{\Mcal;\text{do}(\Lambda = \lambda) \given X=x, Y=y, \hat{Y}=\hat{Y}'}} \left[ \mathbbm{1}\{\hat{Y} \neq Y \} \right] \right]  \\ 
     &= \sum_{u,v} \sum_{\hat{y}} P(U=u, V=v \given X=x, Y=y, \hat{Y}=\hat{y})\cdot P^{\Mcal}(\hat{Y} = \hat{y} \given X=x, Y=y) \\
    & \cdot\mathbbm{1}\{f_{\hat{Y}}(u, v, \Ccal_{\lambda}(x)) \neq y\}
\end{align*}

and since $ P(U=u, V=v \given X=x, Y=y) = \sum_{\hat{y}} P(U=u, V=v \given X=x, Y=y, \hat{Y}=\hat{y})\cdot P^{\Mcal}(\hat{Y} = \hat{y} \given X=x, Y=y)$, we have
\begin{align*}
     &\EE_{\hat{Y}'\sim P^{\Mcal}} \left [\EE_{\hat{Y}\sim P^{\Mcal;\text{do}(\Lambda = \lambda) \given X=x, Y=y, \hat{Y}=\hat{Y}'}} \left[ \mathbbm{1}\{\hat{Y} \neq Y \} \right] \right]  \\
      &= \sum_{u,v} P(U=u, V=v \given X=x, Y=y)
     \cdot\mathbbm{1}\{f_{\hat{Y}}(u, v, \Ccal_{\lambda}(x)) \neq y\}\\
     &=  \sum_{u} P(U=u \given X=x, Y=y) \cdot \sum_{v} P(V=v \given X=x, Y=y)
     \cdot\mathbbm{1}\{f_{\hat{Y}}(u, v, \Ccal_{\lambda}(x)) \neq y\}\\
     &=  \EE_{\hat{Y}\sim P^{\Mcal;\text{do}(\Lambda = \lambda)}}[ \mathbbm{1}\{\hat{Y} \neq Y \}\given X=x, Y = y],
\end{align*}
where we used that the exogenous variables $U$ and $V$ are independent.
\end{proof}

\begin{lemma}
\label{lem:non-increasing}
    Under counterfactual monotonicity, for a given $x,y, \hat{y}$ the counterfactual harm $h_{\lambda}(x,y,\hat{y})$ is non-increasing as a function of $\lambda$ and right-continuous.
\end{lemma}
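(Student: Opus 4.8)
The plan is to reduce the lemma to an elementary monotonicity-and-continuity statement about the prediction-set membership indicator, exploiting the closed form for counterfactual harm already established under the hypothesis. Since counterfactual monotonicity is assumed, Proposition~\ref{prop:harm-counterfactual-monotonicity} gives $h_{\lambda}(x,y,\hat y) = \mathbbm{1}\{\hat y = y \wedge y \notin \Ccal_{\lambda}(x)\} = \mathbbm{1}\{\hat y = y\}\cdot\mathbbm{1}\{y \notin \Ccal_{\lambda}(x)\}$. The factor $\mathbbm{1}\{\hat y = y\}$ is a constant in $\{0,1\}$ with respect to $\lambda$, so it suffices to prove that $g(\lambda) := \mathbbm{1}\{y \notin \Ccal_{\lambda}(x)\}$ is non-increasing and right-continuous; multiplication by a nonnegative constant preserves both properties.

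For monotonicity, I would first argue that the prediction sets are nested in $\lambda$. Inspecting Eq.~\ref{eq:set-valued-predictor}, increasing $\lambda$ lowers the inclusion threshold $1-\lambda$, so every indicator $\mathbbm{1}\{m_{y_{(j)}}(x) \ge 1-\lambda\}$ is non-decreasing in $\lambda$; hence the count $k$ is non-decreasing and, because labels are ranked by decreasing softmax score, $\Ccal_{\lambda}(x)$ is a prefix of the ranking whose length only grows. Consequently $\Ccal_{\lambda}(x) \subseteq \Ccal_{\lambda'}(x)$ whenever $\lambda \le \lambda'$, so $\mathbbm{1}\{y \in \Ccal_{\lambda}(x)\}$ is non-decreasing and therefore $g(\lambda)$ is non-increasing.

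For right-continuity, I would compute the exact threshold at which $y$ enters the set. If $y$ is the top-ranked label, it lies in $\Ccal_{\lambda}(x)$ for every $\lambda$ and $g \equiv 0$ is trivially right-continuous. Otherwise $y = y_{(j)}$ for some $j \ge 2$, and membership holds iff $m_{y}(x) \ge 1-\lambda$, i.e.\ iff $\lambda \ge 1-m_{y}(x)$; thus $\mathbbm{1}\{y \in \Ccal_{\lambda}(x)\} = \mathbbm{1}\{\lambda \ge 1-m_{y}(x)\}$ and $g(\lambda) = \mathbbm{1}\{\lambda < 1-m_{y}(x)\}$. This is a single downward step located at $\lambda^\star = 1-m_{y}(x)$, and because the defining inequality in Eq.~\ref{eq:set-valued-predictor} is non-strict, the value at $\lambda^\star$ coincides with the right-hand limit $\lim_{\lambda \downarrow \lambda^\star} g(\lambda) = 0$, so $g$ is right-continuous (it is the left limit that disagrees). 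Multiplying by $\mathbbm{1}\{\hat y = y\}$ then transfers both properties to $h_{\lambda}$.

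The only real subtlety, and the step I would be most careful about, is getting the direction of continuity right: the non-strict $\ge$ in Eq.~\ref{eq:set-valued-predictor} makes the membership indicator closed on the right at its jump, which is precisely the form of right-continuity that Theorem~\ref{thm:harm-control-counterfactual-mon} invokes; had the inequality been strict, the function would instead be left-continuous. I would also record that the step-function description holds on all of $[0,1]$, so the edge cases $\lambda = 0$ and $\lambda = 1$ require no separate treatment.
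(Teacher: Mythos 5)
Your proof is correct and follows essentially the same route as the paper's: both reduce to the closed form $h_{\lambda}(x,y,\hat y)=\mathbbm{1}\{\hat y = y \wedge y \notin \Ccal_{\lambda}(x)\}$ from Proposition~\ref{prop:harm-counterfactual-monotonicity} and observe that the membership indicator is a single downward step in $\lambda$ whose jump is closed on the right because of the non-strict inequality in Eq.~\ref{eq:set-valued-predictor}. If anything, you are more explicit than the paper, which only identifies the entry threshold as an infimum $\tilde{\lambda}$ and leaves right-continuity largely implicit, whereas you compute the threshold as $1-m_{y}(x)$ and verify the right-hand limit directly.
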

\begin{proof}    
        If $y \in \Ccal_{\lambda}(x)$ $\forall  \lambda \in \Lcal $ then $h_{\lambda}(x,y,\hat{y})= 0, \forall \lambda \in \Lambda$ given Eq.~\ref{eq:harm-counterfactual-mon}.
        If $\exists \Tilde{\lambda} \in \Lambda$ such that $\Tilde{\lambda} = \inf\{\lambda \in \Lambda: y \in \Ccal_{\lambda}(x)\}$, then $\forall \lambda \geq \Tilde{\lambda}$, it holds that $y \in \Ccal_{\lambda}(x)$, and as a result $h_{\lambda}(x,y,\hat{y}) = 0$. For $\forall \lambda' < \Tilde{\lambda}$, it holds that $y \notin \Ccal_{\lambda'}(x)$ and as a result Eq.~\ref{eq:harm-counterfactual-mon} becomes
        \begin{equation}
            h_{\lambda'}(x,y,\hat{y}) = \mathbbm{1}\{\hat{y} = y\}\cdot 1 = \mathbbm{1}\{\hat{y} = y\} \geq 0 = h_{\lambda}(x,y,\hat{y}),
        \end{equation}
    where $\mathbbm{1}\{\hat{y} = y\}$ is constant $\forall \lambda' < \Tilde{\lambda}$. 
\end{proof}

\begin{lemma}
    \label{lem:non-decreasing}
    For a given data sample $x$ with label $y$, let $G: \Lcal \rightarrow \{0,1\}$ be the function $G(\lambda) = \mathbbm{1}\{\hat{Y} \neq y \wedge y \in \Ccal_{\lambda}(x)\}$. Under interventional monotonicity, $G$ is non-decreasing in $\lambda$.
\end{lemma}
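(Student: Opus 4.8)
The plan is to reduce the claim to the single fact that the membership indicator $\mathbbm{1}\{y \in \Ccal_{\lambda}(x)\}$ is non-decreasing in $\lambda$, and then to factor $G$ so that this is the only $\lambda$-dependent quantity. The main work is therefore to establish nestedness of the prediction sets produced by Eq.~\ref{eq:set-valued-predictor}; the rest is an immediate factorization.

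First I would record the nestedness of the set-valued predictor. For a fixed $x$, the ranking $y_{(1)}, \ldots, y_{(L)}$ induced by the softmax scores $m_{y}(x)$ does not depend on $\lambda$, and $\Ccal_{\lambda}(x)$ consists of the top $k(\lambda)$ labels in this ranking, where $k(\lambda) = 1 + \sum_{j=2}^{L}\mathbbm{1}\{m_{y_{(j)}}(x) \geq 1-\lambda\}$. Since $1-\lambda$ is non-increasing in $\lambda$, each indicator $\mathbbm{1}\{m_{y_{(j)}}(x)\geq 1-\lambda\}$ is non-decreasing in $\lambda$; hence $k(\lambda)$ is non-decreasing, and therefore $\lambda \leq \lambda'$ implies $\Ccal_{\lambda}(x) \subseteq \Ccal_{\lambda'}(x)$. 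In particular, once $y$ belongs to the set it remains in it for every larger threshold, so $\mathbbm{1}\{y \in \Ccal_{\lambda}(x)\}$ is non-decreasing in $\lambda$.

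Next, for fixed $x, y$ and a fixed realization of the factual prediction $\hat{Y}$, I would write $G(\lambda) = \mathbbm{1}\{\hat{Y}\neq y\}\cdot\mathbbm{1}\{y \in \Ccal_{\lambda}(x)\}$. The first factor is a constant in $\{0,1\}$ independent of $\lambda$, while the second is non-decreasing by the previous step. Multiplying a non-decreasing $\{0,1\}$-valued function by a non-negative constant preserves monotonicity, so $G$ is non-decreasing. Concretely, if $\hat{Y} = y$ then $G \equiv 0$; otherwise $G(\lambda) = \mathbbm{1}\{y \in \Ccal_{\lambda}(x)\}$ inherits the monotonicity directly.

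I expect no genuine obstacle here: the only substantive step is the nestedness claim, and everything else is a one-line factorization. It is worth noting that, unlike Lemma~\ref{lem:non-increasing}, this argument does not actually invoke the interventional monotonicity assumption, since $G$ is defined directly as the indicator $\mathbbm{1}\{\hat{Y}\neq y \wedge y \in \Ccal_{\lambda}(x)\}$ rather than through the counterfactual harm; the assumption appears in the statement only because the lemma is applied under it in the proof of Theorem~\ref{thm:conformal-cf-benefit-control-cI}. The one mild subtlety is to keep $\hat{Y}$ fixed as a realization while varying $\lambda$, consistent with the ``random function'' viewpoint used in that proof.
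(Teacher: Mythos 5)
Your proof is correct and follows essentially the same route as the paper's: both arguments rest on the nestedness $\Ccal_{\lambda}(x) \subseteq \Ccal_{\lambda'}(x)$ for $\lambda \leq \lambda'$ together with the fact that $\mathbbm{1}\{\hat{Y} \neq y\}$ is a fixed constant (the paper phrases this as a three-way case analysis on the membership of $y$, you phrase it as a factorization, which is the same content). You go slightly further in two harmless ways: you actually derive nestedness from Eq.~\ref{eq:set-valued-predictor}, which the paper merely asserts, and you correctly observe that the interventional monotonicity assumption is never used in the argument --- the paper's own proof does not invoke it either.
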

\begin{proof}
    
 Let $\lambda, \lambda' \in \Lcal$  such that $\lambda \leq \lambda'$. Then it will always hold that $\Ccal_{\lambda}(x) \subseteq \Ccal_{\lambda'}(x)$. We distinguish the following cases for $\Ccal_{\lambda}(x), \Ccal_{\lambda'}(x)$:
    \begin{itemize}
        \item $y \notin \Ccal_{\lambda'}(x)$. Then $G(\lambda) = G(\lambda') = 0$.
        \item $y \notin \Ccal_{\lambda}(x)$ and $y \in \Ccal_{\lambda'}(x)$. Then, $G(\lambda) = 0$ and $G(\lambda') =  \mathbbm{1}\{\hat{Y} \neq y \} \geq 0 = G(\lambda)$
        \item $y \in \Ccal_{\lambda}(x) \subseteq \Ccal_{\lambda'}(x)$. Then, $G(\lambda) = G(\lambda') = \mathbbm{1}\{\hat{Y} \neq y \}$. 
    \end{itemize}
\end{proof}


\clearpage
\newpage

\section{Additional experimental details}\label{app:additional-details}
In this section, we describe the experimental details that are omitted from the main paper due to space constraints.

\xhdr{Mixture of multinomial logit models}
Multinomial logit models (MNLs)~\cite{heiss2016discrete} are among the most popular discrete choice models used to predict the probability that a human chooses an alternative, given a set of alternatives within a particular \textit{context}~\cite{tversky1993context}.
Straitouri et al.~\cite{straitouri2023improving} used MNLs to predict the probability that a human expert predicts a label (\eg, the ground truth label),  given a prediction set, while considering as context the ground truth label and the level of difficulty of the image.
To this end, for images with the same level of difficulty, they estimate the confusion matrix of the expert predictions on their own and use it to parameterize an MNL model.
Therefore, their mixture of MNLs comprises of a different MNL model for each level of difficulty. 
To distinguish images based on their level of difficulty,  
they consider different quantiles of the experts'{} average accuracy values of all images.
In our experiments, we consider two levels of difficulty,
where in the first level we consider images for which the average accuracy of experts is smaller than the $0.5$ quantile, and consider the rest of the images in the second level. 

\xhdr{Labels of the ImageNet16H dataset} The ImageNet16H~\cite{steyvers2022bayesian} was created using $1{,}200$ unique images labeled into $207$ different fine-grained categories from the ILSRVR 2012 dataset~\cite{russakovsky2015imagenet}. 
Steyvers et al.~\cite{steyvers2022bayesian} mapped each of these fine-grained categories into one out of $16$ coarse-grained categories---namely airplane, bear, bicycle, bird, boat, bottle, car, cat, chair, clock, dog, elephant, keyboard, knife, oven, and truck---that serve as the ground truth labels. 
This mapping essentially eliminated any label disagreement that could potentially occur between annotators in the ILSRVR 2012 dataset.   

\xhdr{Implementation details and licenses}
We implement our methods and execute our experiments using {\tt Python $3.10.9$}, along with the open-source libraries {\tt NumPy 1.26.4} (BSD License), and {\tt Pandas 2.2.1} (BSD 3-Clause License).
For reproducibility, we used a different fixed random seed for each random sampling of the test and calibration sets. Both datasets used in our experiments are distributed under the Creative Commons Attribution License 4.0 (CC BY 4.0).

\clearpage
\newpage

\section{Additional experimental results on counterfactual monotonicity}\label{app:additional-results-counterfactual}
In this section, we evaluate our methodology on additional strata of images and on larger calibration sets, and study the relationship between average counterfactual harm, average prediction set size and empirical coverage. 

\xhdr{Other strata of images} Figures~\ref{fig:acc_vs_harm_mnl_noise80} and~\ref{fig:acc_vs_harm_mnl_noise95}  show the average accuracy $A(\lambda)$ achieved by a human expert using $\Ccal_{\lambda}$, as predicted by the mixture of MNLs, against the average counterfactual harm $H(\lambda)$ caused by $\Ccal_{\lambda}$ for $\alpha \in \{0.01, 0.05\}$ on the strata of images with $\omega = 80$ and $\omega = 95$, respectively.
The meaning and coloring of each point are the same as in Figure~\ref{fig:acc_vs_harm_mnl}.
The results on both strata of images are consistent with the results on the stratum of images with $\omega  = 110$---the decision support systems $\Ccal_{\lambda}$ always cause some amount of counterfactual harm, our framework successfully identifies harm-controlling sets and there is a trade-off between accuracy and counterfactual harm. 
However, for the strata with $\omega \in \{80, 95\}$, where the classification task has lower difficulty compared to the stratum with $\omega=110$, we observe that the trade-off between accuracy and counterfactual is minimal.
\begin{figure}[ht]
    \centering
    \subfloat[$\alpha = 0.01$]{
    \includegraphics[width=.45\linewidth, valign=t]{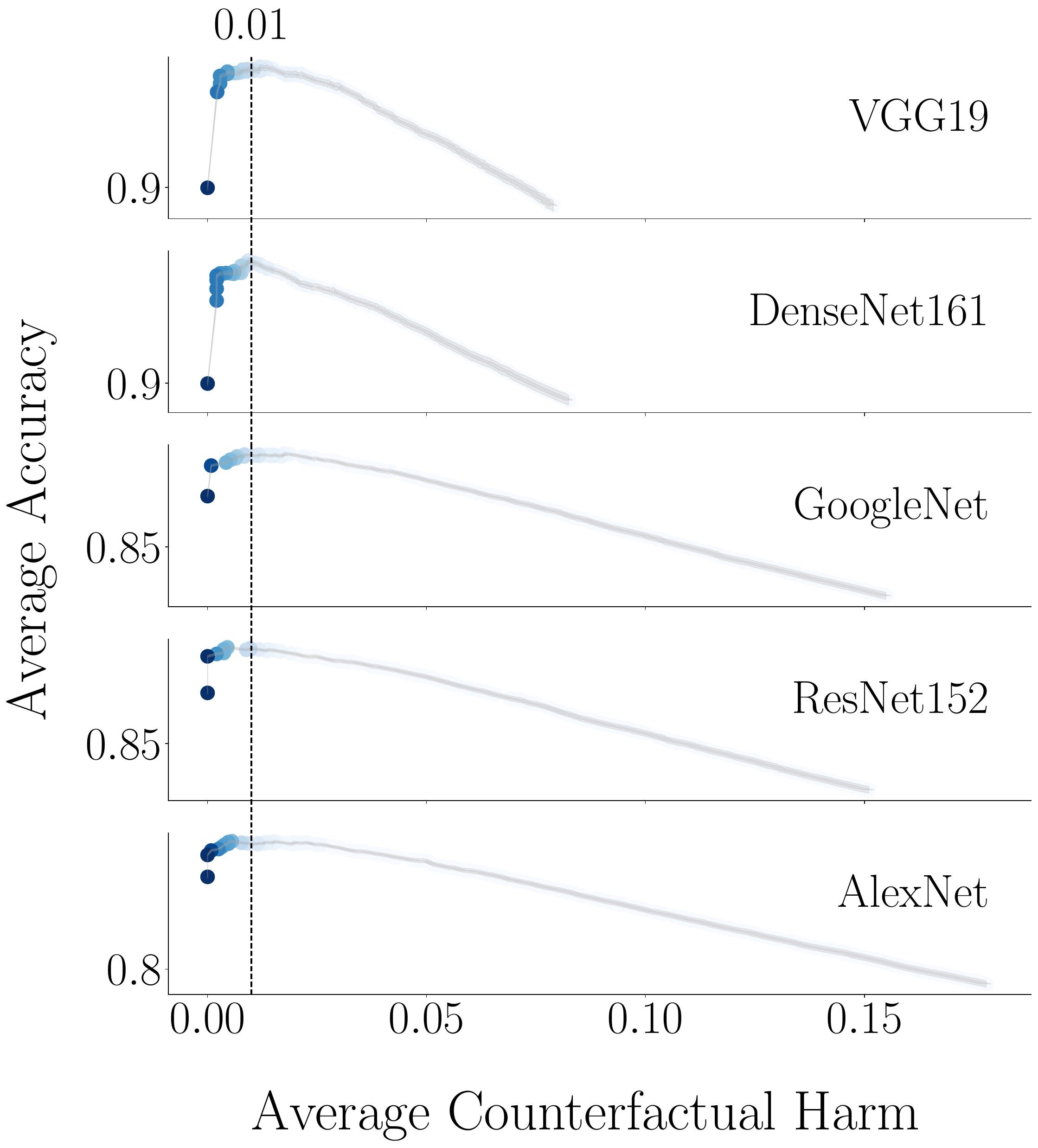}
    }
    \hspace{-0.5mm}
    \subfloat[$\alpha=0.05$]{
    \includegraphics[width=.45\linewidth, valign=t]{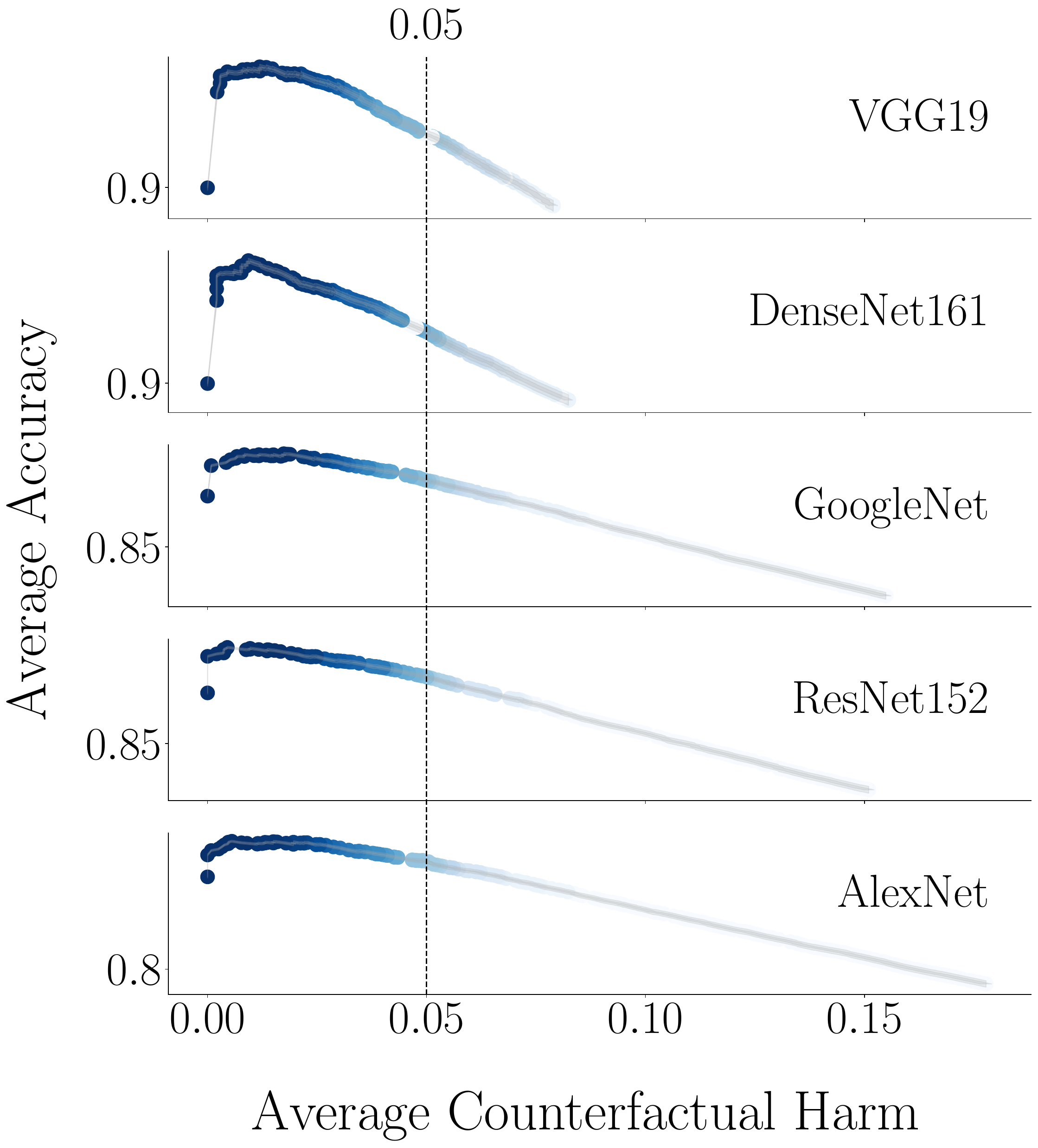}}
    \includegraphics[scale=0.203, valign=t]{figures/colorbar_big.pdf}
     \caption{Average accuracy estimated by the mixture of MNLs against the average counterfactual harm for images with $\omega = 80$.
    Each point corresponds to a $\lambda$ value from $0$ to $1$ with step $0.001$ and the coloring indicates the relative frequency with which each $\lambda$ value is in $\Lambda(\alpha)$ across random samplings of the calibration set.
    Each row corresponds to decision support systems $\Ccal_{\lambda}$ with a different pre-trained classifier with average accuracies   $0.891$~(VGG19),  $0.892$~(DenseNet161),  $0.802$~(GoogleNet), $0.804$~(ResNet152), and $0.784$~(AlexNet).
    The average accuracy achieved by human experts on their own is $0.9$.
    The results are averaged across $50$ random samplings of the test and calibration set.
    In both panels, $95\%$ confidence intervals have width always below $0.02$ and are represented using shaded areas.
    }
    \label{fig:acc_vs_harm_mnl_noise80}
    \vspace{-3mm}
\end{figure}
\clearpage
\newpage
\begin{figure}[t]
    \centering
    \subfloat[$\alpha = 0.01$]{
    \includegraphics[width=.45\linewidth, valign=t]{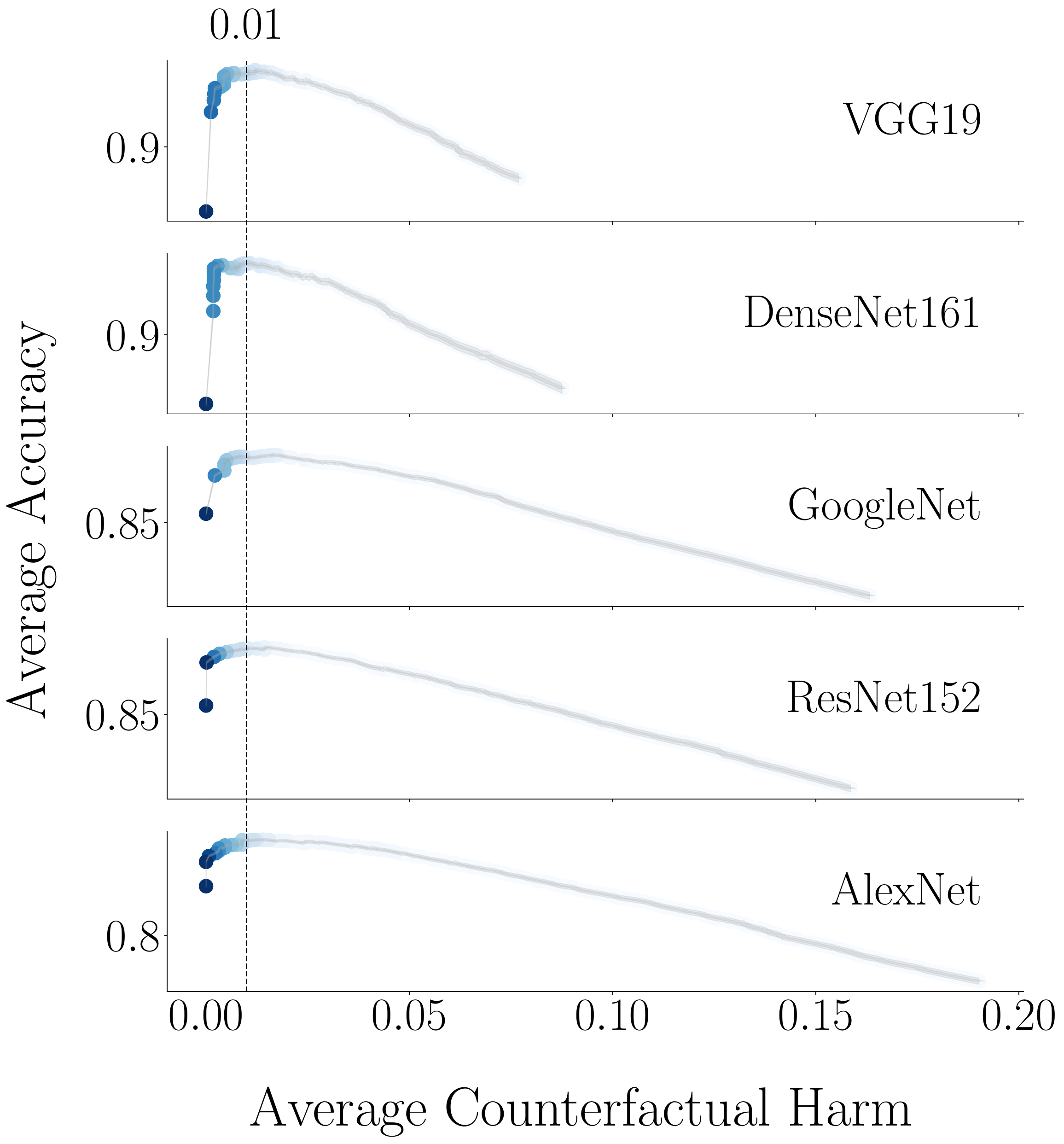}
    }
    \hspace{-0.5mm}
    \subfloat[$\alpha=0.05$]{
    \includegraphics[width=.45\linewidth, valign=t]{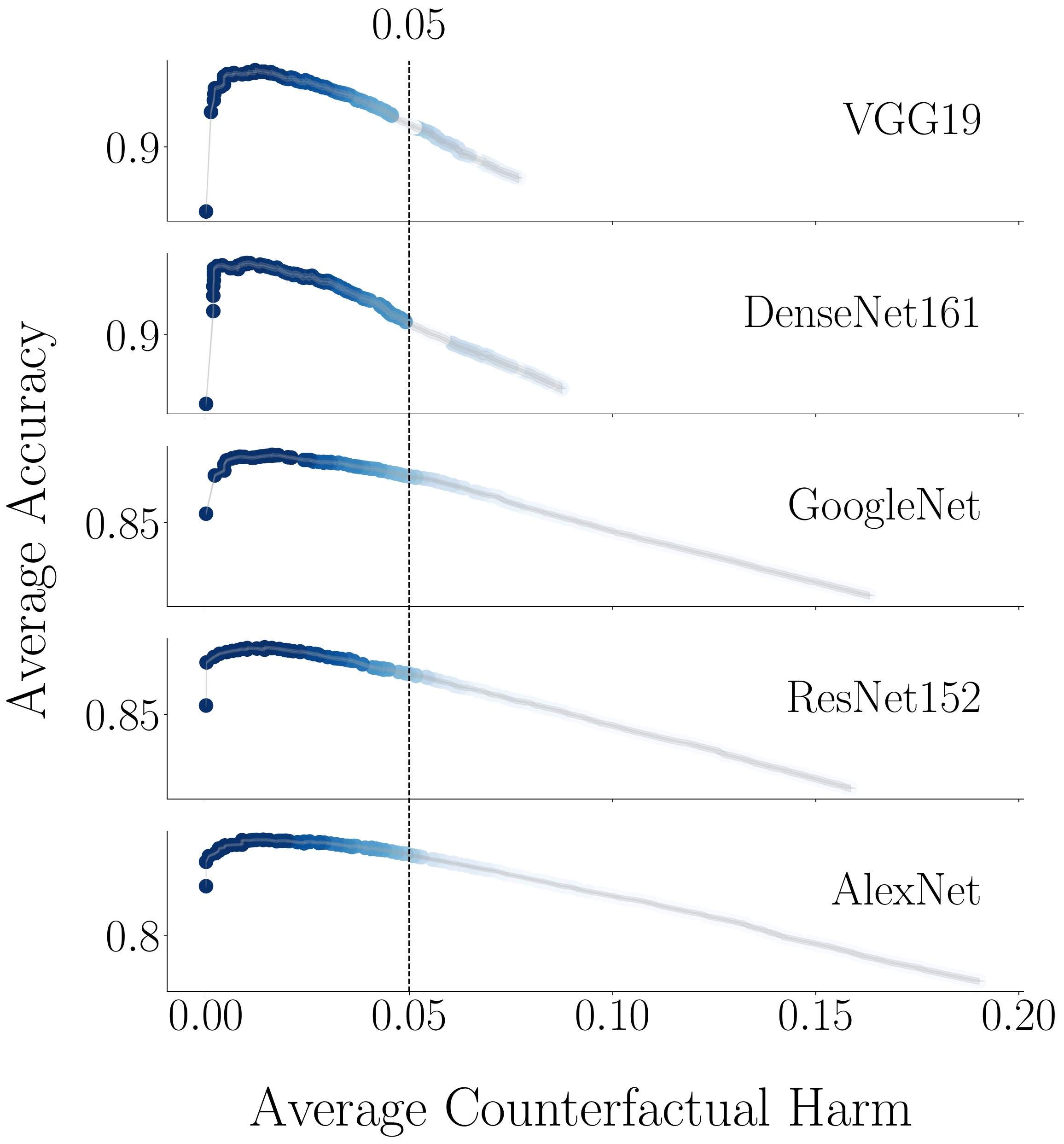}}
    \includegraphics[scale=0.197, valign=t]{figures/colorbar_big.pdf}
 \caption{Average accuracy estimated by the mixture of MNLs against the average counterfactual harm for images with $\omega = 95$.
    Each point corresponds to a $\lambda$ value from $0$ to $1$ with step $0.001$ and the coloring indicates the relative frequency with which each $\lambda$ value is in $\Lambda(\alpha)$ across random samplings of the calibration set.
    Each row corresponds to decision support systems $\Ccal_{\lambda}$ with a different pre-trained classifier with average accuracies   $0.88$~(VGG19),  $0.868$~(DenseNet161),  $0.775$~(GoogleNet), $0.773$~(ResNet152), and $0.745$~(AlexNet).
    The average accuracy achieved by human experts on their own is $0.86$.
    The results are averaged across $50$ random samplings of the test and calibration set.
    In both panels, $95\%$ confidence intervals have width always below $0.02$ and are represented using shaded areas.
    }
    \label{fig:acc_vs_harm_mnl_noise95}
    \vspace{-3mm}
\end{figure}


\xhdr{Larger calibration sets} Figure~\ref{fig:acc_vs_harm_mnl_cal_sizes_110} shows the average accuracy $A(\lambda)$ achieved by a human expert using $\Ccal_{\lambda}$, as predicted by the mixture of MNLs, against the average counterfactual harm $H(\lambda)$ caused by $\Ccal_{\lambda}$ for $\alpha = 0.01$ on the stratum of images with
$\omega=110$. 
The results show that the larger the calibration set, the more often $\lambda$ values with average counterfactual harm $H(\lambda)$ close to $\alpha$\footnote{Here, note that for $\alpha = 0.01$, these $\lambda$ values happen to achieve the highest average accuracy $A(\lambda)$.} are included in the harm-controlling set $\Lambda(\alpha)$.  
This is because, the larger the number of samples in the calibration set ($n$), the lower the estimation error in the empirical estimate of the average counterfactual harm $\hat{H}_{n}$. 
We find qualitatively similar results for the other strata of images.

\begin{figure}[ht]
    \centering
    \subfloat[Calibration samples $n = 360$ $(30\%)$]{
    \label{fig:0.3cal_set_110}
    \includegraphics[width=.45\linewidth, valign=t]{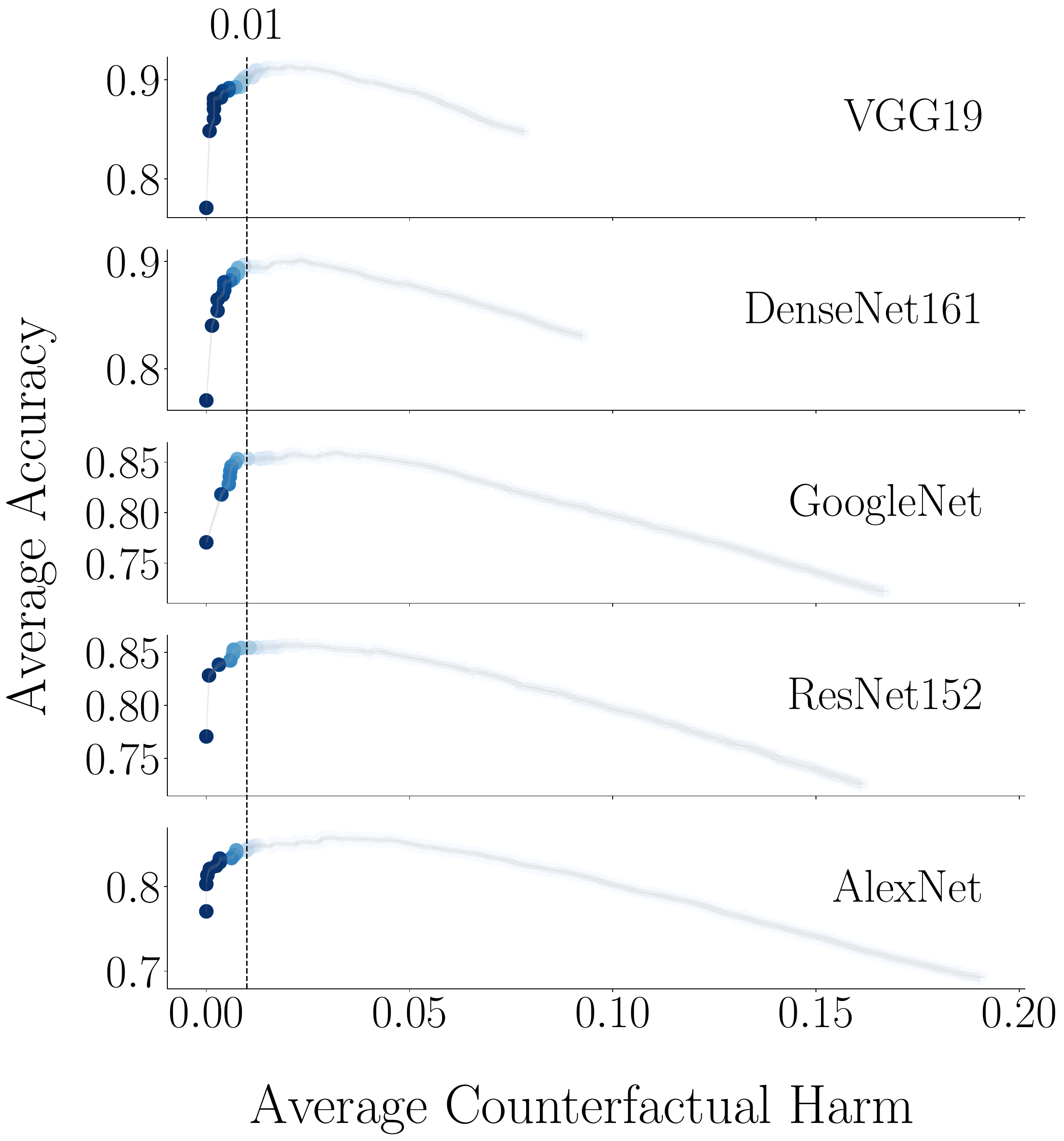}
    }
    \subfloat[Calibration samples $n = 600$ $(50\%)$]{
    \label{fig:0.5cal_set_110}
    \includegraphics[width=.45\linewidth, valign=t]{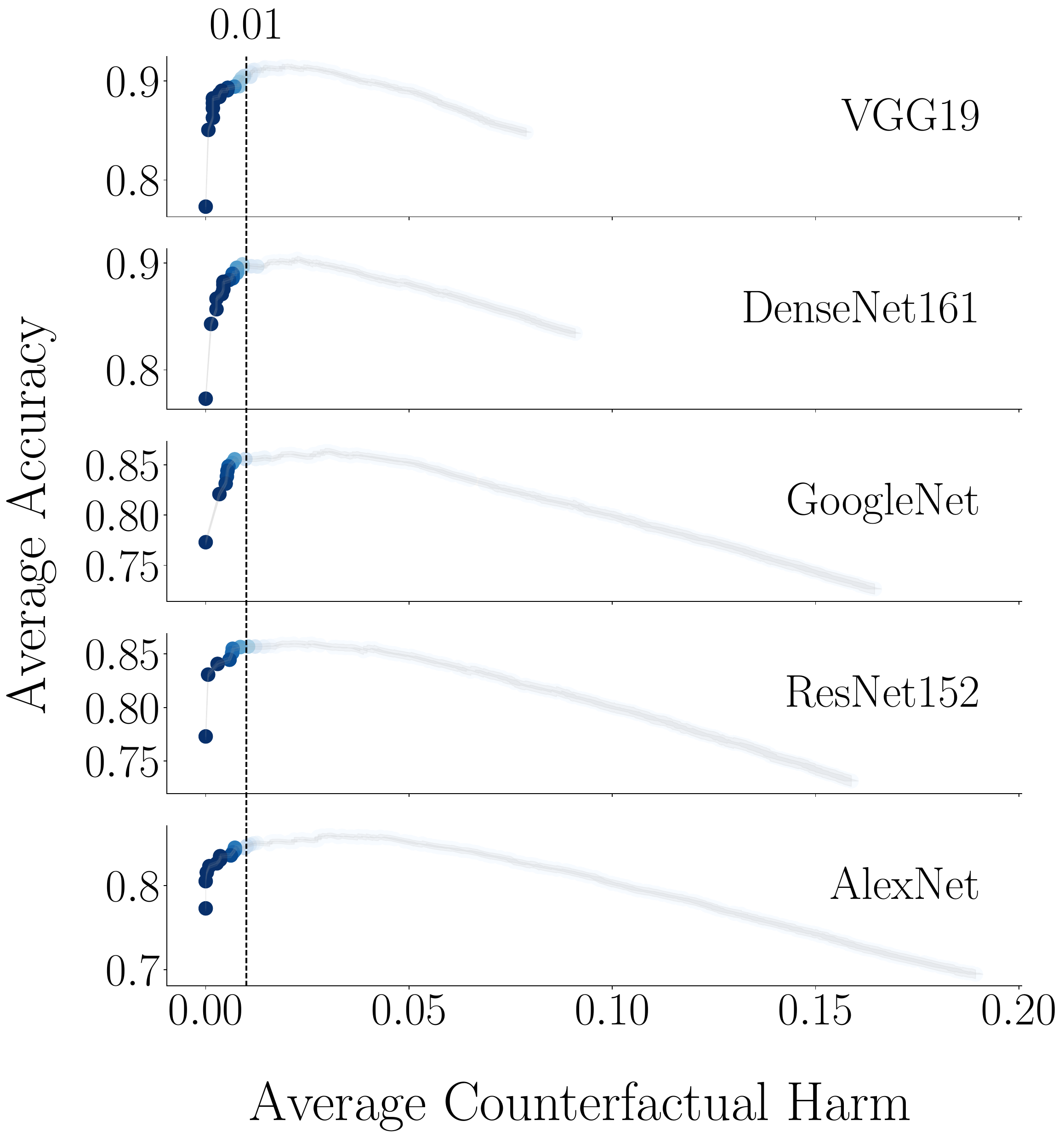}}
    \includegraphics[scale=0.196, valign=t]{figures/colorbar_big.pdf}
    \caption{Average accuracy estimated by the mixture of MNLs against the average counterfactual harm for images with $\omega = 110$ and different sizes of the calibration set.
    Each point corresponds to a $\lambda$ value from $0$ to $1$ with step $0.001$ and the coloring indicates the relative frequency with which each $\lambda$ value is in $\Lambda(\alpha)$ across random samplings of the calibration set.
    Each row corresponds to decision support systems $\Ccal_{\lambda}$ with a different pre-trained classifier with average accuracies   \vgg~(VGG19),  \densenet~(DenseNet),  \googlenet~(GoogleNet), \resnet~(ResNet152), and \alexnet~(AlexNet).
    The average accuracy achieved by the simulated human experts on their own is $0.771$.
    The results are averaged across $50$ random samplings of the test and calibration set.
    In both panels, $95\%$ confidence intervals are represented using shaded areas and always have width below $0.02$.
    }
    \label{fig:acc_vs_harm_mnl_cal_sizes_110}
    \vspace{-3mm}
\end{figure}

\xhdr{Average prediction set-size and empirical coverage}
Figure~\ref{fig:set-size-coverage-vs-harm-110}(a) shows the average prediction set-size of each decision support system $\Ccal_{\lambda}$ against the average counterfactual harm $H(\lambda)$ caused by $\Ccal_{\lambda}$ on the stratum of images with $\omega=110$.
Figure~\ref{fig:set-size-coverage-vs-harm-110}(b) shows the empirical coverage, \ie, the fraction of samples $x$ in the test set, for which  ground truth label $y \in \Ccal_{\lambda}(x)$, for each decision support system $\Ccal_{\lambda}$ against the average counterfactual harm $H(\lambda)$ caused by $\Ccal_{\lambda}$ on the same stratum of images. 
The results show that systems with higher coverage construct larger prediction sets in average and cause less counterfactual harm. We find similar results for the other strata of images.      

\begin{figure}[ht]
    \centering
    \subfloat[Average Set-Size]{
    \label{fig:set-size-110}
    \includegraphics[width=.45\linewidth, valign=t]{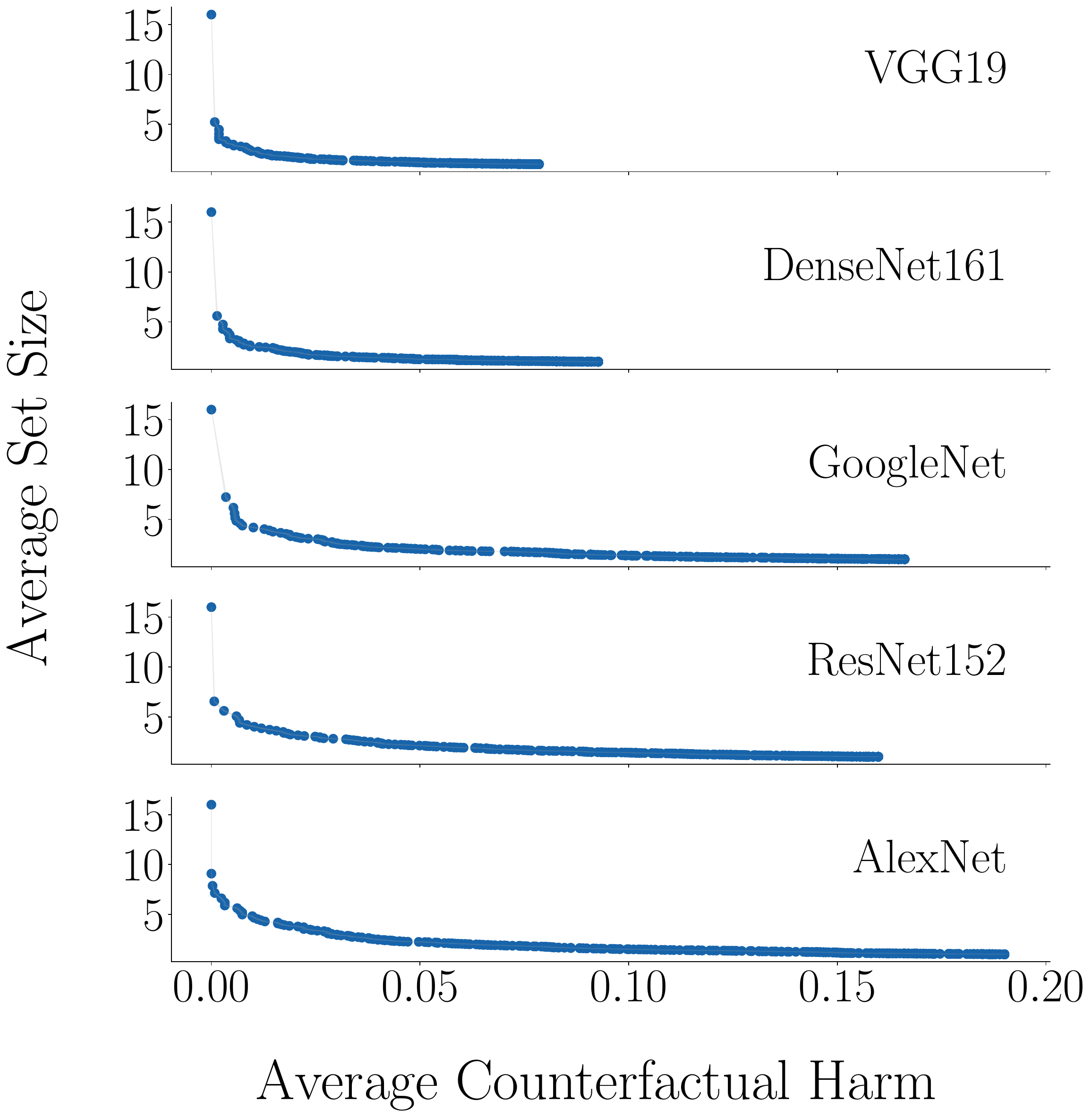}
    }
    \subfloat[Empirical Coverage]{
    \label{fig:coverage-110}
    \includegraphics[width=.45\linewidth, valign=t]{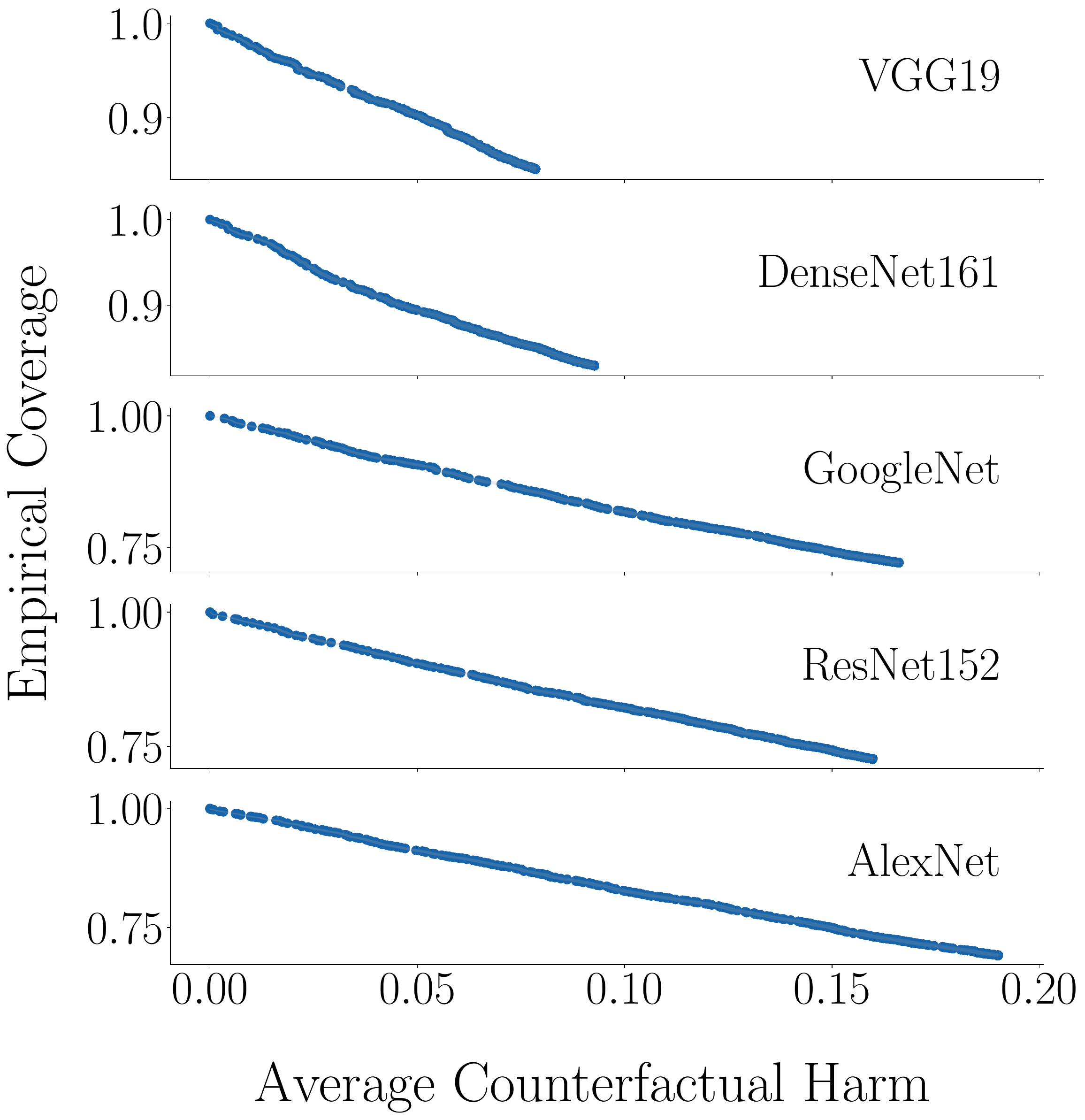}}
    \caption{Average set size and empirical coverage against the average counterfactual harm for images with $\omega = 110$.
    Each point corresponds to a $\lambda$ value from $0$ to $1$ with step $0.001$.
    Each row corresponds to decision support systems $\Ccal_{\lambda}$ with a different pre-trained classifier with average accuracies as in Figure~\ref{fig:acc_vs_harm_mnl_cal_sizes_110} above.
    The average accuracy achieved by the simulated human experts on their own is $0.771$.
    The results are averaged across $50$ random samplings of the test and calibration set.
    In both panels, $95\%$ confidence intervals are represented using shaded areas and always have width below $0.025$.
    }
    \label{fig:set-size-coverage-vs-harm-110}
    \vspace{-3mm}
\end{figure}

\clearpage
\newpage

\section{Experimental results on counterfactual monotonicity with a more complex set-valued predictor} \label{app:saps}
In this section, we experiment with a system that constructs the prediction set $\Ccal(x)$ using a more complex set-valued predictor~\cite{huang2023conformal}. 
This set-valued predictor first computes the following nonconformity score for each label value $y \in \Ycal$ using the softmax output of a pre-trained classifier $m_{y}(x) \in [0,1]$:
\begin{align}
    s(y, x, u; w) = \left\{ 
    \begin{array}{cc}
         u \cdot m_{y_{(1)}} (x),&  \text{if } y = y_{(1)}\\
         m_{y_{(1)}} + (i-2+u)\cdot w, & \text{if } y = y_{(i)} 
    \end{array}
    \right. , 
\end{align}
where  $y_{(i)}$ denotes the label value with the $i$-th largest classifier output, $u \sim U(0,1)$ is a uniform random variable and $w$ is a hyperparameter representing the weight of the ranking of the label value. 
Then, given a user-specified threshold $\lambda \in \RR$, it constructs the prediction sets $\Ccal(x) = \Ccal_{\lambda}(x)$ as follows: 
\begin{equation}
\label{eq:saps-set-valued-predictor}
    \Ccal_{\lambda}(x) = \{y_{(L-i)}\}_{i=0}^{k-1},\text{ with }k = 1 + \sum_{j=1}^{L-1} \mathbbm{1}\{s(y_{(j)}, x, u; w) \leq \lambda\},
\end{equation}
where, with a slight abuse of notation, $y_{(i)}$ is the label value with the $i$-th largest $s(y,x, u; w)$ value. Here, we should note that may exist data samples with $s(y,x, u; w) > 1$. Therefore, considering $\lambda \in [0,1]$ may not include every set-valued predictor given by Eq.~\ref{eq:saps-set-valued-predictor}. For this reason, in what follows, we consider $\lambda \in [0, \max s(y, x, u, w)]$, where the maximum is essentially over the classifier output $m_{y}(x)$, the uniform random variable $u$ and the hyperparameter $w$.  

\xhdr{Experimental setup} We use the ImageNet16H dataset and follow the same setup as described in Section~\ref{sec:experiments}, with only difference that we split the images into a calibration set (10\%), which we use to find $\Lambda(\alpha)$, a validation set $(10\%)$, which we use to select the value of the hyperparameter $w$ following a similar procedure as in Huang et al.~\cite{huang2023conformal}, and a test set (80\%). In each iteration of the experiment, given a $\lambda$ value, we select the value of $w \in \{0.02, 0.05, 0.1, 0.15, 0.2, 0.25, 0.3, 0.35\}$ for which the set-valued predictor achieves the smallest average prediction set-size over the data samples in the validation set. 

\xhdr{Results} 
%
The meaning and coloring of each point are the same as in Figure~\ref{fig:acc_vs_harm_mnl}.
The results on all strata of images are consistent with the results using the set-valued predictor given by Eq.~\ref{eq:set-valued-predictor}---the decision support systems $\Ccal_{\lambda}$ always cause some amount of counterfactual harm, our framework successfully identifies harm-controlling sets and there is a trade-off between accuracy and counterfactual harm. 

Figure~\ref{fig:saps-set-size-coverage-vs-harm-110}(a) shows the average prediction set-size of each decision support system $\Ccal_{\lambda}$ against the average counterfactual harm $H(\lambda)$ caused by $\Ccal_{\lambda}$ on the stratum of images with $\omega=110$. 
Figure~\ref{fig:saps-set-size-coverage-vs-harm-110}(b) shows the empirical coverage for each decision support system $\Ccal_{\lambda}$ against the average counterfactual harm $H(\lambda)$ caused by $\Ccal_{\lambda}$ on the stratum of images with $\omega=110$.
Our results are consistent with the results for the set-valued predictor given by Eq.~\ref{eq:set-valued-predictor} in Figure~\ref{fig:set-size-coverage-vs-harm-110}---systems $\Ccal_{\lambda}$ that achieve higher coverage, construct larger prediction sets on average and cause less harm.

\begin{figure}[ht]
    \centering
    \subfloat[$\alpha = 0.01$]{
    \includegraphics[width=.45\linewidth, valign=t]{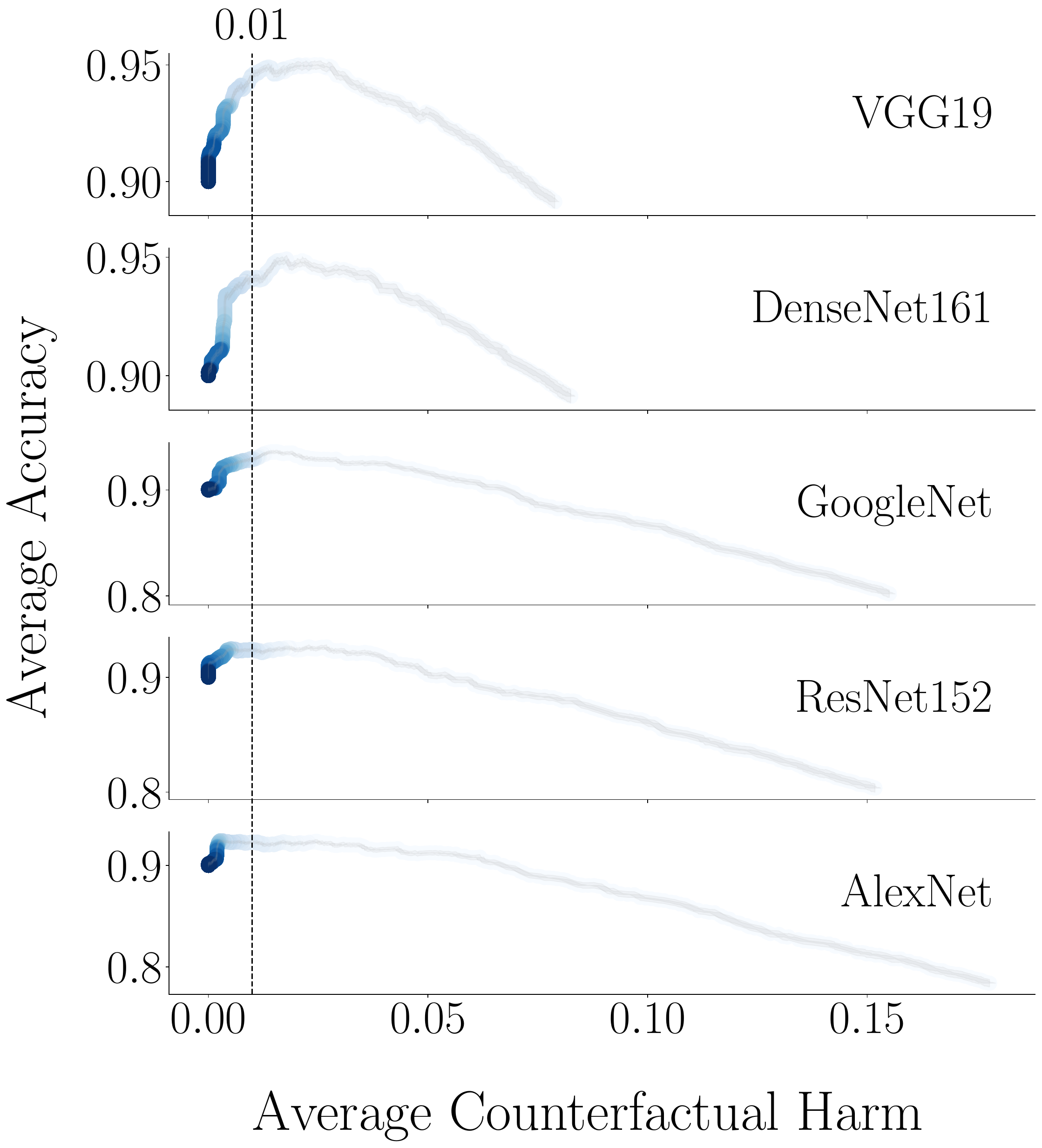}
    }
    \hspace{-0.5mm}
    \subfloat[$\alpha=0.05$]{
    \includegraphics[width=.45\linewidth, valign=t]{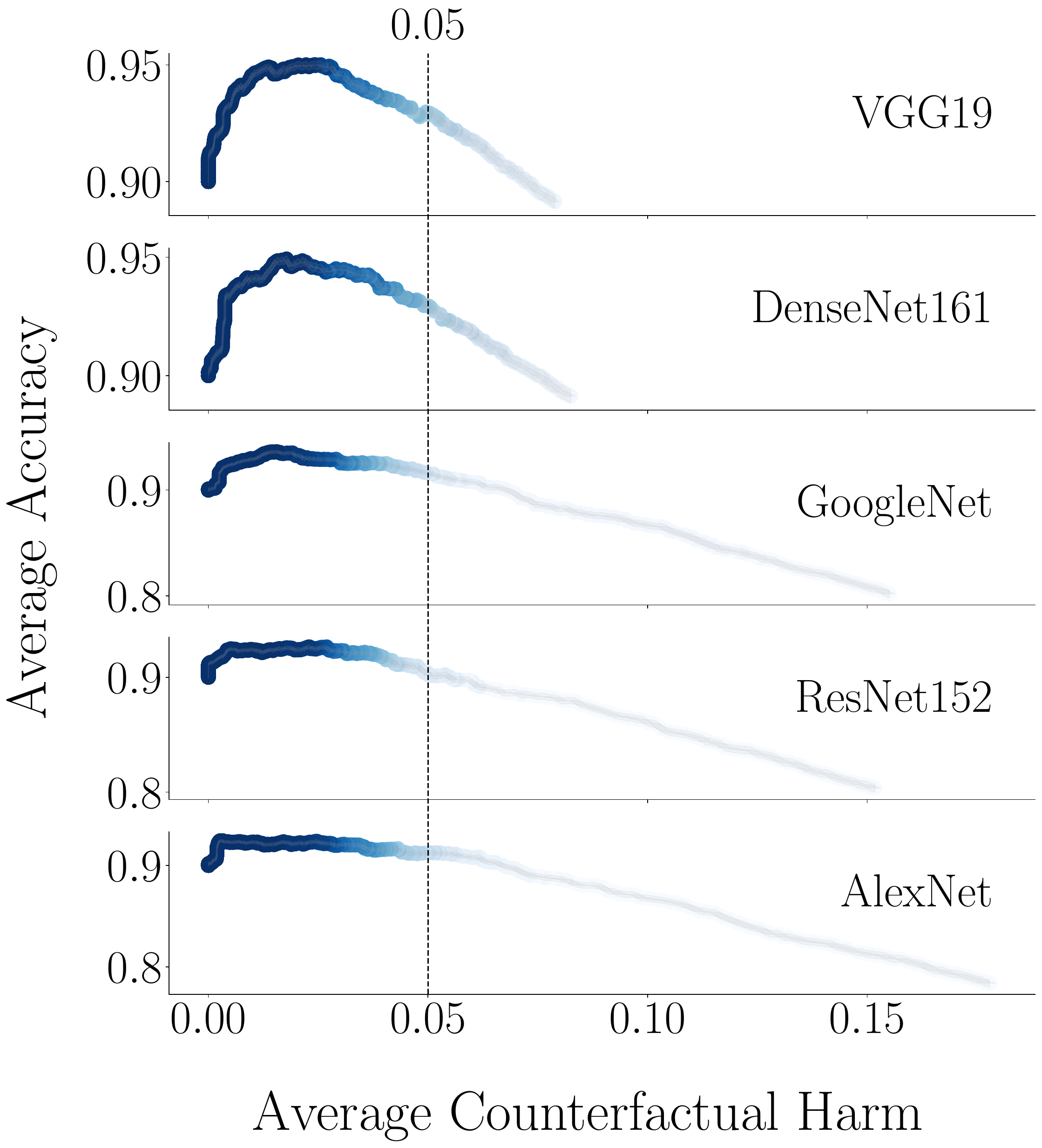}}
    \includegraphics[scale=0.202, valign=t]{figures/colorbar_big.pdf}
     \caption{Average accuracy estimated by the mixture of MNLs against the average counterfactual harm for images with $\omega = 80$.
    Each point corresponds to a $\lambda$ value from $0$ to $6.25$ with step $0.00625$ and the coloring indicates the relative frequency with which each $\lambda$ value is in $\Lambda(\alpha)$ across random samplings of the calibration set.
    Each row corresponds to decision support systems $\Ccal_{\lambda}$ with a different pre-trained classifier with average accuracies   $0.891$~(VGG19),  $0.892$~(DenseNet161),  $0.802$~(GoogleNet), $0.804$~(ResNet152), and $0.784$~(AlexNet).
    The average accuracy of human experts on their own is $0.9$.
    The results are averaged across $50$ random samplings of the test and calibration set.
    In both panels, $95\%$ confidence intervals have width always below $0.02$ and are represented using shaded areas.
    }
    \label{fig:saps_acc_vs_harm_mnl_noise80}
    \vspace{-3mm}
\end{figure}
\begin{figure}[t!]
    \centering
    \subfloat[$\alpha = 0.01$]{
    \includegraphics[width=.45\linewidth, valign=t]{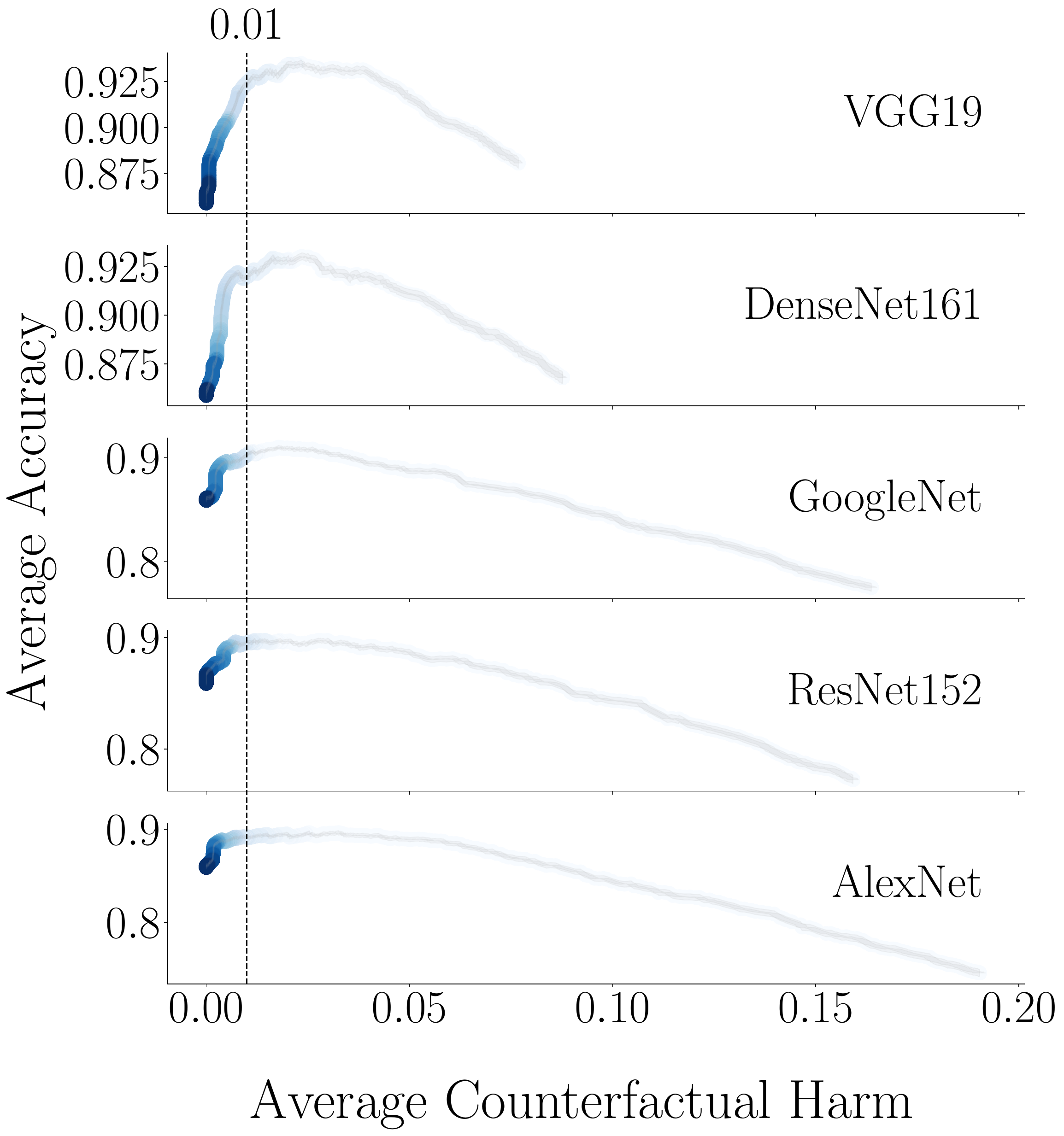}
    }
    \hspace{-0.5mm}
    \subfloat[$\alpha=0.05$]{
    \includegraphics[width=.45\linewidth, valign=t]{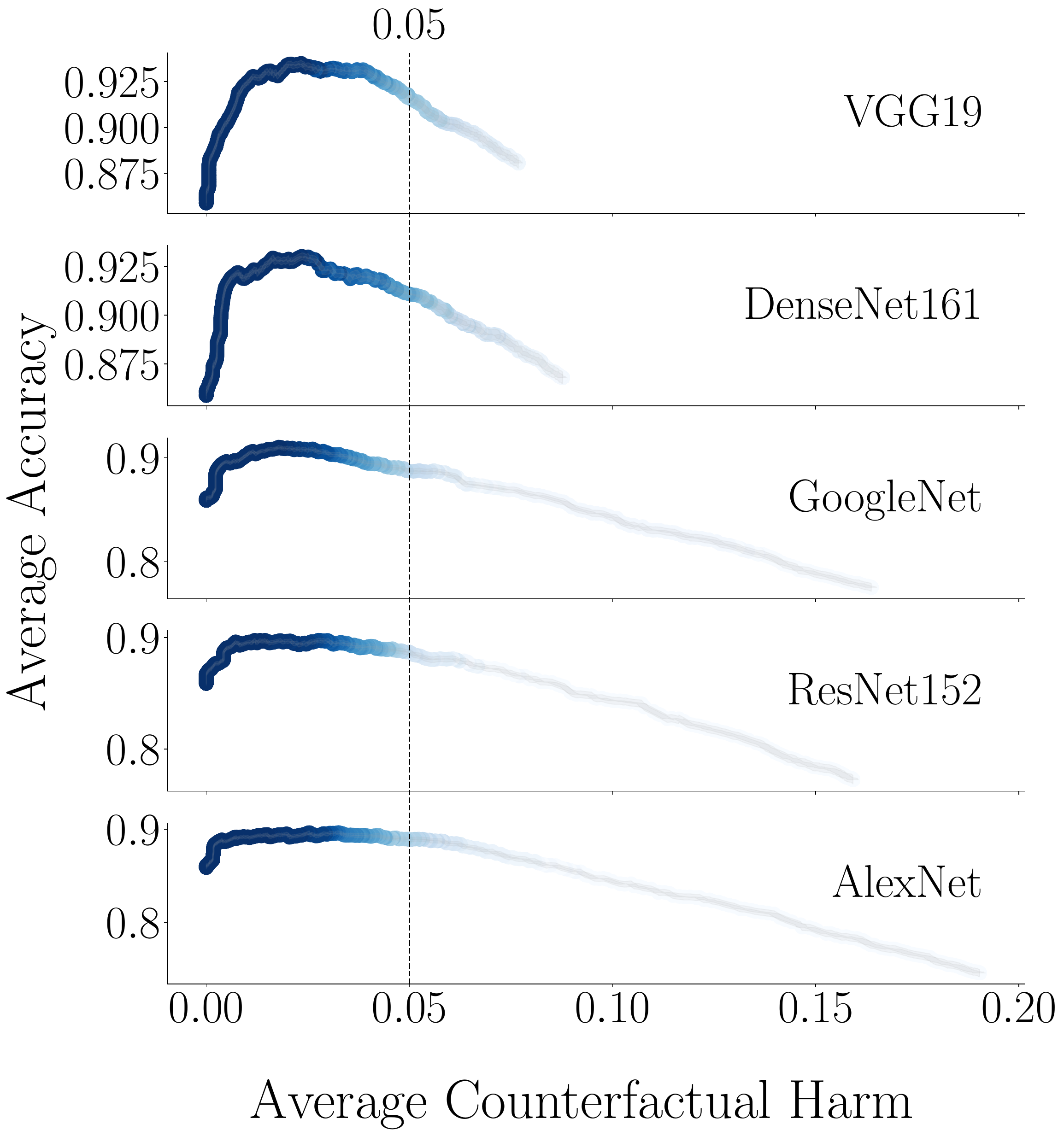}}
    \includegraphics[scale=0.195, valign=t]{figures/colorbar_big.pdf}
 \caption{Average accuracy estimated by the mixture of MNLs against the average counterfactual harm for images with $\omega = 95$.
    Each point corresponds to a $\lambda$ value from $0$ to $6.25$ with step $0.00625$ and the coloring indicates the relative frequency with which each $\lambda$ value is in $\Lambda(\alpha)$ across random samplings of the calibration set.
    Each row corresponds to decision support systems $\Ccal_{\lambda}$ with a different pre-trained classifier with average accuracies   $0.88$~(VGG19),  $0.868$~(DenseNet161),  $0.775$~(GoogleNet), $0.773$~(ResNet152), and $0.745$~(AlexNet).
    The average accuracy of human experts on their own is $0.86$.
    The results are averaged across $50$ random samplings of the test and calibration set.
    In both panels, $95\%$ confidence intervals have width always below $0.02$ and are represented using shaded areas.
    }
    \label{fig:saps_acc_vs_harm_mnl_noise95}
    \vspace{-3mm}
\end{figure}

\begin{figure}[t]
    \centering
    \subfloat[$\alpha = 0.01$]{
    \label{fig:saps_class-alpha0.01}
    \includegraphics[width=.45\linewidth, valign=t]{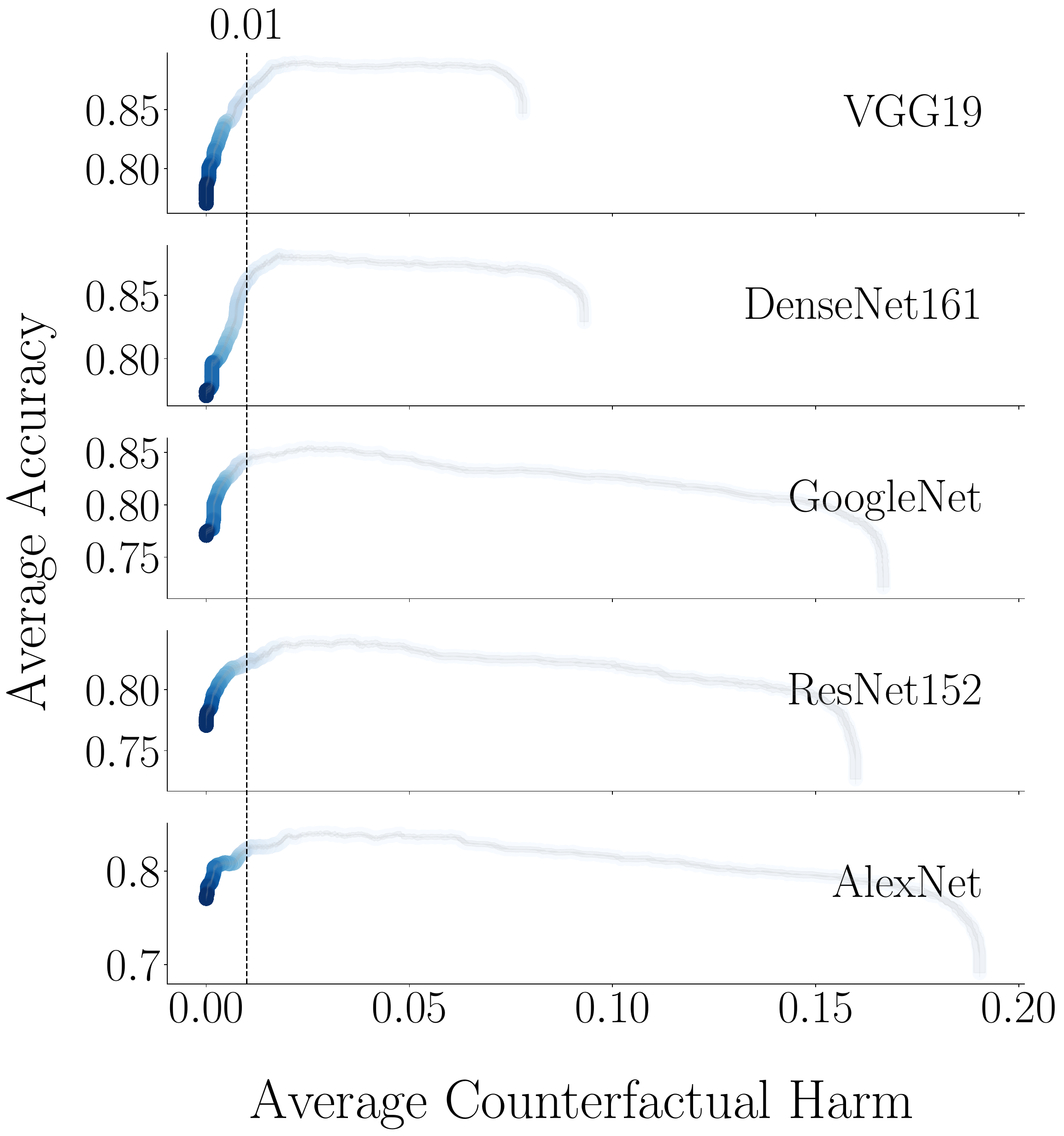}
    }
    \hspace{-0.5mm}
    \subfloat[$\alpha=0.05$]{
    \label{fig:saps_class-alpha0.05}
    \includegraphics[width=.45\linewidth, valign=t]{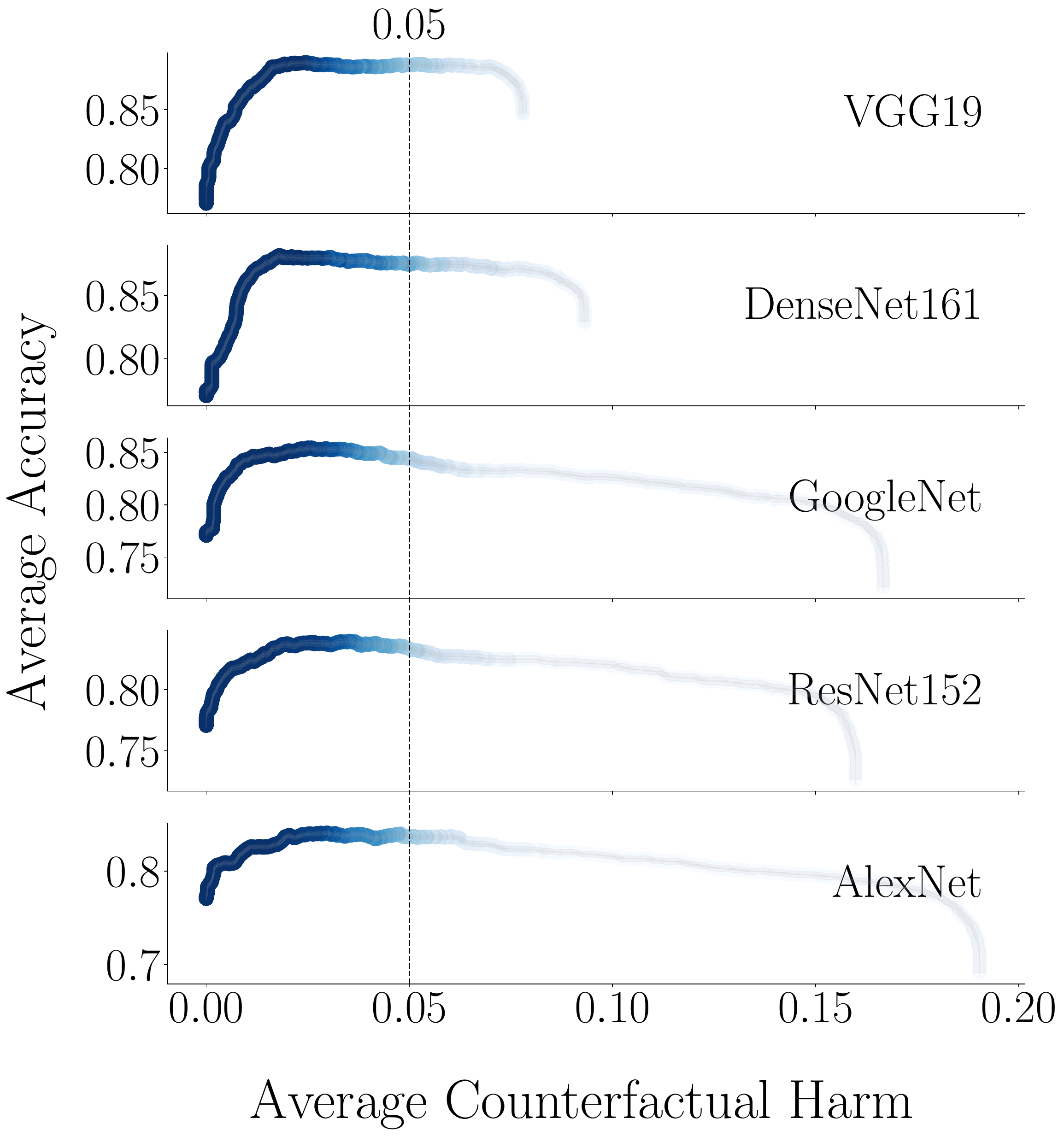}}
    \includegraphics[scale=0.195, valign=t]{figures/colorbar_big.pdf}
    \caption{Average accuracy estimated by the mixture of MNLs against the average counterfactual harm for images with $\omega = 110$.
    Each point corresponds to a $\lambda$ value from $0$ to $6.25$ with step $0.00625$ and the coloring indicates the relative frequency with which each $\lambda$ value is in $\Lambda(\alpha)$ across random samplings of the calibration set.
    Each row corresponds to decision support systems $\Ccal_{\lambda}$ with a different pre-trained classifier with average accuracies   \vgg~(VGG19),  \densenet~(DenseNet),  \googlenet~(GoogleNet), \resnet~(ResNet152), and \alexnet~(AlexNet).
    The average accuracy achieved by the simulated human experts on their own is $0.771$.
    The results are averaged across $50$ random samplings of the test and calibration set.
    In both panels, $95\%$ confidence intervals are represented using shaded areas and always have width below $0.02$.
    }
    \label{fig:saps_acc_vs_harm_mnl}
    \vspace{-3mm}
\end{figure}
\begin{figure}[ht]
    \centering
    \subfloat[Average Set-Size]{
    \label{fig:saps-set-size-110}
    \includegraphics[width=.45\linewidth, valign=t]{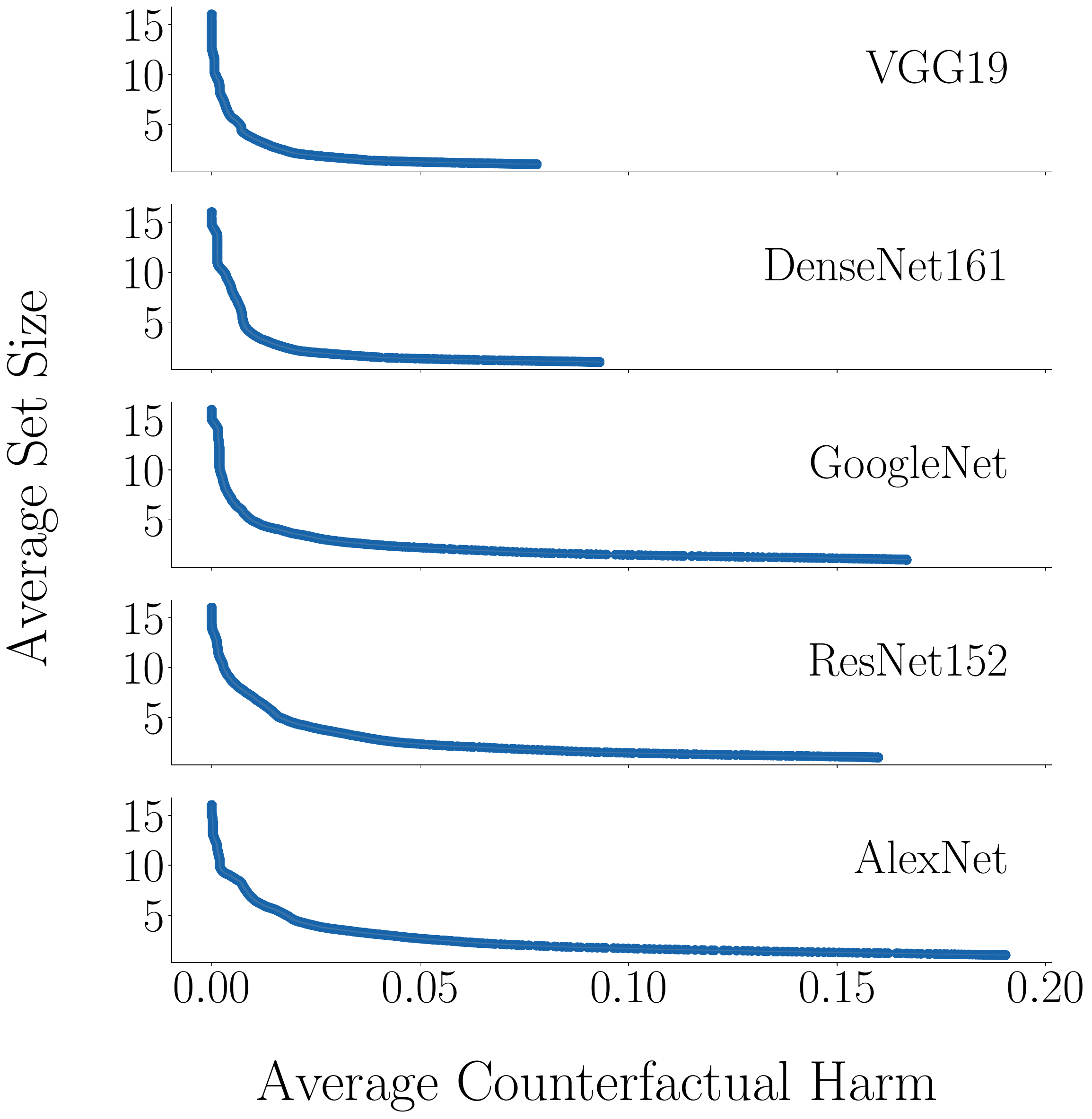}
    }
    \subfloat[Empirical Coverage]{
    \label{fig:saps-coverage-110}
    \includegraphics[width=.45\linewidth, valign=t]{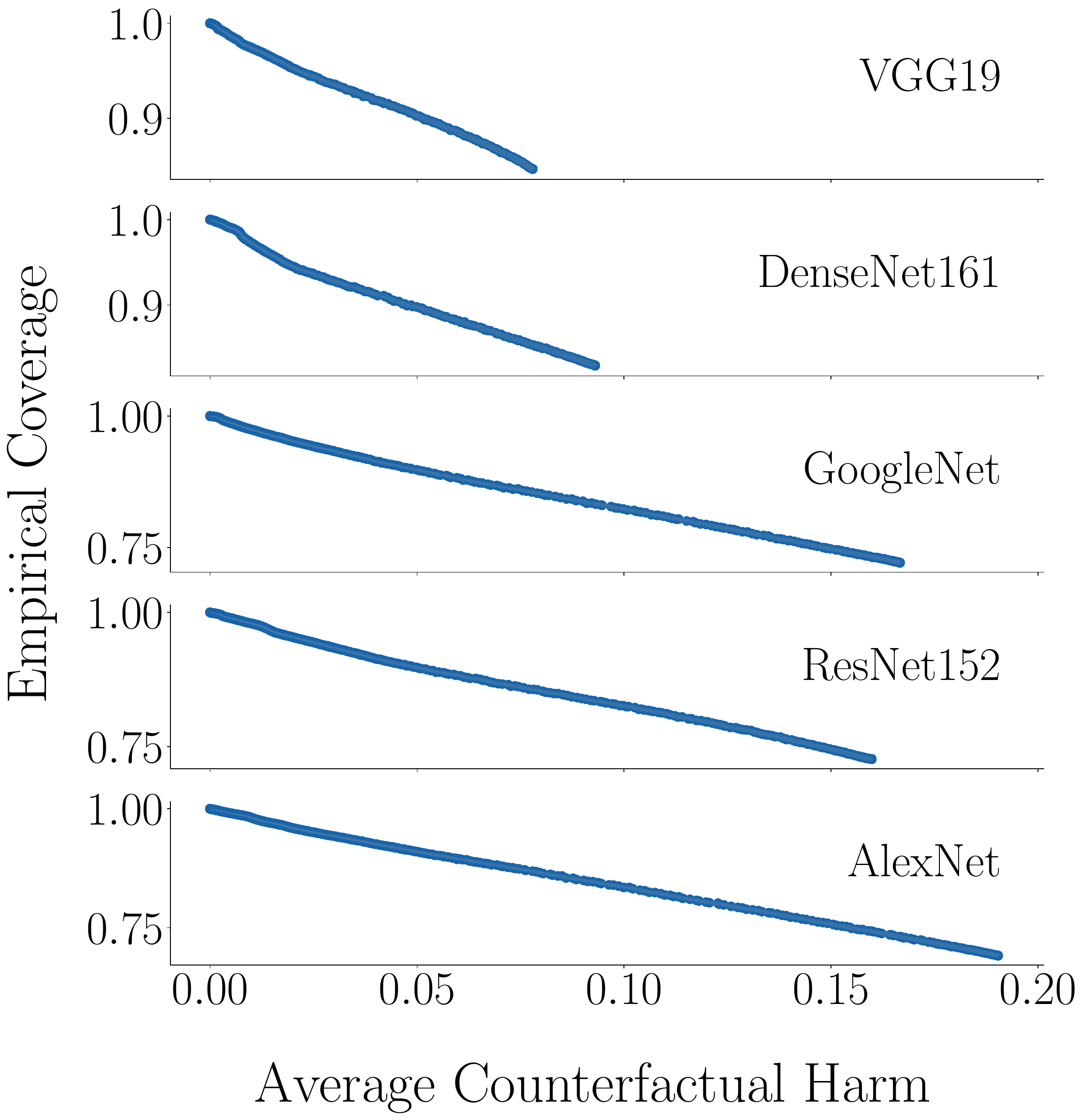}}
    \caption{Average set size and empirical coverage against the average counterfactual harm for images with $\omega = 110$.
    Each point corresponds to a $\lambda$ value from $0$ to $6.25$ with step $0.00625$.
    Each row corresponds to decision support systems $\Ccal_{\lambda}$ with a different pre-trained classifier with average accuracies as in Figure~\ref{fig:saps_acc_vs_harm_mnl} above.
    The average accuracy achieved by the simulated human experts on their own is $0.771$.
    The results are averaged across $50$ random samplings of the test and calibration set.
    In both panels, $95\%$ confidence intervals are represented using shaded areas and always have width below $0.025$.
    }
    \label{fig:saps-set-size-coverage-vs-harm-110}
    \vspace{-3mm}
\end{figure}

\clearpage
\newpage

\section{Average accuracy vs. prediction set size}\label{app:interventional}

In this section, we use the ImageNet16H-PS~\cite{straitouri2024designing} dataset to verify the interventional monotonicity assumption, \ie, whether human experts achieve higher average accuracy
when predicting from smaller prediction sets that include the ground truth label.
To this end, we follow a procedure, similar to the one used by Straitouri et al.~\cite{straitouri2024designing} to verify the interventional monotonicity assumption unconditionally of the value of the ground truth label.

Straitouri et al. estimate the average accuracy per prediction set size on images with similar difficulty, averaged across all experts and across experts with the same level of competence.   
They consider images with similar  difficulty instead of single images, as the dataset ImageNet16H-PS does not include enough human predictions per image to faithfully estimate the average accuracy per image. 
They stratify the images into groups of similar difficulty, following the same procedure used in Straitouri et al.~\cite{straitouri2023improving} based on different quantiles of the average accuracy values of all images.   
%
In our work, we stratify images with the same ground truth label into  the following four groups:
\begin{itemize}
    \item[---] High difficulty: images with average accuracy within the $0.25$ quantile of the average accuracy values of all images with the same ground truth label. 
    \item[---] Medium to high difficulty: images with average accuracy within the $0.5$ quantile and outside the $0.25$ quantile of the average accuracy values of all images with the same ground truth label.
     \item[---] Medium to low difficulty: images with average accuracy within the $0.75$ quantile and outside the $0.5$ quantile the of the average accuracy values of all images with the same ground truth label. 
     \item[---] Low difficulty: images with average accuracy outside the $0.75$ quantile of the average accuracy values of all images with the same ground truth label. 
\end{itemize}
%
%
We follow a similar method to stratify experts based on their level of competence into two groups for each ground truth label.
To measure the level of competence of an expert for a ground truth label, we use the average accuracy across all the predictions that she made on images with this ground truth label.
For each ground truth label,
we consider the $50\%$ of experts with the highest average accuracy as the experts with high level of competence and the rest $50\%$ as experts with low level of competence.

Figure~\ref{fig:acc-set-size-across-experts} shows the average accuracy per prediction set size---for prediction sets that include the ground truth label---across images of various levels of difficulty and experts with different levels of competence for some of the ground truth labels. We find qualitatively similar results for the rest of the ground truth labels. 
The results are consistent with the results on the unconditional interventional monotonicity by Straitouri et al.~\cite{straitouri2024designing}, showing that as long as the classification task is not too easy, the experts  achieve higher average accuracy when predicting from smaller prediction sets, \ie, the interventional monotonicity holds.  

\begin{figure}[ht]
    \centering
    \subfloat[All experts, ground truth label \texttt{bottle}]
    {
    \begin{tabular}{@{}ccc@{}}
    \includegraphics[width=.3\linewidth]{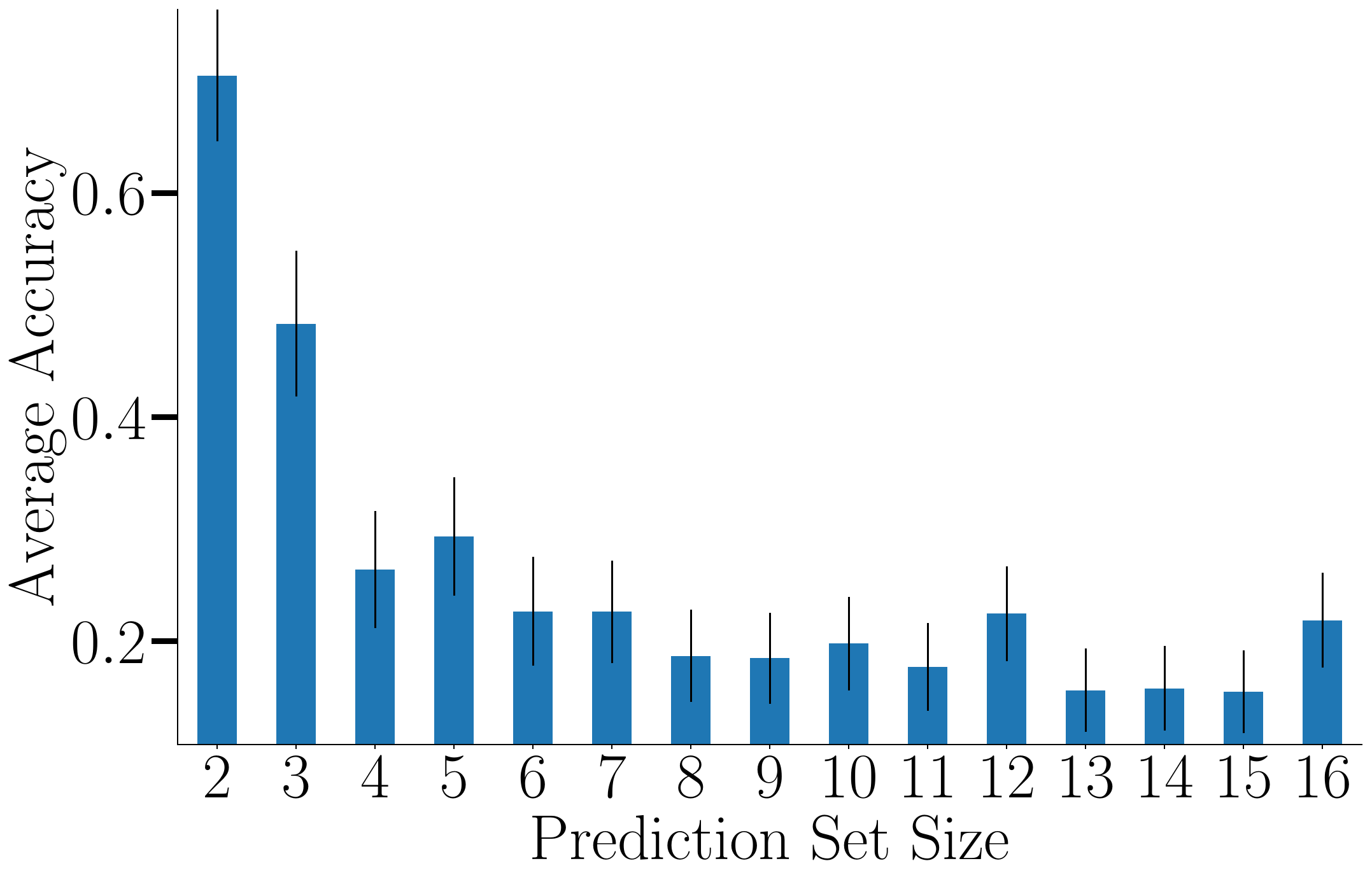}
    &
    \includegraphics[width=.3\linewidth]{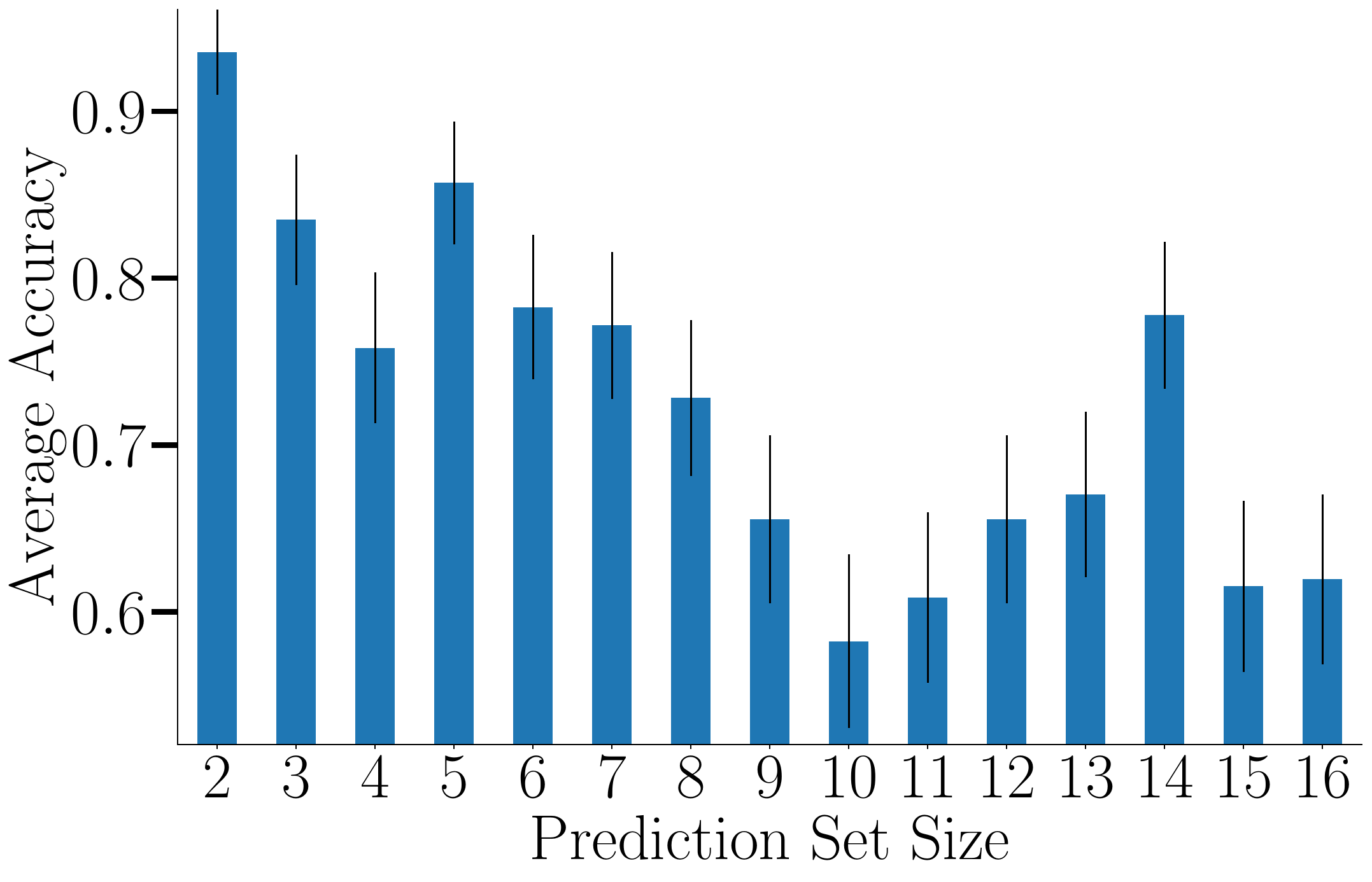}
    & 
    \includegraphics[width=.3\linewidth]{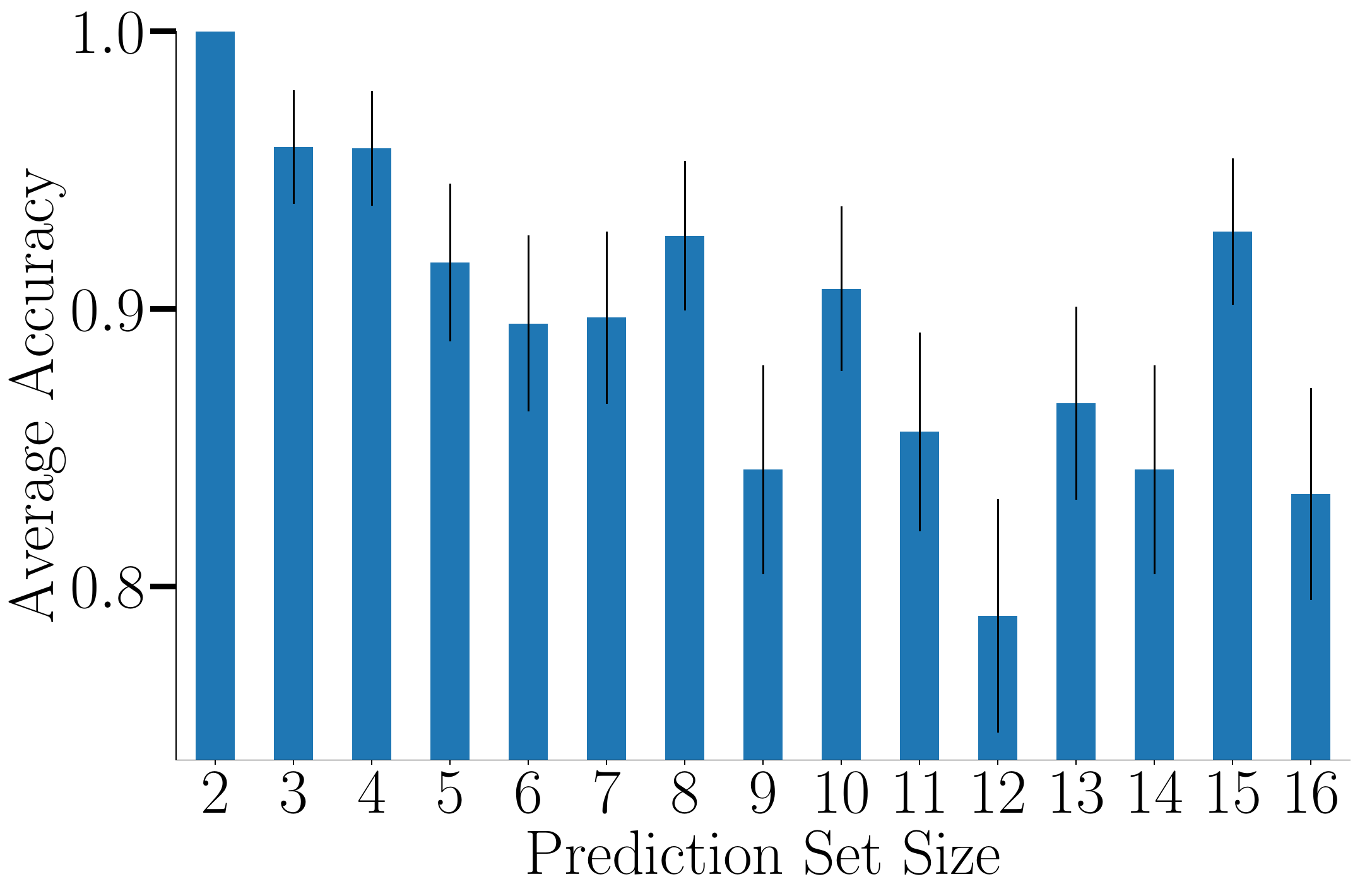}
    \\
    \small High difficulty& \small Medium to high difficulty & \small Medium to low difficulty \\
    \end{tabular}
    } \\
    \subfloat[All experts, ground truth label \texttt{oven}]
    {
    \begin{tabular}{@{}ccc@{}}
    \includegraphics[width=.3\linewidth]{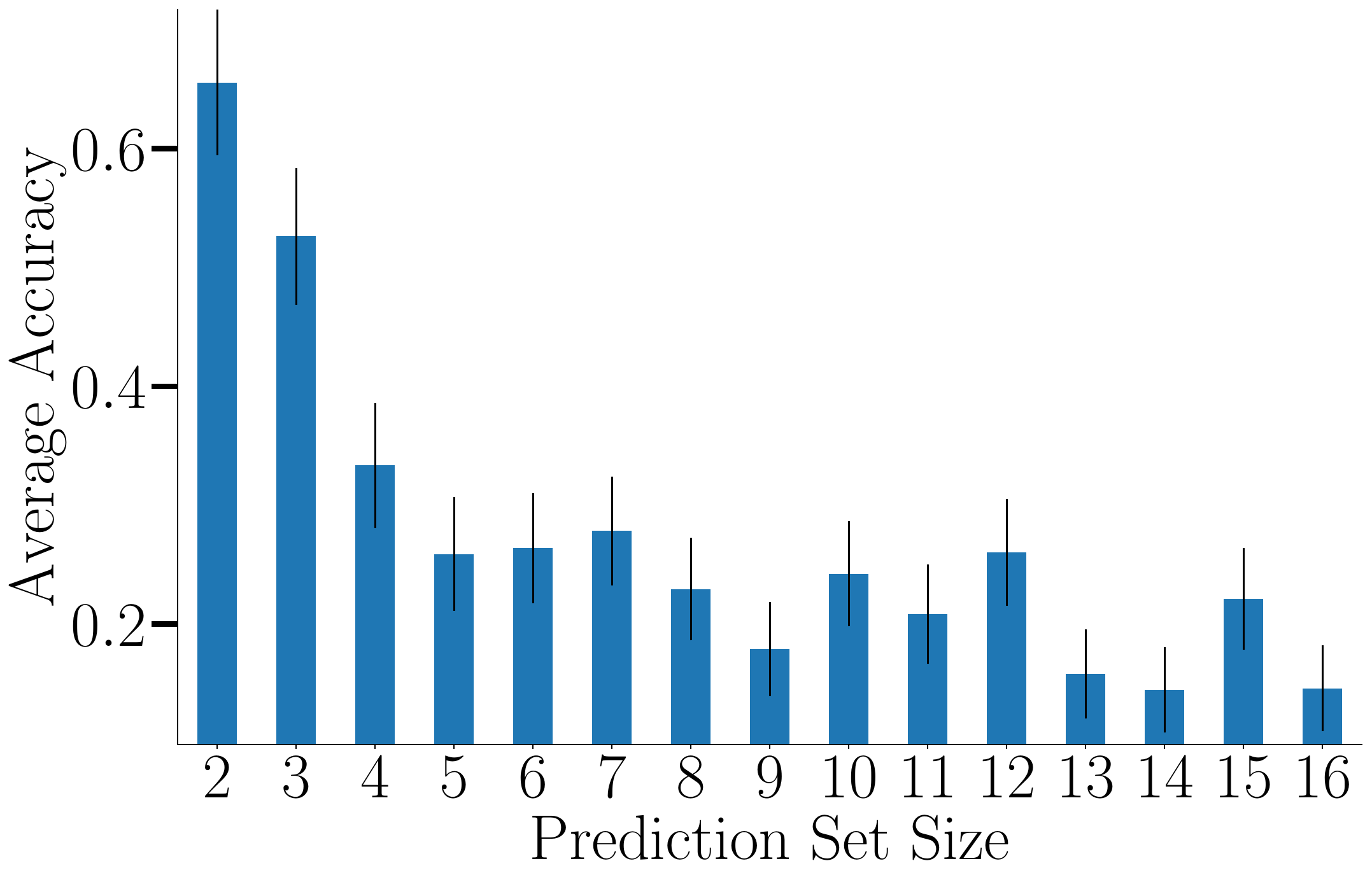}
    &
    \includegraphics[width=.3\linewidth]{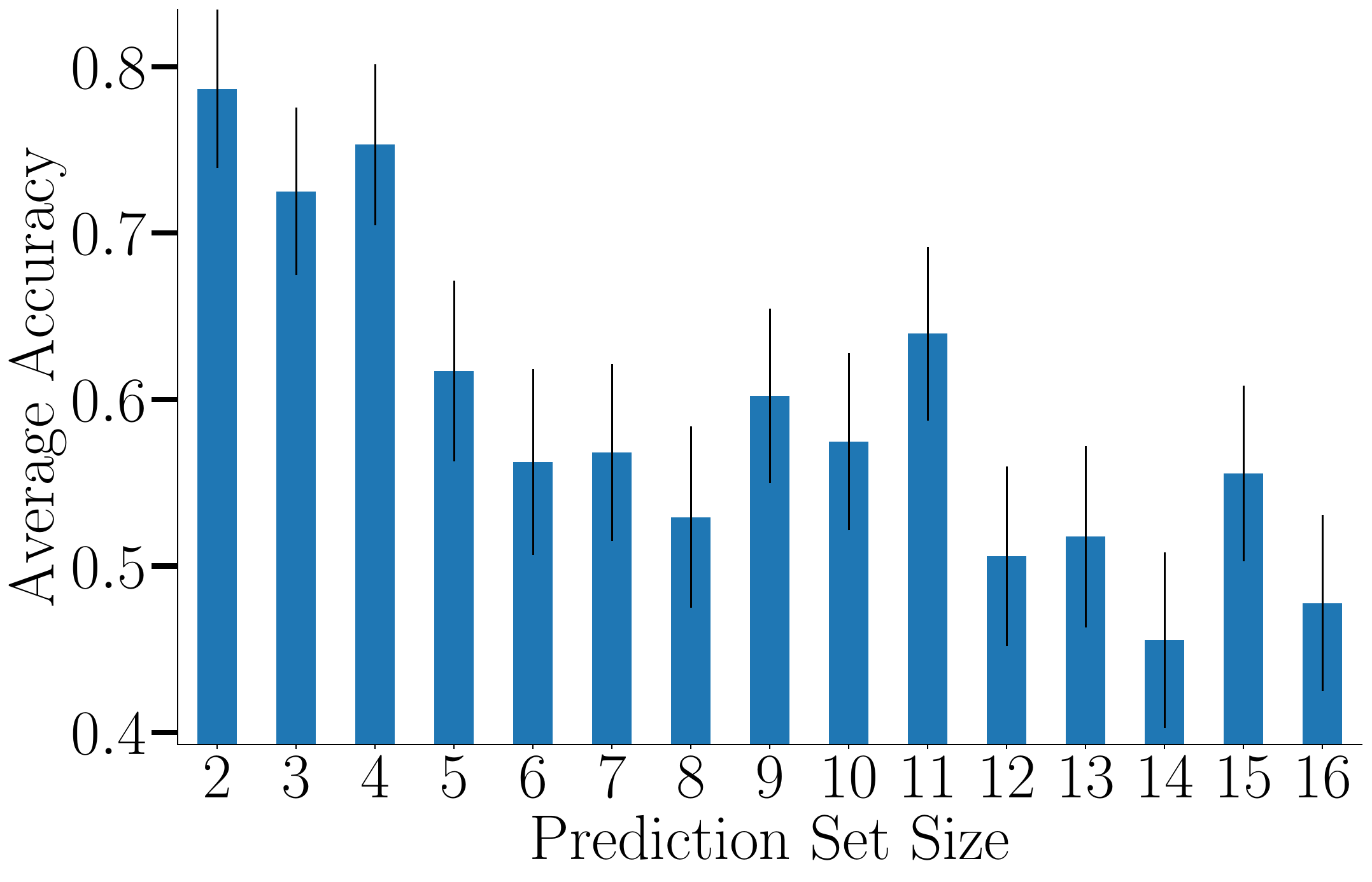}
    & 
    \includegraphics[width=.3\linewidth]{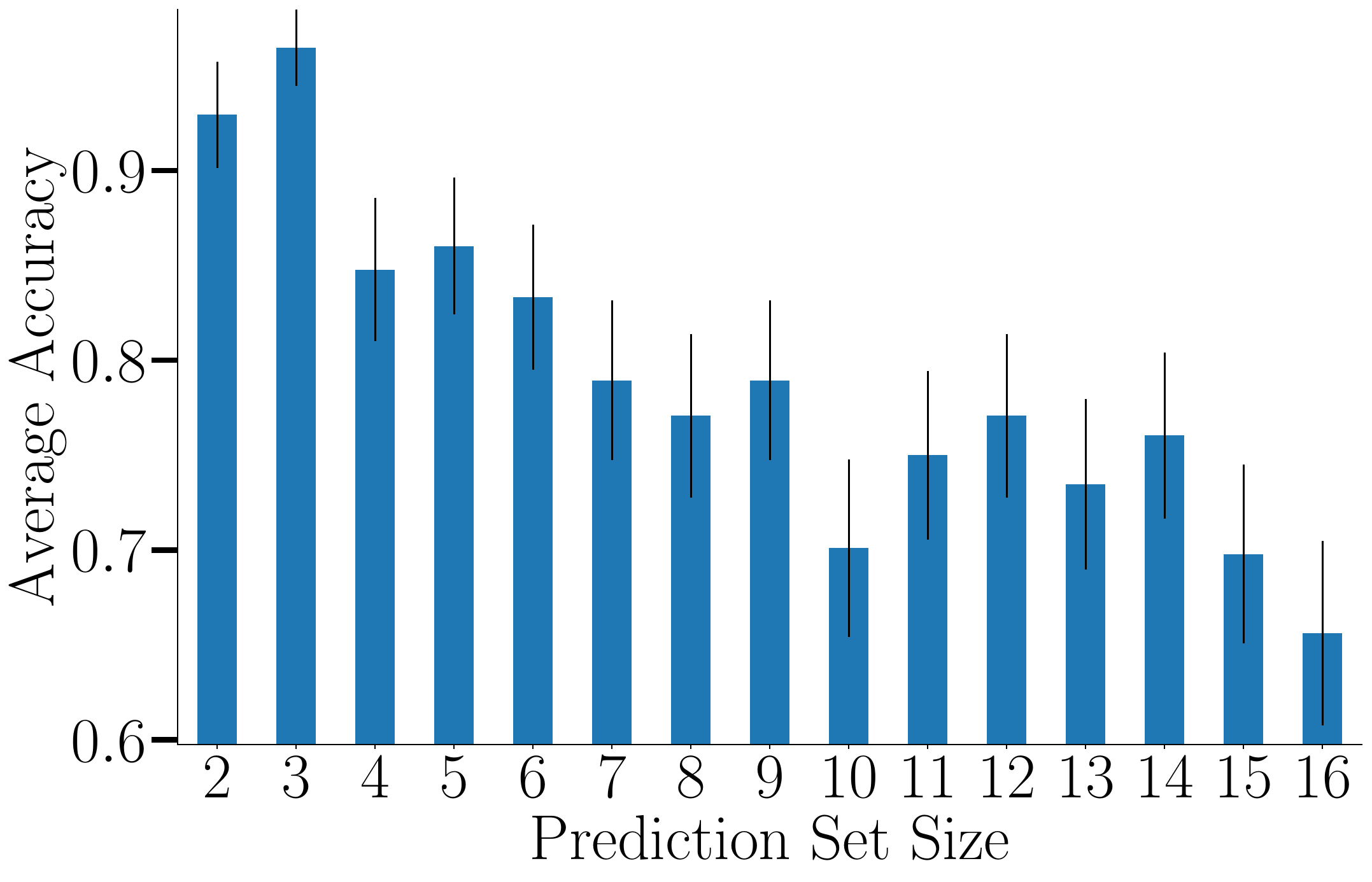}
    \\
    \small High difficulty& \small Medium to high difficulty & \small Medium to low difficulty \\
    \end{tabular}
    }\\
    \subfloat[Experts with low level of competence, ground truth label \texttt{bottle}]
    {
    \begin{tabular}{@{}ccc@{}}
    \includegraphics[width=.3\linewidth]{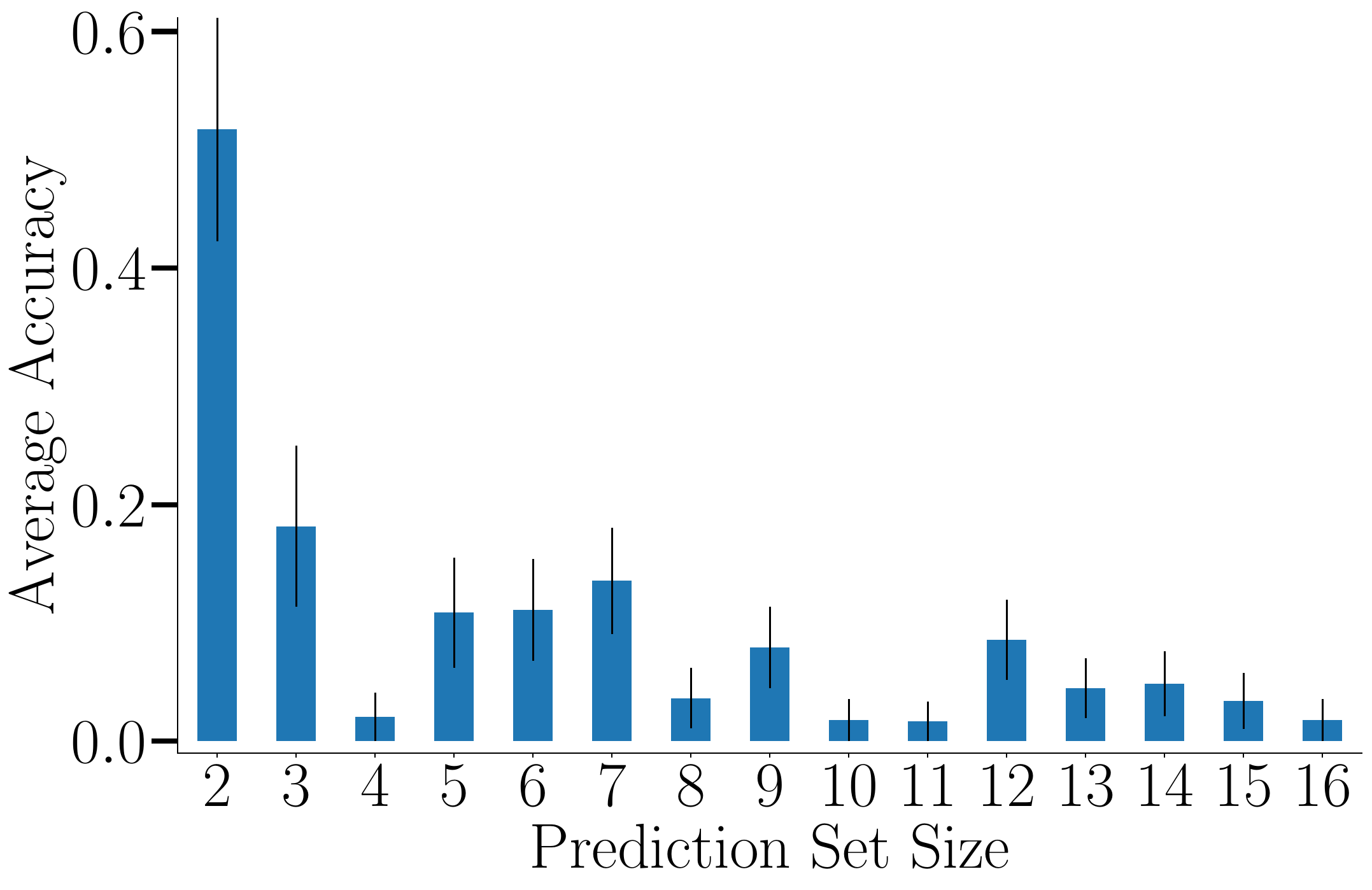}
    &
    \includegraphics[width=.3\linewidth]{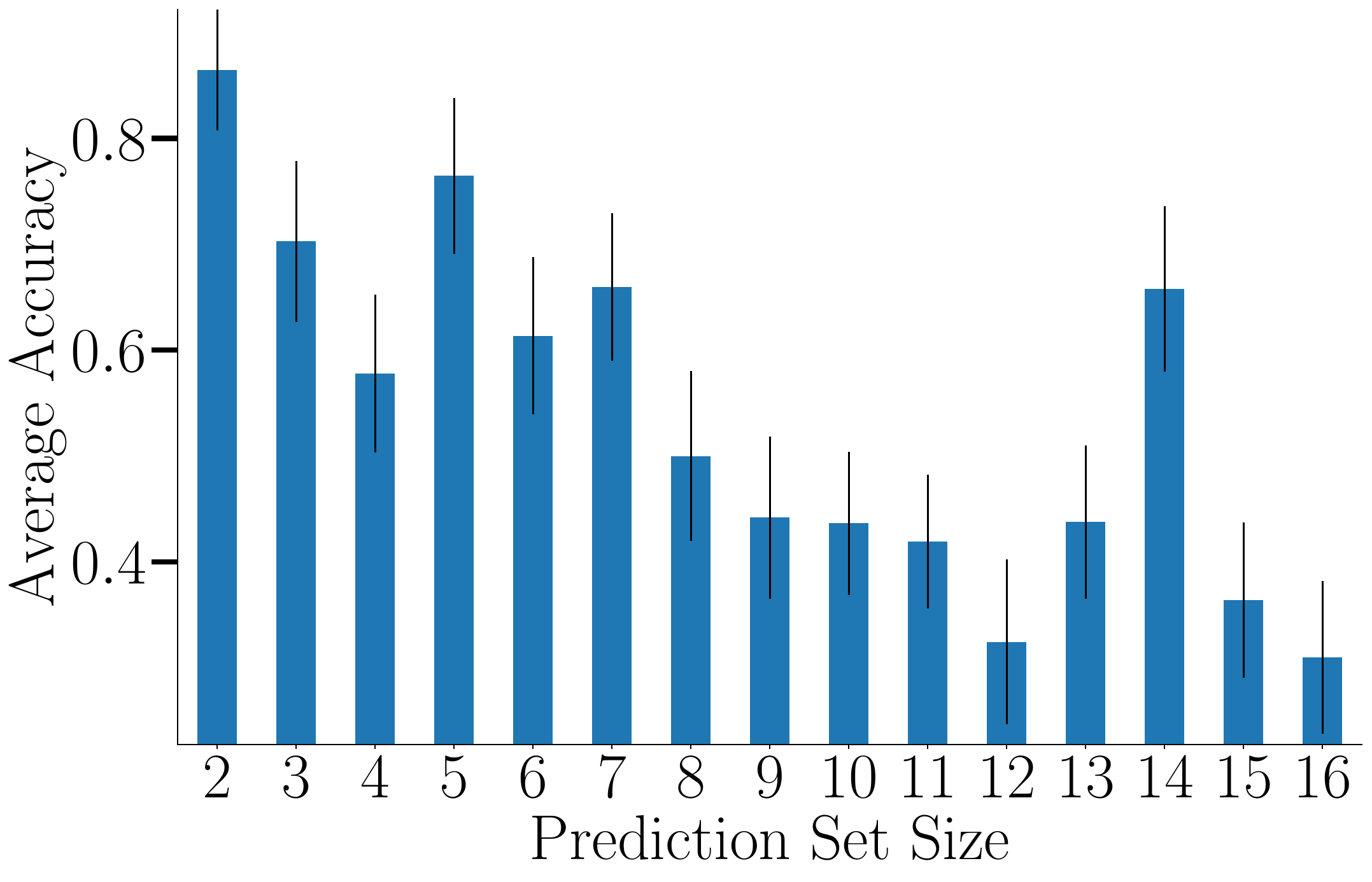}
    & 
    \includegraphics[width=.3\linewidth]{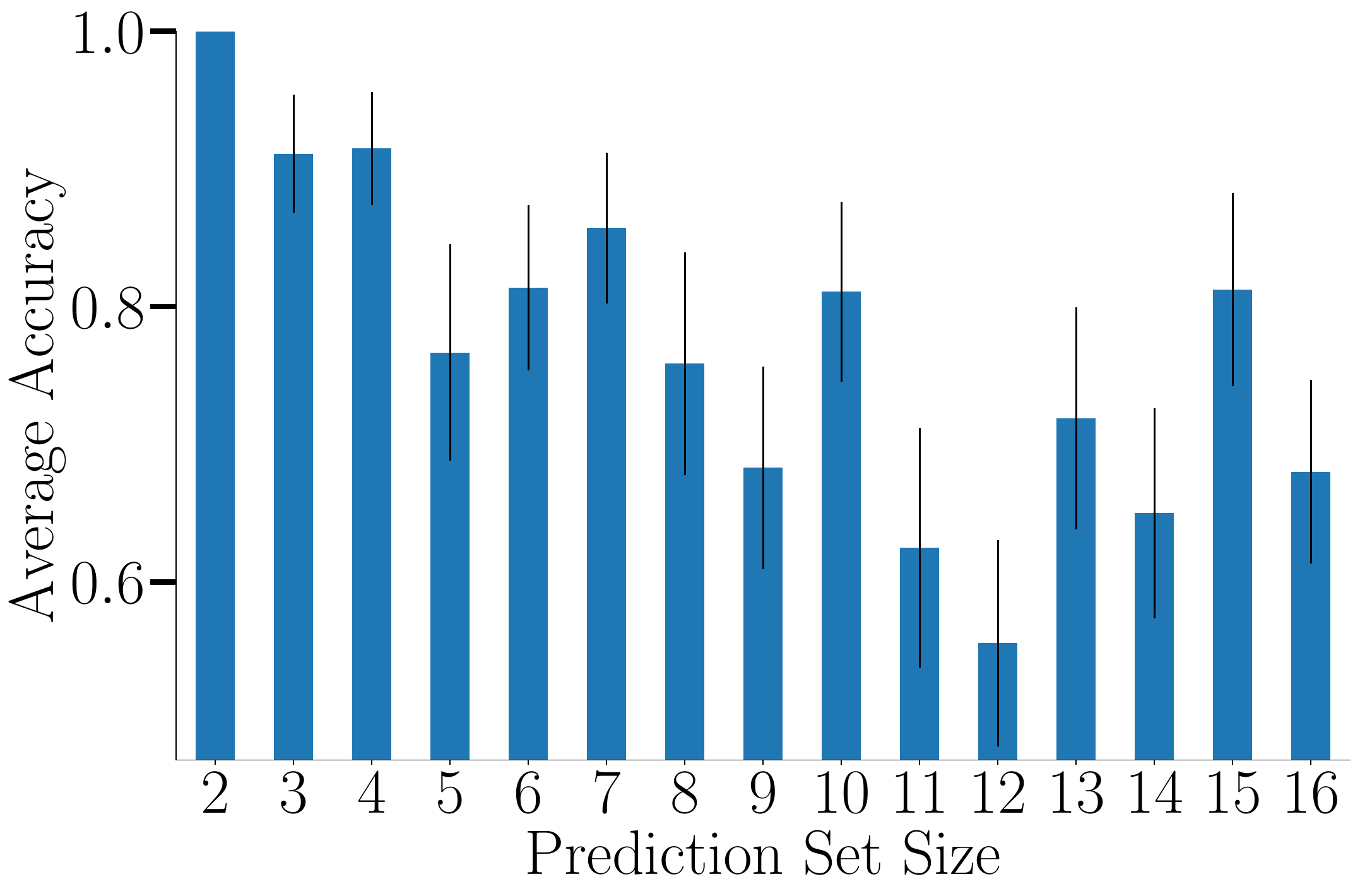}
    \\
    \small High difficulty & \small Medium to high difficulty & \small Medium to low difficulty \\
    \end{tabular}
    } \\
    \subfloat[Experts with high level of competence, ground truth label \texttt{bottle}]
    {
    \begin{tabular}{@{}ccc@{}}
    \includegraphics[width=.3\linewidth]{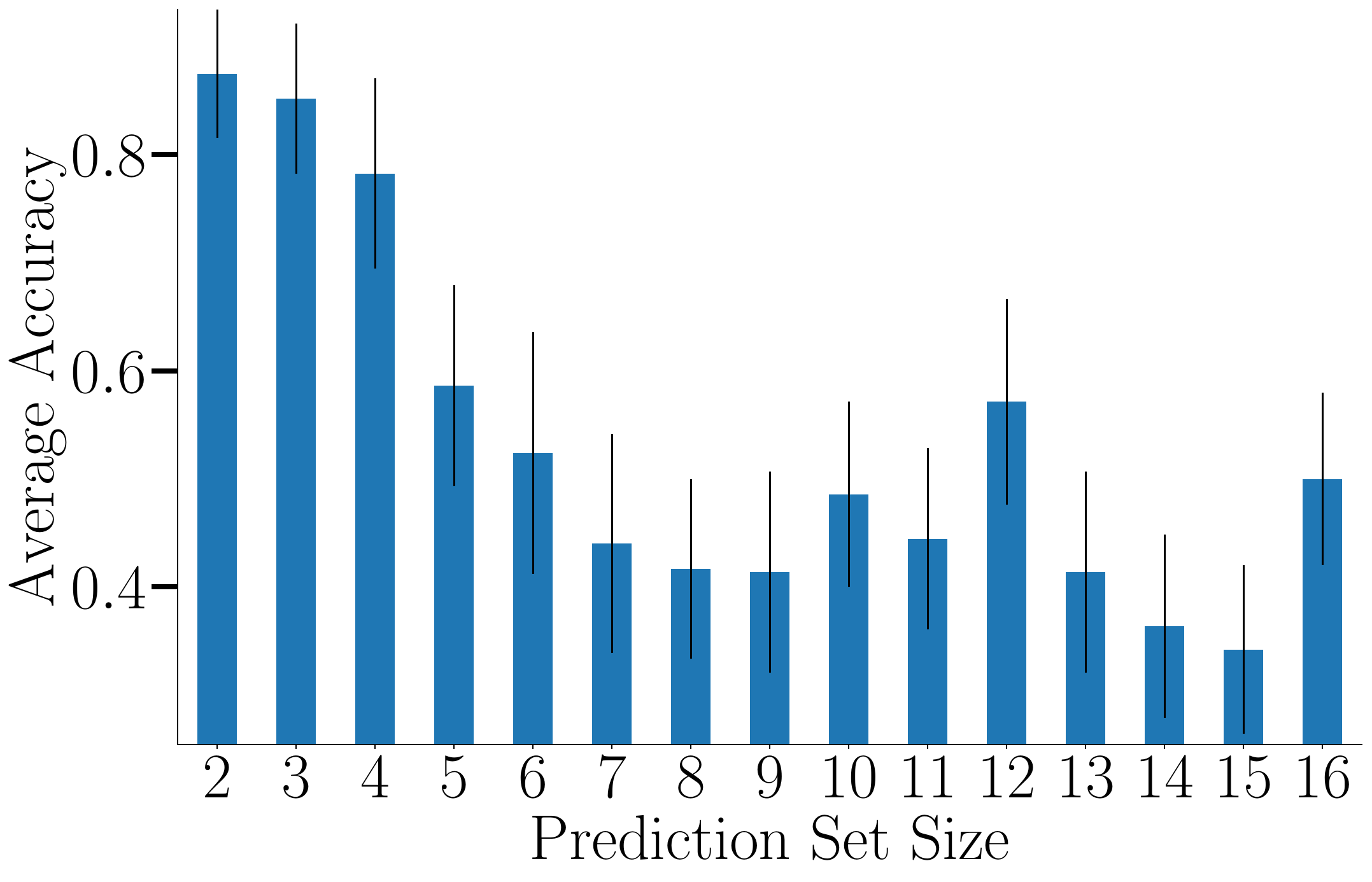}
    &
    \includegraphics[width=.3\linewidth]{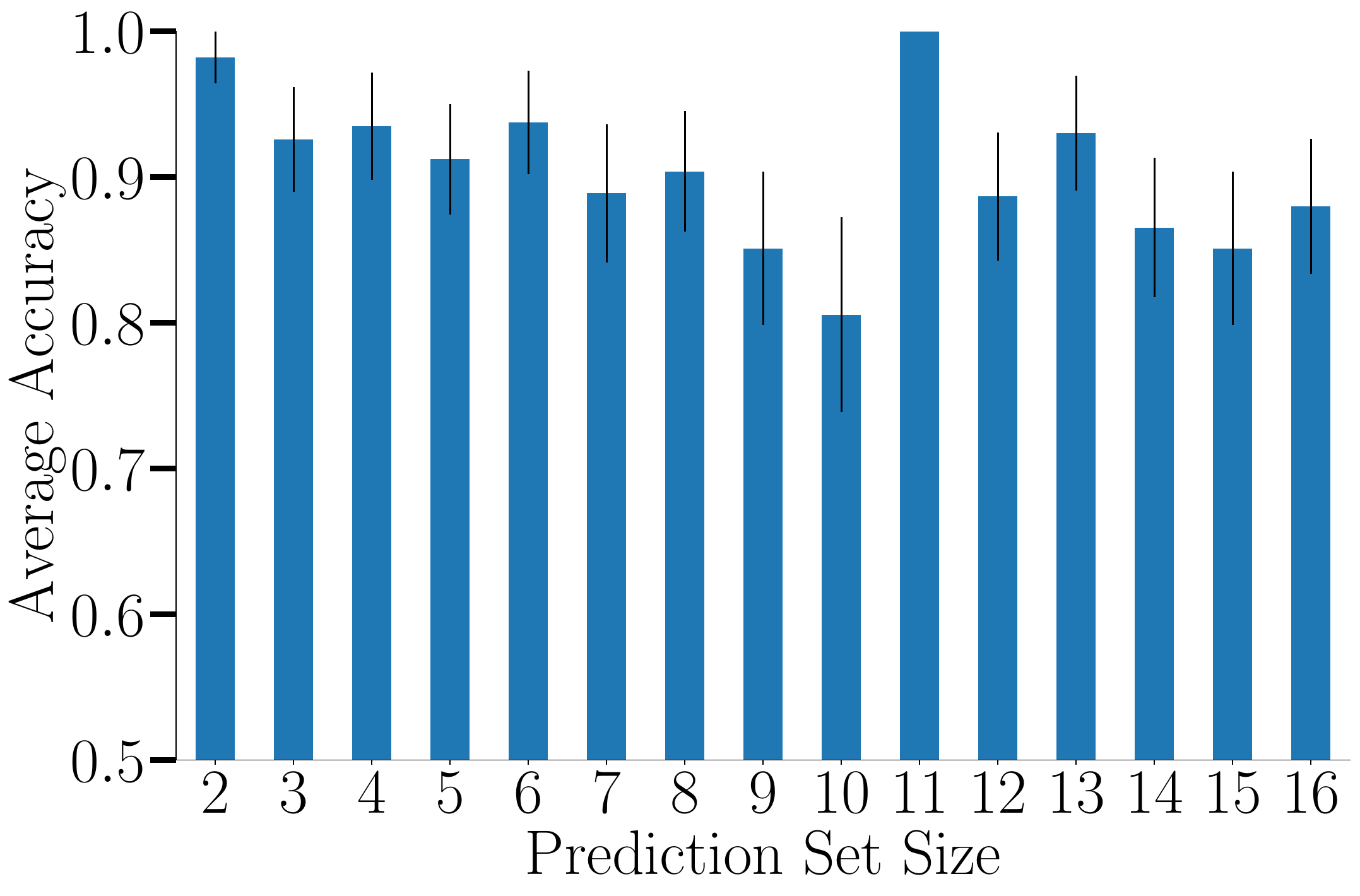}
    & 
    \includegraphics[width=.3\linewidth]{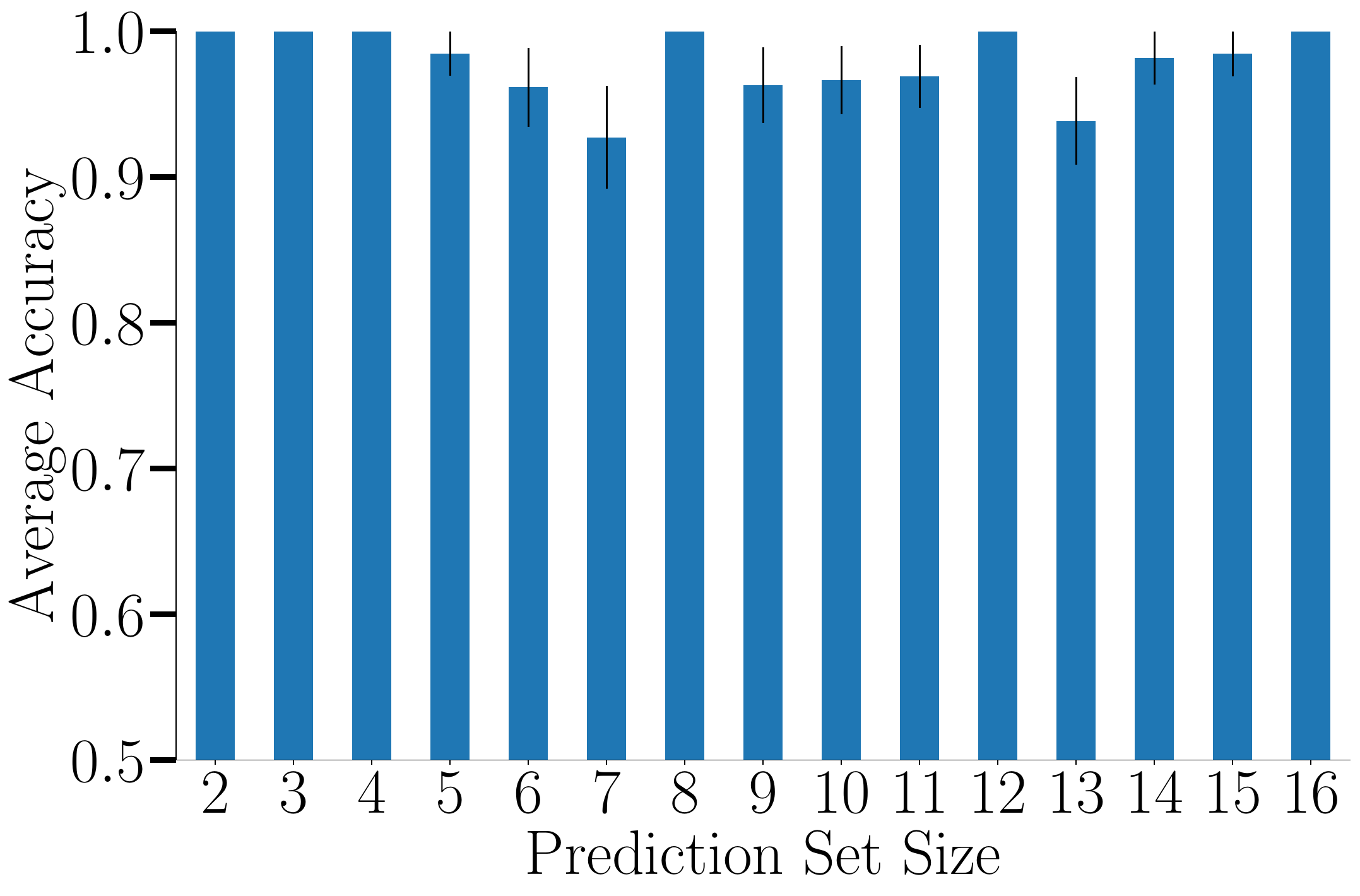}
    \\
    \small High difficulty & \small Medium to high difficulty & \small Medium to low difficulty \\
    \end{tabular}
    } \\
    \caption{Empirical success probability per prediction set size averaged across (a) all experts for images with ground truth label \texttt{bottle}, (b) all experts for images with ground truth label \texttt{oven}, (c) experts with low level of competence for images with ground truth label \texttt{bottle}, 
    and (d) experts with high level of competence for images with ground truth label \texttt{bottle} with high, medium to high and medium to low difficulty.
    In all panels, we have only considered prediction sets that included the ground truth label and thus have omitted showing the empirical success probability for singletons, as it is always $1$. 
    Error bars denote standard error.
    }
    \label{fig:acc-set-size-across-experts}
\end{figure}

\clearpage
\newpage

\section{Experiments under the interventional monotonicity assumption }\label{app:additional-results}
In this section, we 
show  the average accuracy $A(\lambda)$ achieved by a human expert using $\Ccal_{\lambda}$, as
predicted by the mixture of MNLs, against the average counterfactual harm upper-bound caused by $\Ccal_{\lambda}$ for several $\alpha$ values 
on the strata of images with $\omega = 80$ in Figure~\ref{fig:acc_vs_harm_interv_noise80}, with $\omega = 95$ in Figure~\ref{fig:acc_vs_harm_interv_noise95} and with $\omega = 110$ in Figure~\ref{fig:acc_vs_harm_interv_noise110}. 
Here, each point corresponds to a different $\lambda$ value and its coloring indicates the empirical probability that $\lambda$ is included in the harm-controlling set $\Lambda'(\alpha)$ given by Corollary~\ref{cor:interventional}, for a fixed $\alpha'$ value across the random samplings of the test and calibration sets.
For each classifier and stratum of images, we select the $\alpha'$ value maximizing the expected size of the harm-controlling set $\Lambda'(\alpha)$. 
%
%
The results on all strata of images support the findings derived from the experiments assuming that the counterfactual monotonicity assumption holds. 
In addition, we observe that the gap between the lower-bound of the average counterfactual harm (that is equal to the average counterfactual harm under the counterfactual monotonicity assumption) and the upper-bound is often large.
%

\begin{figure}[ht]
    \centering
    \subfloat[$\alpha = 0.12$]{
    \includegraphics[width=.45\linewidth, valign=t]{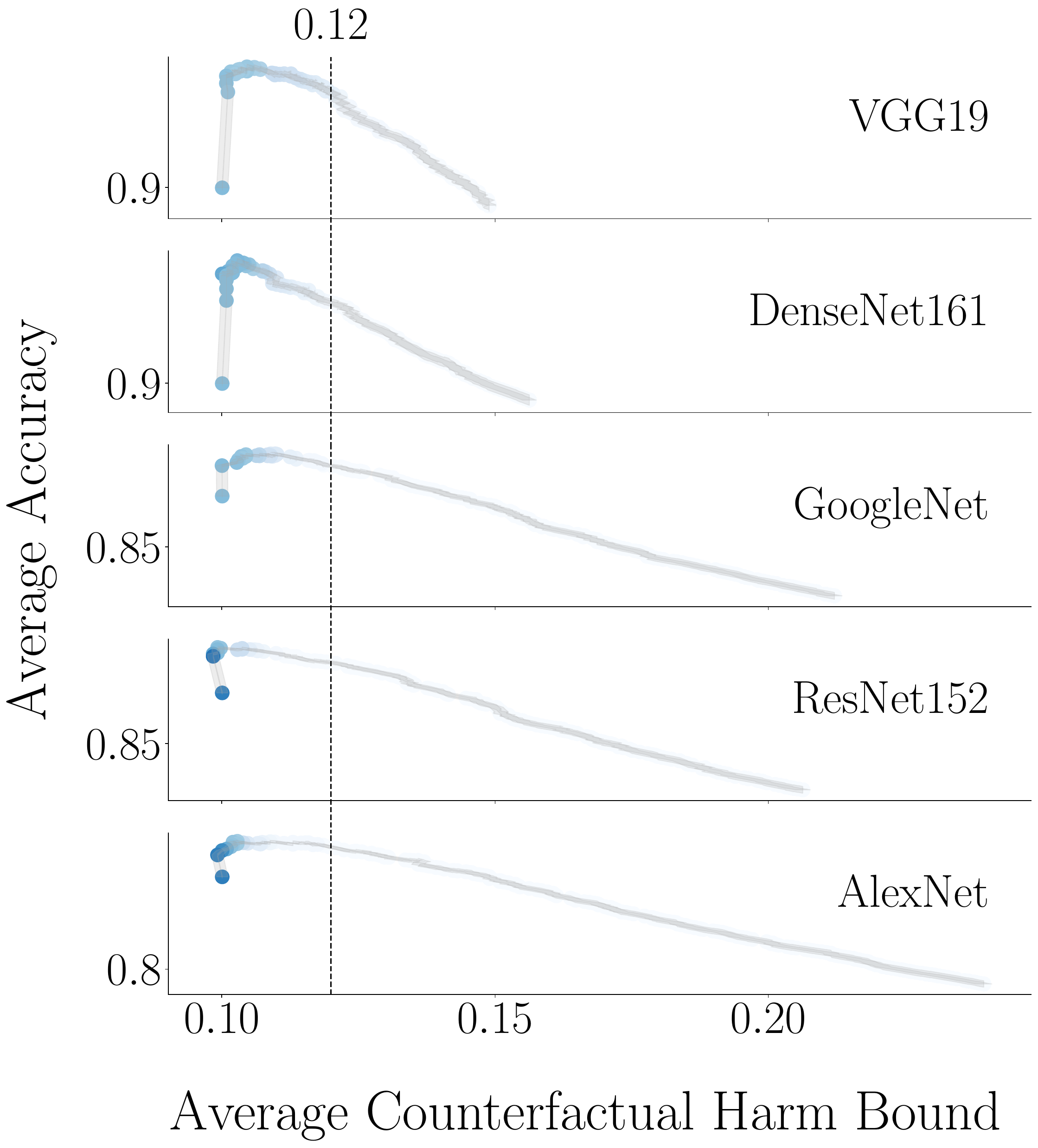}
    }
    \hspace{-0.5mm}
    \subfloat[$\alpha=0.14$]{
    \includegraphics[width=.45\linewidth, valign=t]{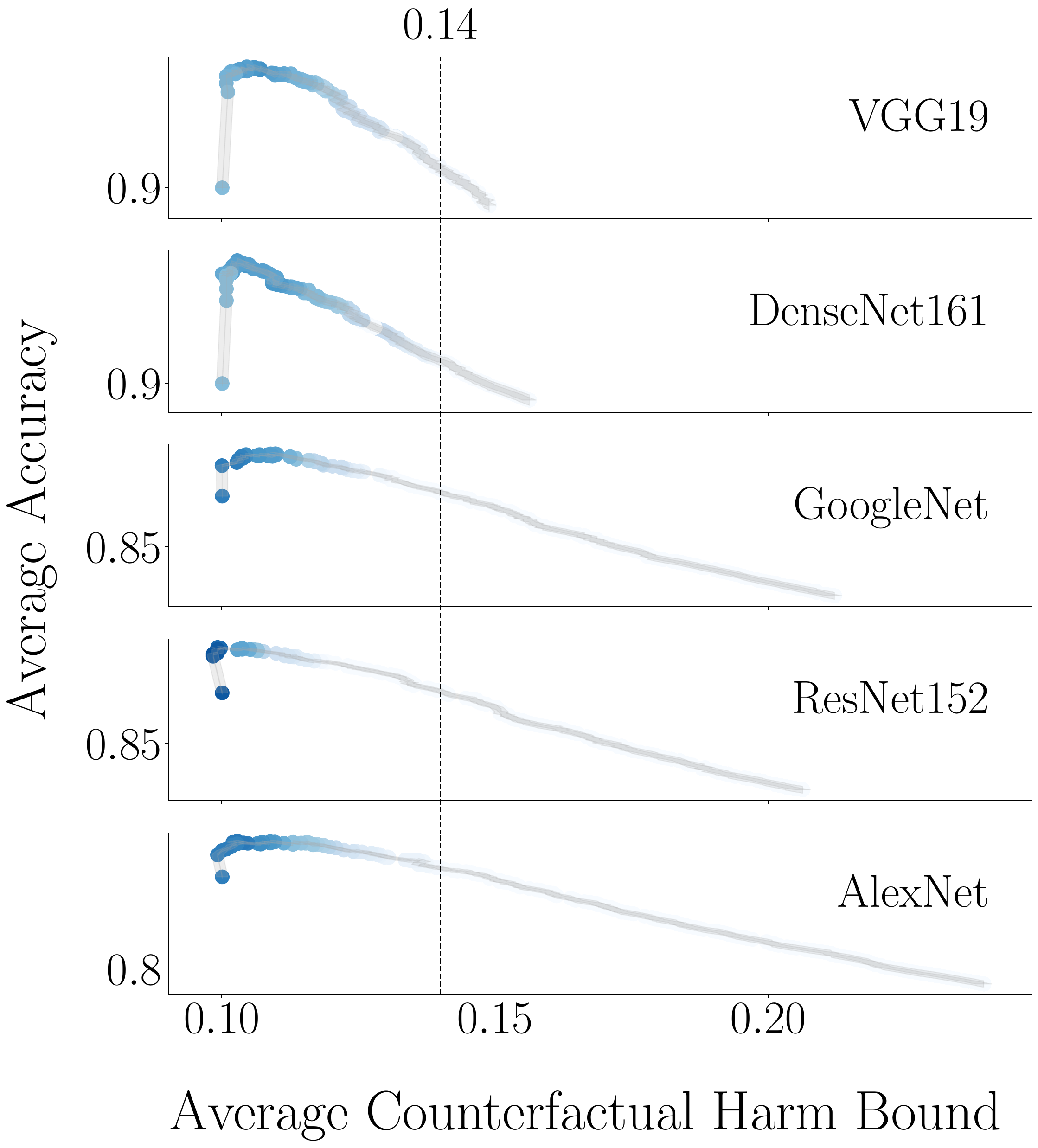}}
    \includegraphics[scale=0.202, valign=t]{figures/colorbar_big.pdf}
     \caption{Average accuracy estimated by the mixture of MNLs against the average counterfactual harm upper bound for images with $\omega = 80$.
    Each point corresponds to a $\lambda$ value from $0$ to $1$ with step $0.001$ and the coloring indicates the relative frequency with which each $\lambda$ value is in $\Lambda'(\alpha)$ across random samplings of the calibration set for a fixed $\alpha'$.
    Each row corresponds to decision support systems with a different pre-trained classifiers as in Figure~\ref{fig:acc_vs_harm_mnl_noise80}. 
    %
    The results are averaged across $50$ random samplings of the test and calibration set.
    In both panels, $95\%$ confidence intervals have width always below $0.02$ and are represented using shaded areas.
    }
    \label{fig:acc_vs_harm_interv_noise80}
    \vspace{-3mm}
\end{figure}

\begin{figure}[t]
    \centering
    \subfloat[$\alpha = 0.15$]{
    \includegraphics[width=.45\linewidth, valign=t]{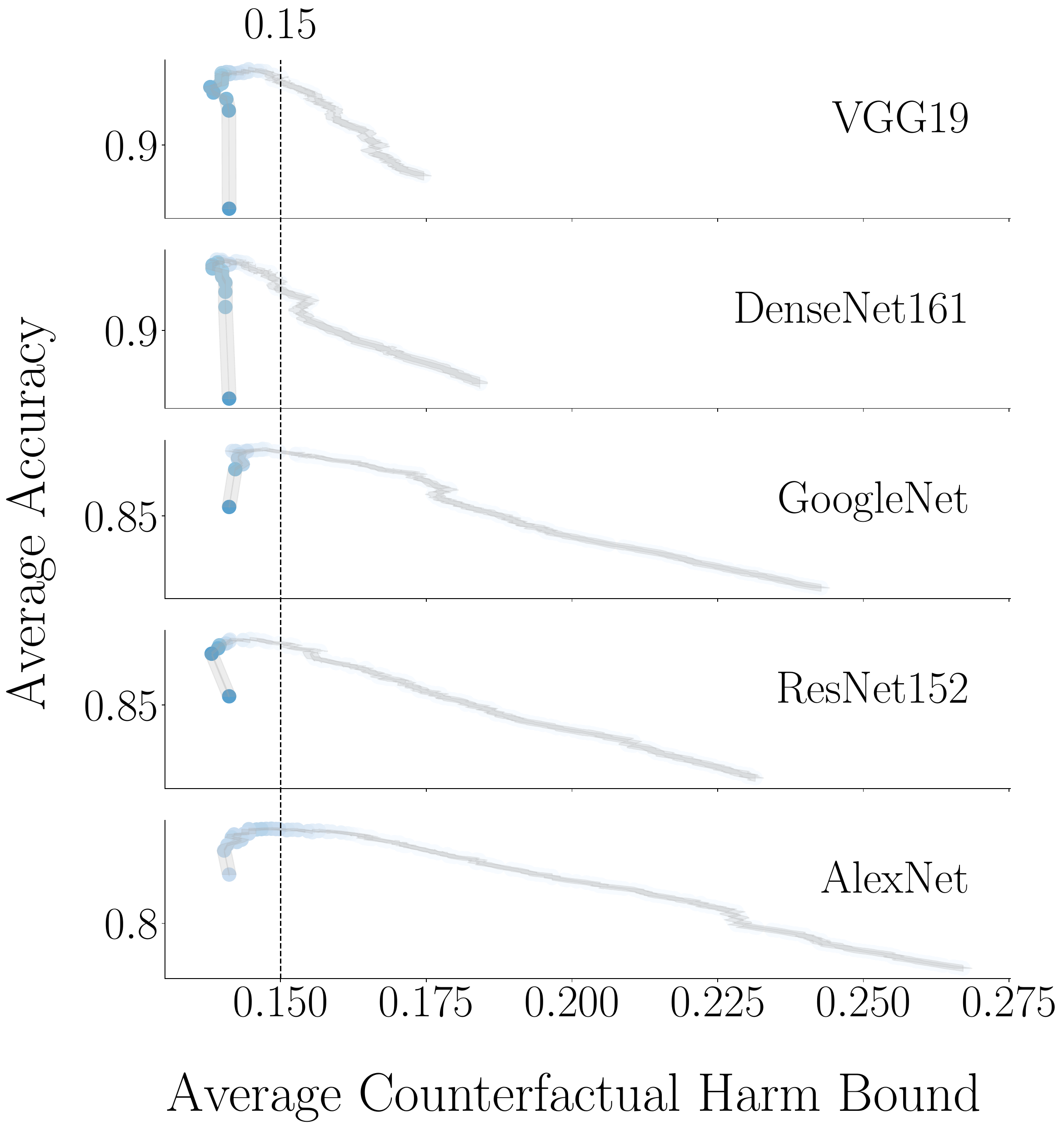}
    }
    \hspace{-0.5mm}
    \subfloat[$\alpha=0.17$]{
    \includegraphics[width=.45\linewidth, valign=t]{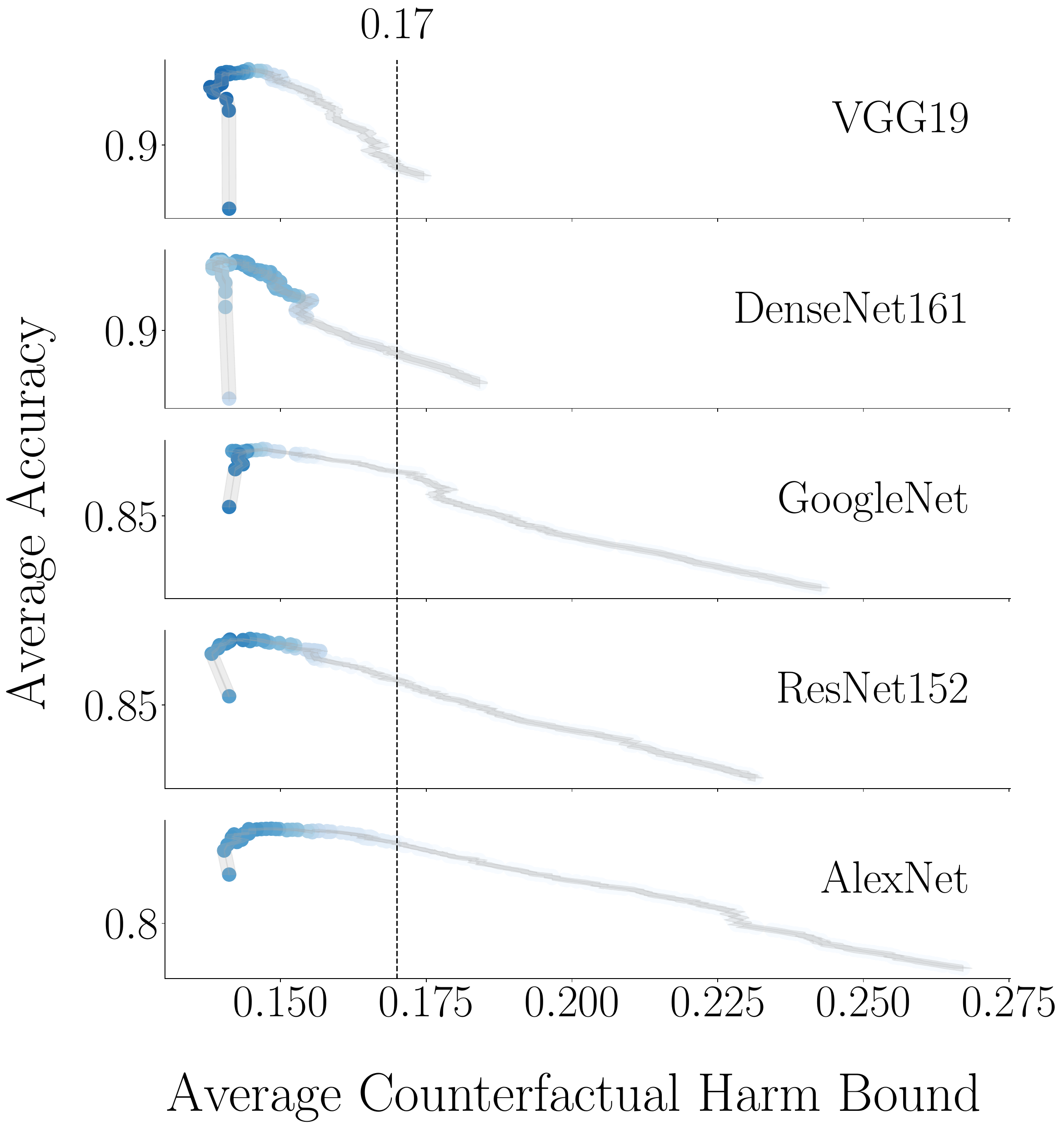}}
    \includegraphics[scale=0.195, valign=t]{figures/colorbar_big.pdf}
     \caption{Average accuracy estimated by the mixture of MNLs against the average counterfactual harm upper bound for images with $\omega = 95$.
    Each point corresponds to a $\lambda$ value from $0$ to $1$ with step $0.001$ and the coloring indicates the relative frequency with which each $\lambda$ value is in $\Lambda'(\alpha)$ across random samplings of the calibration set for a fixed $\alpha'$.
    Each row corresponds to decision support systems with a different pre-trained classifiers as in Figure~\ref{fig:acc_vs_harm_mnl_noise95}. 
    %
    The results are averaged across $50$ random samplings of the test and calibration set.
    In both panels, $95\%$ confidence intervals have width always below $0.02$ and are represented using shaded areas.
    }
    \label{fig:acc_vs_harm_interv_noise95}
    \vspace{-3mm}
\end{figure}

\begin{figure}[t]
    \centering
    \subfloat[$\alpha = 0.23$]{
    \includegraphics[width=.45\linewidth, valign=t]{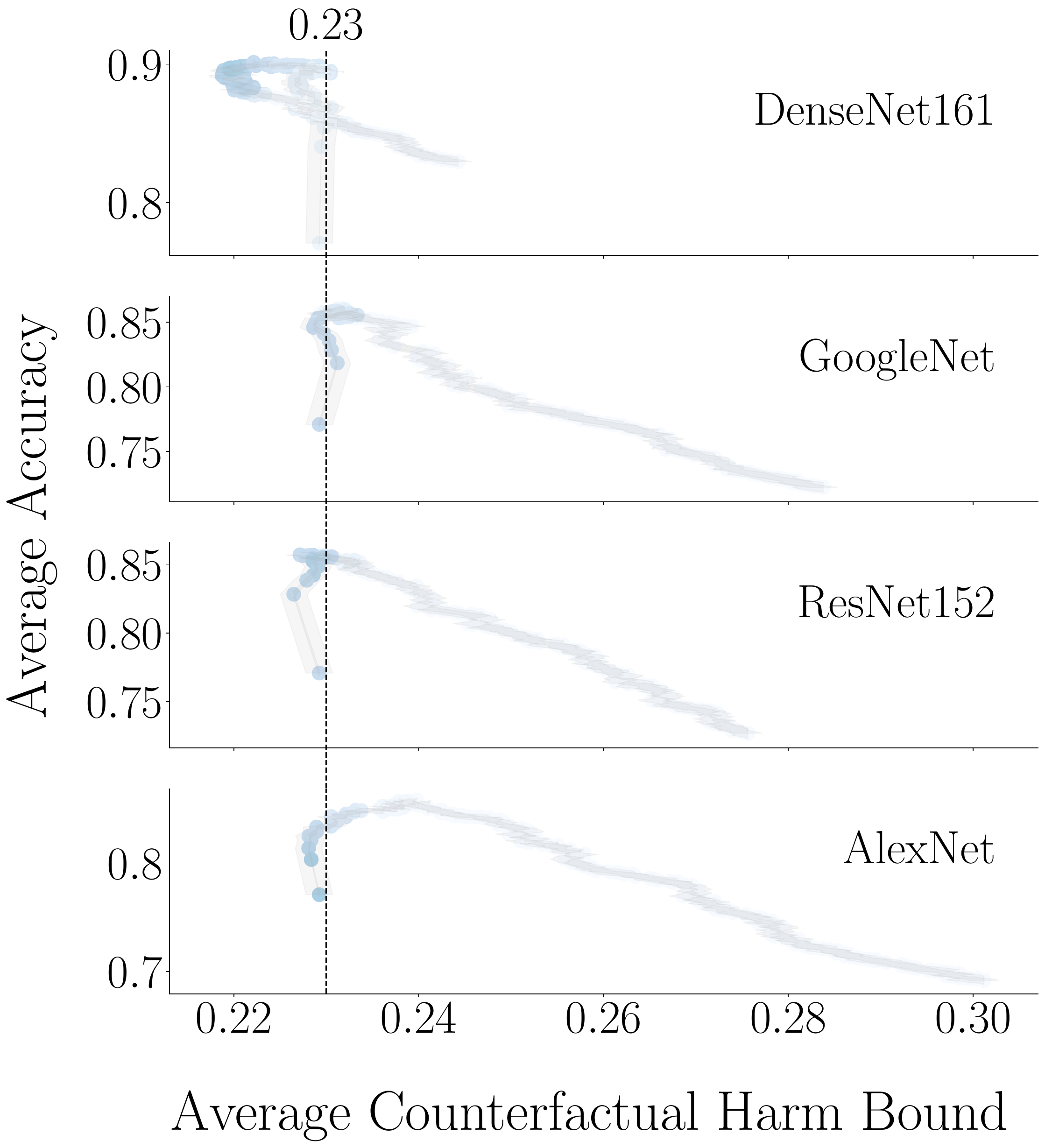}
    }
    \hspace{-0.5mm}
    \subfloat[$\alpha=0.24$]{
    \includegraphics[width=.45\linewidth, valign=t]{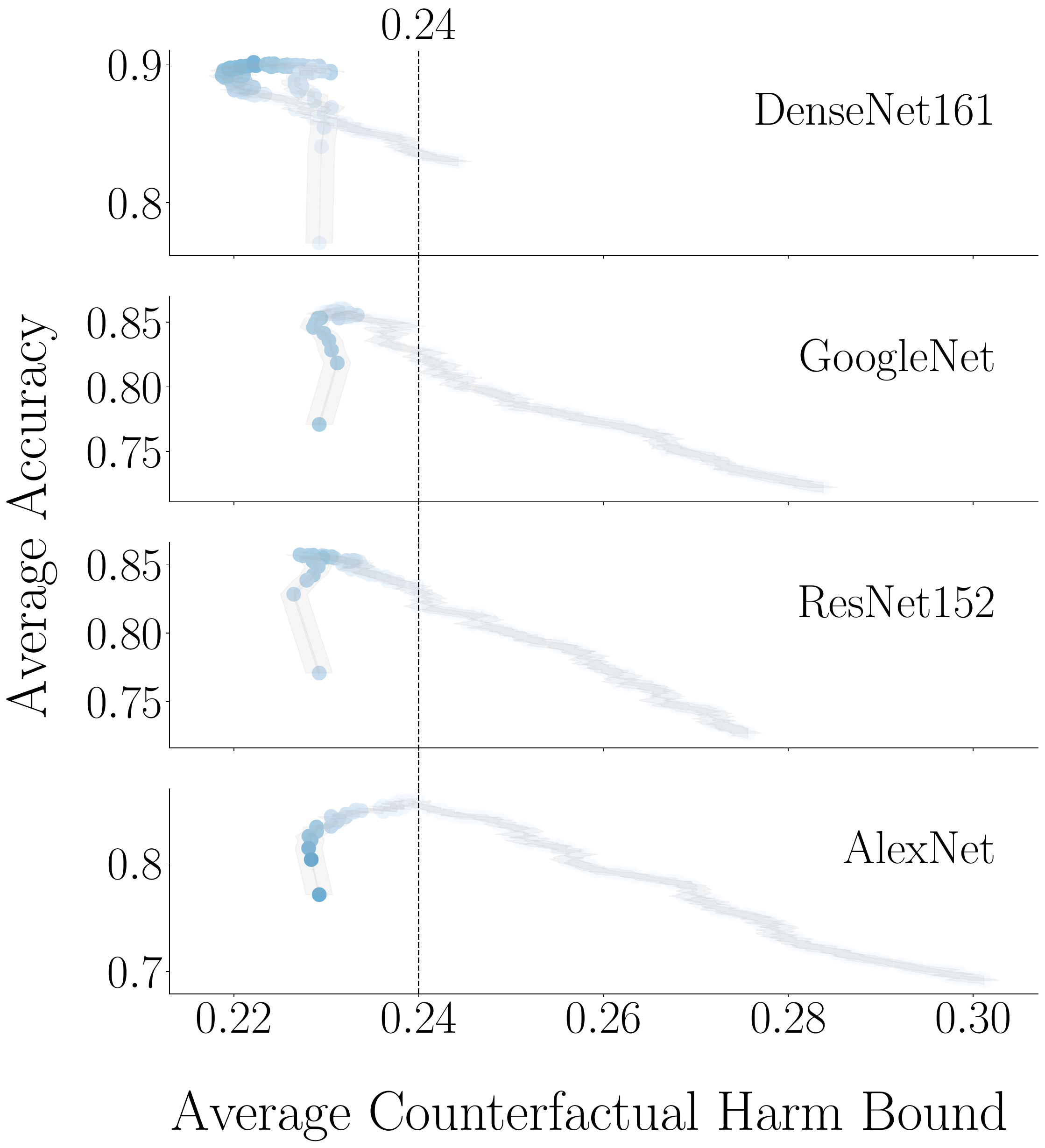}}
    \hspace{0.1mm}
    \includegraphics[scale=0.201, valign=t]{figures/colorbar_big.pdf}
     \caption{Average accuracy estimated by the mixture of MNLs against the average counterfactual harm upper bound for images with $\omega = 110$.
    Each point corresponds to a $\lambda$ value from $0$ to $1$ with step $0.001$ and the coloring indicates the relative frequency with which each $\lambda$ value is in $\Lambda'(\alpha)$ across random samplings of the calibration set for a fixed $\alpha'$.
    Each row corresponds to decision support systems with a different pre-trained classifiers as in Figure~\ref{fig:acc_vs_harm_mnl}. 
    For the classifier VGG19, the average counterfactual harm bound is $\sim0.23$ for each $\lambda$ value. 
    %
    The results are averaged across $50$ random samplings of the test and calibration set.
    In both panels, $95\%$ confidence intervals have width always below $0.02$ and are represented using shaded areas.
    }
    \label{fig:acc_vs_harm_interv_noise110}
    \vspace{-3mm}
\end{figure}

\end{document}